\definecolor{lightgray}{gray}{0.9}
\newcommand\BibTeX{{\rmfamily B\kern-.05em \textsc{i\kern-.025em b}\kern-.08em
T\kern-.1667em\lower.7ex\hbox{E}\kern-.125emX}}
\DeclareMathOperator*{\argmax}{arg\,max}
\DeclareMathOperator*{\argmin}{arg\,min}
\newtheorem{definition}{Definition}
\theoremstyle{lemma}
\begin{document}

\AddToShipoutPictureBG*{
  \AtPageUpperLeft{%
    \put(\LenToUnit{0.5\paperwidth},\LenToUnit{-1.0cm}){%
      \makebox[0pt][c]{\textcolor{BrickRed}{\textbf{Accepted for publication in the International Journal of Robotics Research (IJRR).}}}
    }%
  }%
}

\runninghead{Murillo-González and Liu}

\title{Situationally-Aware Dynamics Learning}

\author{}
\author{Alejandro Murillo-González\affilnum{1} {\normalsize and}
    Lantao Liu\affilnum{1}%
}%
\affiliation{%
    \affilnum{1} A. Murillo-González and L. Liu are with the Luddy School of Informatics, Computing, and Engineering, Indiana University, Bloomington, IN 47408, USA. Email: {\tt\small \{almuri, lantao\}@iu.edu}
}%
\corrauth{Lantao Liu}
\email{lantao@iu.edu}

\begin{abstract}
Autonomous robots operating in complex, unstructured environments face significant challenges due to latent, unobserved factors that obscure their understanding of both their internal state and the external world. Addressing this challenge would enable robots to develop a more profound grasp of their operational context. To tackle this, we propose a novel framework for online learning of hidden state representations, with which the robots can adapt in real-time to uncertain and dynamic conditions that would otherwise be ambiguous and result in suboptimal or erroneous behaviors. Our approach is formalized as a Generalized Hidden Parameter Markov Decision Process, which explicitly models the influence of unobserved parameters on both transition dynamics and reward structures. Our core innovation lies in learning online the joint distribution of state transitions, which serves as an expressive representation of latent ego- and environmental-factors.  This probabilistic approach supports the identification and adaptation to different operational situations, improving robustness and safety.  Through a multivariate extension of Bayesian Online Changepoint Detection, our method segments changes in the underlying data generating process governing the robot's dynamics. The robot's transition model is then informed with a symbolic representation of the current situation derived from the joint distribution of latest state transitions, enabling adaptive and context-aware decision-making. 
To demonstrate effectiveness, we validate our approach on an unmanned ground vehicle operating in diverse unstructured terrains, both in simulation and in real-world experiments. We also evaluate a quadrotor in simulation under randomly changing wind conditions. Both setups introduce unmodeled and unmeasured environmental factors that substantially affect robot motion.
Extensive experiments in both simulation and real world reveal significant improvements in data efficiency, policy performance, and the emergence of safer, adaptive navigation strategies. 
Website: \url{https://alejandromllo.github.io/research/situational-awareness/}.
\end{abstract}

\keywords{Hidden State Representation, Symbolic Reasoning, Representation Learning, Online Learning, MBRL}

\maketitle

\section{Introduction} \label{sec:introduction}

\begin{figure*}
    \centering
    \includegraphics[width=\linewidth]{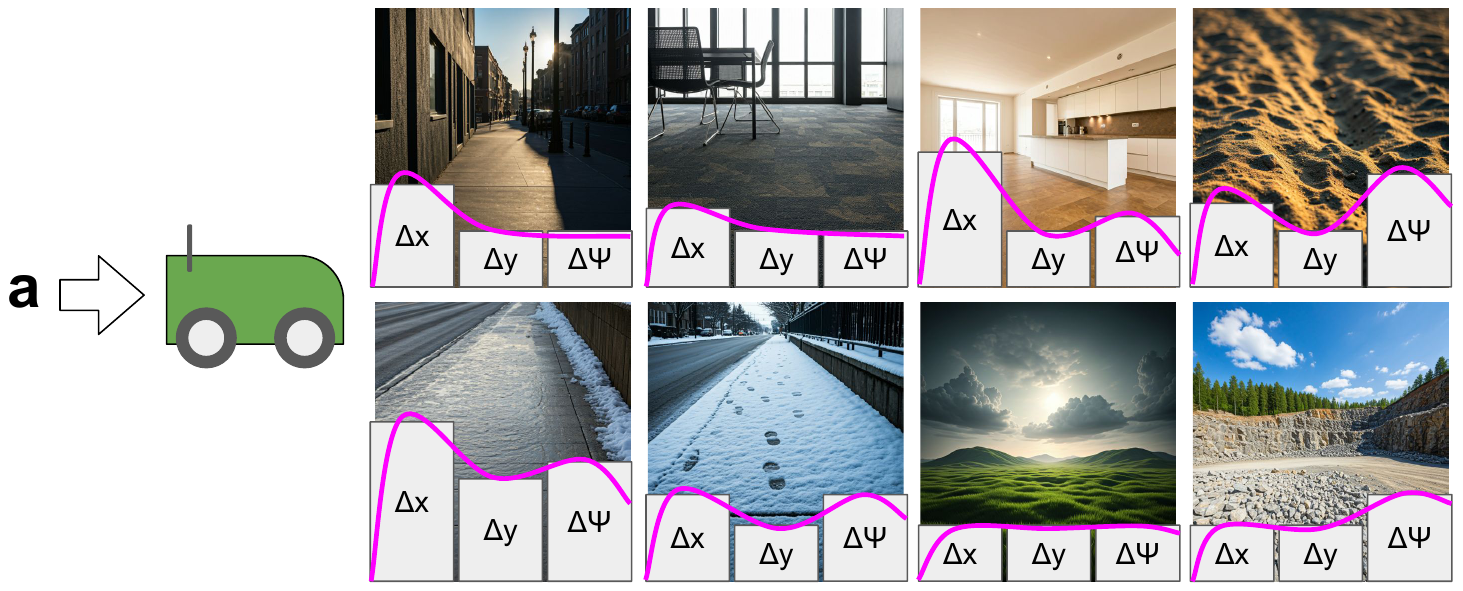} 
    \caption{Motivating Example. When the robot has access only to incomplete state information, the same action, $\mathbf{a}$, can yield significantly different outcomes. In this example, unmodeled terrain types—factors absent from the state and transition model—have an effect on the robot's dynamics, 
    resulting in distinct changes in position and heading ($\Delta x$, $\Delta y$, and $\Delta \psi$, respectively) for the same control action (commanded linear and angular velocity). Generalized Hidden Parameter Markov Decision Processes (GHP-MDPs) address this challenge by modeling the unknown latent factors responsible for these discrepancies and incorporating them into the transition model, thereby enhancing predictive accuracy. (Illustrative environment images generated with Gemini AI.)}
    \label{fig:output-discrepancy}
\end{figure*}    

For reliable robot deployment in unstructured and dynamic environments, it is crucial for robotic systems to adapt to unforeseen challenges and operate effectively under uncertainty. Consider, for example, an Unmanned Ground Vehicle (UGV) navigating steep and rugged terrain. Such a robot may become immobilized due to unexpected factors, ranging from unmodeled terrain properties to variations in its own internal dynamics. These factors are inherently difficult, if not impossible, to anticipate exhaustively \cite[]{du2011robot, kurniawati2012global, green2020planning, siva2021enhancing, yin2023reliable, xu2024boundary}. This challenge is further exacerbated by the impracticality of equipping robots with every conceivable sensor and analysis algorithm needed to fully capture the world's and robot’s full state in all scenarios. Consequently, robots should learn to operate effectively using only partial information derived from their observable state—the limited set of measurements and estimates available in real time. This necessitates the development of adaptive mechanisms capable of inferring and representing the hidden aspects of the robot and its environment to inform decision-making and enhance resilience in uncertain conditions \cite[]{mccallum1996hidden, ferguson2004pao, kroemer2014learning, kobayashi2022latent}.

Furthermore, when a robot relies on partial information to operate in a complex environment, the risk of ambiguous scenario representation becomes significant \cite[]{meyer2011using, pramanick2022talk, celemin2023knowledge}. Such ambiguity is undesirable because it increases uncertainty during decision-making, often producing suboptimal plans or, in extreme cases, unsafe behavior. This challenge is illustrated in Figure~\ref{fig:output-discrepancy}: a UGV with incomplete state information—lacking a (robust) scene interpretation module to detect the terrain it is traversing—executes the \textit{same} action from what appears to be an identical initial state, yet the observed outcomes differ dramatically; these discrepancies cannot be explained solely by aleatoric noise (process or observation noise). While we use terrain as an easy-to-visualize example, many other unobserved factors can similarly perturb dynamics (e.g., payload shifts, wheel or tire wear, actuator degradation, sensor bias, unseen vegetation or obstacles, local slope/contact conditions, etc.). These sources often introduce systematic, context-dependent shifts that complicate learning and control, underscoring the need for methods that can infer and resolve hidden-state ambiguities to ensure reliable robot operation.

To address this, we propose a framework for learning online a representation of the unmodeled and unobserved latent factors that cause such ambiguities. This task is challenging because it requires balancing representation capacity, data- and learning-efficiency, and minimizing inductive biases that might inadvertently exclude critical latent factors \cite[]{langdon2019uncovering, hundt2006representing}. Motivated by these constraints, we introduce an online hidden-state representation learning method that enables the robot to recover and leverage compact representations of latent world and robot factors in real time. By exploiting this learned representation the robot can dynamically adapt its behavior and reliably accomplish tasks even in uncertain and unstructured environments. Prior work has attempted to address this by learning distributions over causal structures—an approach explicitly intended to account for uncertainty in the dynamics model caused in part by unobserved latent factors \cite[]{murillogonzalez2025causaldistributions}; our method instead targets compact, real-time latent state representations for online adaptation.

Recent studies have introduced various methods that leverage learned auxiliary representations during dynamics or policy learning to enable robot adaptation \cite[]{lee2020learning, kumar2021rma, kumar2022adapting, qi2023hand, liang2024rapid}. However, these methods are data-intensive and often rely on multi-stage procedures that require privileged information, such as ground-truth data that is difficult to obtain or is only available in simulation. To overcome these limitations, our proposed method learns such representations in an online and unsupervised manner, thus eliminating the need for multi-stage approaches. We achieved this by formulating the problem within the framework of Generalized Hidden Parameter Markov Decision Processes (GHP-MDPs) \cite[]{perez2020generalized, doshi2016hidden}. 

An important component of our framework is to represent the hidden state by learning online the \textit{joint distribution of the state transitions}. We refer to this state transition distribution at a given moment as the \textit{situation} of the robot. This approach enables the robot to capture an expressive probabilistic representation of its current context, encompassing both its state and the underlying environmental conditions. By leveraging proprioceptive sensing, such as inertial measurements, to assess the robot's state, this representation becomes sufficiently general to effectively disambiguate the robot's state across various scenarios.  
Additionally, as illustrated in Figure \ref{fig:output-discrepancy}, the state evolution exhibits diverse probability density profiles under different conditions. This diversity facilitates the identification of latent factors that govern distinct scenarios.
In this way, the \textit{situation} can be understood as the Underlying Data Generating Process (UDGP)—a conceptual model that characterizes the true, often hidden, factors governing the system's dynamics at any given moment \cite[p. 109]{Goodfellow-et-al-2016}. The UDGP encapsulates the combined influence of both robot-specific and environmental variables that affect the robot’s behavior, enabling a more accurate understanding of its state. To obtain this situation distribution in real-time as the robot operates, we introduce a \textit{multivariate extension} of Bayesian Online Changepoint Detection (BOCD) \cite[]{adams2007bayesian}. This extension allows us to model the situation the robot is currently operating under and detect when a situation change occurs.

 Concretely, in this paper we make the following contributions:

 \begin{itemize}
\item 
 We introduce an efficient method that enables a robot to adapt online to the latent factors influencing its performance. We present a multivariate BOCD approach to estimate the distribution of the transition dynamics in real time. We show that these distributions serve as effective representations for describing the context in which the robot is currently operating. 
 \item 
 By mapping the estimated situation distributions to symbolic representations, we provide the robot’s dynamics model with access to the latent information necessary to adapt to the current context. Notably, we utilize the cumulant generating function of the joint transition distributions to symbolically represent the robot's current situation, facilitating informed decision-making and robust performance in dynamic and uncertain environments.

\item We validate the framework on a UGV navigating unstructured terrain (simulation and real world), where the dynamics model is learned via active data collection within a Model-Based Reinforcement Learning (MBRL) loop, and on a simulated quadrotor under changing winds, whose model is trained using a dataset of state transitions. This allows us to showcase the suitability of our method for both active and passive learning settings, as well as its applicability across different platform types. We observe that the situationally-aware models show substantially improved data efficiency and outperform RL baselines and physics-based controllers. Furthermore, the learned hidden-state representation facilitates the emergence of adaptive behaviors that increase the safety and effectiveness of the downstream planners.
 \end{itemize}

The remainder of this work is structured as follows. Section \ref{sec:relatedwork} reviews the relevant literature, focusing on policy and dynamics learning methods that incorporate learned auxiliary representations. Section \ref{sec:preliminaries} introduces GHP-MDPs and compares them to related MDP families. We also present the problem of changepoint detection. In Section~\ref{sec:method}, we detail our proposed approach, describing how to model transition dynamics online, represent latent factors, and integrate them into the transition model in a data-efficient way. In Section~\ref{sec:results_ugv} we demonstrate that our approach yields safer and more data-efficient adaptation for a UGV, validated in both simulation and real-world trials. Section~\ref{sec:results_quadx} presents complementary results for a quadrotor evaluated in simulation under randomly varying wind fields. Together, these evaluations show that our situationally-aware dynamics representation enables adaptive and more robust control across both ground and aerial platforms.
Finally, Sections \ref{sec:discussion} and \ref{sec:conclusion} discuss our findings and outline the conclusions drawn from our work, respectively.

\section{Related Work} \label{sec:relatedwork}

Our work focuses on efficiently learning a robot's dynamics model in conjunction with a hidden state representation to enable safe and effective adaptation in ambiguous scenarios. 
We review relevant literature on planning and learning with auxiliary representations—both latent and symbolic—along with research that specifically addresses the problem of robot motion control in unstructured environments.

\subsection{Neural-Symbolic Learning in Robotics} 

The field of neural-symbolic learning consists of two primary branches \cite[]{wang2024imperative}. We will first focus on methods rooted in discrete logical reasoning and guiding abstractions. The second branch involves Physics-Infused Neural Networks (PINNs), which handle continuous abstractions to represent physical characteristics, which we cover later on.

Analogous to our work, \citet{gopalan2020simultaneously} used changepoint detection for robotic applications. They use it to segment demonstration trajectories from which the robot learns skills. Then they learn symbols by identifying the termination conditions of the skills via clustering. Their objective was to map behaviors to natural language commands. However, for training they rely on demonstration trajectories that have been paired to natural language instructions, and find the symbols from the provided dataset. Conversely, in our case the agent discovers the symbols online and unsupervised, from the data generated by its proprioception, while it is exploring the environment.

\cite{konidaris2012robot} construct skill trees from demonstration trajectories. Each skill has a goal and an abstraction and they can later be composed. Similarly, they use a changepoint detection algorithm in charge of determining when the most relevant abstraction changes. However, they cannot recognize repeated skills and require the availability of an abstraction library while we directly model the robot's dynamics which can then be used with sampling-based planners. Other authors have also worked along this line \cite[]{kulic2009online, chiappa2010movement, niekum2012learning}. Moreover, learned symbolic abstractions for planning, skill learning and control have been explored in \cite[]{silver2022learning, achterhold2023learning, belta2007symbolic, kumar2023learning, rosen2023synthesizing}.

Research that is in proximity also includes \citet{ly2012learning} which propose a method for learning a system's hybrid dynamics directly from time-series data through multi-modal symbolic regression. Similarly, \cite{kim2022physics} focus on estimating key physical parameters, such as tire slippage and longitudinal force, in the context of an autonomous vehicle model, offering critical insights for improving vehicle performance and control. Closer to our work, \cite{zhao2024physord} employ physics-informed neural networks (PINNs) to predict motion in off-road driving scenarios, addressing the challenges posed by rough and unpredictable terrain while ensuring more accurate trajectory forecasting.

\subsection{Model-Based Reinforcement Learning} 

System Identification (SI) aims to discover and model the relationship between the inputs and outputs of a system \cite[]{aastrom1971system, ljung2010perspectives}. MBRL is a SI method based on balancing exploration and exploitation. It iteratively collects more informative data for the task at hand, thanks to planning with an improved model learned with the data collected in previous iterations. Thus, the model evolves, leading to better planning performance which results in exploring more informative sections of the state-action space \cite[]{sutton2018reinforcement}.

\cite{abbeel2006using} explore how to use an approximate model with few real-life trials. They evaluate a policy using a sample trial from the real world, and leverage the approximate model to estimate the derivative of the evaluation with respect to the policy parameters. \cite{deisenroth2011pilco} introduce a model-based policy search strategy for high-dimensional control tasks. They employ Gaussian Processes to learn the dynamics and incorporate the model's uncertainty during planning and policy evaluation. They highlight the data efficiency of this approach. \citet{chua2018deep} propose PETS, which uses ensembles of uncertainty-aware neural-networks together with sampling-based optimizers for planning. 
\citet{hansen2023td} presented TD-MPC2, which learns five components consisting of an \textit{(i)} observation encoder, \textit{(ii)} latent-space dynamics model, \textit{(iii)} transition reward predictor, \textit{(iv)} return or terminal value predictor, and \textit{(v)} a policy predicting return-maximizing actions; to predict returns from a sequence of actions, usually from high-dimensional observations. Other works have also learned latent dynamics models \cite[]{ha2018world, hafner2019dream}. MoDem-V2 \cite[]{lancaster2023modem} builds on top of TD-MPC to learn vision-based models in the real-world for robot manipulation tasks. They identify unsafe exploration and over-optimism as a significant failure point for applying MBRL in the real-world and develop three techniques for safer real-world learning: policy centering, agency transfer and actor-critic ensembles. 

\subsection{Policy and Dynamics Learning with Learned Auxiliary Representations} \label{sec:related-work-auxreps}

Learned auxiliary representations like symbols or latent variables have also been used to provide robots with valuable environmental or proprioceptive insights. These are then leveraged for improved performance while learning and deploying a policy or dynamics model.

\subsubsection{Model-Free Reinforcement Learning:} 

\cite{lee2020learning} use a temporal convolutional network that takes a window of past proprioceptive states to generate control commands for quadruped locomotion over challenging terrains. This model is trained to ``implicitly reason about contact and slippage events" from the proprioception. However, learning the previous model requires a teacher policy that has been previously obtained using privileged information of the terrain and the robot's contact with it. Furthermore, they also make use of an adaptive curriculum in charge of modulating the terrain's difficulty according to the current performance of the policy.    \cite{kumar2021rma} proposed Rapid Motor Adaptation (RMA), which is a two-stage process to train adaptive policies via Model-Free RL (MFRL). First, an ``environmental factors encoder" model is trained to map ``privileged information" only available in simulation to a latent vector. This latent vector is used as input together with the current state and action to a ``base-policy" trained via Proximal Policy Optimization \cite[PPO]{schulman2017proximal}. Then, during the second phase, an ``adaptation module" is trained to map a fixed-size window of past state-action pairs into a latent vector optimized to match the latent representations produced by the ``environmental factors encoder". This way, during deployment, the latent representation can be obtained without the need for privileged information. \cite{kumar2022adapting} add a third phase to the RMA framework to enable adaptive bipedal locomotion. The new stage involves using PPO to fine-tune the base policy but maintaining the adaptation module weights frozen, to ensure that the policy supports imperfect estimation of the latent representation. Follow up works to RMA include \cite{qi2023hand} which also uses proprioception history and privileged information to train a model for in-hand object rotation. Also, \cite{liang2024rapid} learn the latent representation from depth perception for manipulation tasks.

In contrast to the previous works, our method does not need privileged information, as our representations are obtained directly from proprioception. This also means that we do not need multi-stage training, as we can directly detect the symbols during training of the dynamics model. Furthermore, given the structured nature of our learned representation, the symbols can be similarly detected directly in the real-world, so no further fine-tuning is required.

\subsubsection{Model-Based Reinforcement Learning:} 

\cite{havens2019learning} recognize that model-based methods are often restricted to tasks with limited dimensions or rely on certain assumptions about the system's dynamics. Furthermore, they argue that high-dimensional models are challenging for planning due to increased compounding error, as also argued in \cite[]{stachowicz2024racer, talvitie2014model, talvitie2017self}. Thus, they compress the state space to avoid learning unimportant features that could only contribute to compounding error during planning. Specifically, they propose to learn a state space encoder, a dynamics model in the latent space and action space, as well as a reward predictor. They place special emphasis in optimizing for reward prediction performance to encourage better compression of the state space. Closer to our work, \cite{sharma2019dynamics} combine MBRL with primitives learned via MFRL to make model-based planning easier. In particular, they simultaneously identify ``predictable behaviors" and the corresponding dynamics. Their key idea is to learn skills that are optimized to be easy to identify and for which a simpler accompanying dynamics model can be obtained. Finally, during deployment, the skill and corresponding dynamics model become a set of primitives that can be composed to solve the commanded task via model-based planning. In \cite{sharma2020emergent}, it is shown that this framework can be leveraged for RL in the real world. \cite{lee2020context} introduces Context-aware Dynamics Models (CaDM) to generalize across different dynamics. They learn a context encoder, forward and backward dynamics models. Both dynamics models conditioned by the latent vector in charge of encoding dynamics-specific information.

By comparison, our method is also under the model-based umbrella but we do not learn multiple dynamics models, that is, we have a single model that works under every context instead of requiring a model for each context. Additionally, the structured nature of our symbolic representation enables us to explicitly capture the underlying data-generating process of the dynamics at a given point in time, instead of a harder-to-interpret latent unstructured representation.

\subsubsection{Situational Awareness:}
Other methods that do not rely on RL but leverage representations of the system's situation include pervasive computing, where situation identification is used to abstract noisy sensor data into high-level concepts \cite[]{ye2012situation}. For them, each situation is defined according to how the data is semantically interpreted by the users, which could require preliminary domain knowledge to obtain them. On the other hand, our work automatically builds situations by modelling the underlying data generation process that best explains the present sensor measurements. \citet{langari2005intelligent} use situation identification to determine a policy for intelligent energy management in parallel hybrid vehicles. They do it by abstracting environment, driving style and car operating mode information using short term statistical features. However, they learn to classify the situations in a supervised manner and selecting which statistics to use requires task-specific expert-knowledge, which we avoid by modeling the UDGP of the transition dynamics. Other robot applications involve human-robot-cooperation to determine whether the robot maintains SA based on surprise \cite[]{dahn2018situation}. 

\subsection{Generalized Hidden Parameter Markov Decision Process (GHP-MDP)}

\cite{perez2020generalized} introduce GHP-MDPs and develop a learning-based method where agents can generalize, in simulation, to the same family of tasks using transition models whose input is augmented with estimated latent factors. They model the latent factors with a diagonal Gaussian while minimizing the Kullback–Leibler divergence between the variational and prior distributions. The main difference to our approach, is that we detect the latent representation online, which allows us to directly augment the (state, action, next-state)-tuples for fine-tuning the dynamics with the symbol assigned to each representation of the transition distribution. Furthermore, we use a full-rank covariance matrix, instead of a less expressive diagonal covariance.

Related work also includes Hidden Parameter MDPs (HiP-MDPs) which assume the agent knows what it does not know. Specifically, it does not know the value of some critical environment variables and focuses on estimating these \cite[]{doshi2016hidden}. \cite{zhang2020learning} learn state abstractions to enable multi-task RL. However, they assume the latent parameter is fixed for the duration of the episode, while in our case we are interested in detecting and accommodating to changes in the latent parameters during the episode. \cite{killian2017robust} extend this framework for robust and efficient transfer learning to subtle variations among tasks using Bayesian Neural Networks. \cite{costen2023planning} develop a method to express the transition probabilities in closed-form as a polynomial of the latent parameters, as well as maintaining a closed-form belief distribution of the latent parameters. They present promising results in two simulation tasks. In our case, the transition model's input is augmented with the symbol of the detected latent representation, since we do not assume prior-knowledge about the variables that the agent will require to obtain a better model of the dynamics at deployment time.

\subsection{Unstructured Terrain Navigation}
We validate our approach in the real world through the challenging task of unstructured terrain navigation, where unmodeled and unmeasured terrain characteristics can significantly impact the robot's motion. 
Unstructured terrain navigation has been tackled mostly from two perspectives: adaptation and path planning. Works addressing this from the adaptive perspective typically devise methods that allow the robot to recover from unexpected or hard scenarios. For example, \citet{xu2024kernel} proposes a method to bypass the precise modeling of the state transition function of an MDP with continuous state space and shows that it can be applied for off-road navigation and recovering from unexpected phenomena (e.g., wheel slippage). \citet{wang2021rough} learns a probabilistic dynamics model trained to consider the propagating effects of uncertainty and uses it along a tracking controller and constrained optimization to find trajectories with low divergence. Terrain-aware methods have been recently developed to generate consistent navigation behaviors by learning a control offset from demonstrations \cite[]{siva2019robot, siva2021enhancing}. \cite{pokhrel2024cahsor} develop a multi-stage approach where terrain representations for offroad navigation are first learned in a self-supervised manner from visual, inertial and speed observation. Then in a subsequent stage, the terrain representation weights are frozen and the forward kinodinamic model is learned from a terrain interaction dataset. In contrast, our method learns the situation representations online and unsupervised.
On the other hand, the methods concerned with global planning typically use the robot's sensors to analyze the terrain and try to find the trajectory that seems safer and easier to navigate \cite[]{yoo2024traversability, jardali2024autonomous, leininger2024gaussian}.  For example, \cite{castro2022does} use self-supervised learning to combine exteroceptive and proprioceptive feedback to predict traversability costmaps.
Other methods have done so by focusing on risk-awareness \cite[]{sharma2023ramp, cai2022risk, cai2023probabilistic}. Learning-based approaches have considered ground-robot interaction and surface data \cite[]{sathyamoorthy2022terrapn}. Our method can be framed at the intersection of both perspectives. In particular, we obtain dynamics models with greater robustness to terrain challenges and use those to make local motion plans that help the robot traverse safely under partial state information.

\section{Preliminaries} \label{sec:preliminaries}

\subsection{Generalized Hidden Parameter Markov Decision Process (GHP-MDP)} \label{sec:prelim-ghpmdp}

Markov Decision Processes (MDPs) provide a framework for modeling decision-making with uncertainty over the action outcomes due to an agent's interaction with its environment \cite[]{sutton2018reinforcement}. Hidden Parameter MDPs (HiP-MDP) extend MDPs to scenarios where predicting the future also requires a set of unobservable parameters $\Theta$ that correspond to the sufficient statistics to adapt to a new situation \cite[]{doshi2016hidden}. In HiP-MDPs, $\Theta$ explicitly represent a critical variable of the environment; for example, physical properties like the terrain friction. This makes it necessary to know a-priori which missing variables need to be estimated by $\Theta$, which is a strong assumption we cannot fulfill when the robot is operating in unstructured environments as we cannot anticipate every challenge it will face.

\cite{perez2020generalized} build upon HiP-MDPs and present GHP-MDPs to account for unobserved, latent parameters which influence the environment's dynamics and reward function. These latent factors are not directly observable by the agent, but the agent must infer them through interaction with the environment to optimize its policy.

\begin{definition} \label{def:ghp-mdp}
A Generalized Hidden Parameter MDP (GHP-MDP) is defined by a tuple $\mathcal{M}_{\text{GHP}} = \left( S, A, \Theta, T, R, \gamma \right)$, where \( S \) is the set of \textit{observable states} and \( A \) is the set of \textit{actions} available to the agent. \( \Theta \) is the set of \textit{latent variables} that are not observable by the agent but influence the system’s dynamics and rewards. \( T: S \times \Theta \times A \times S \to [0, 1] \) is the \textit{transition function} which gives the probability of transitioning from state \( s \in S \) to state \( s' \in S \) under action \( a \in A \) and hidden parameter \( \theta \). \( R: S \times \Theta \times A \to \mathbb{R} \) is the \textit{reward function}. And the discount factor \( \gamma \in [0, 1] \) modulates the agent's desire for immediate and future rewards.
\end{definition}

As the agent in a GHP-MDP does not directly observe \( \theta \), it must maintain a \textit{belief} \( b(\theta) \) over the possible values of the latent variables. In this work, we present a novel, efficient and unsupervised approach for representing $b(\theta)$ online as the distribution of $(s, a, s')$-tuples being currently observed. Each distribution is then mapped to a symbol that will be used by $T$ to adapt to the current situation of the robot.

\subsubsection{Comparison to other MDP Models:}

Throughout the years multiple extensions of the MDP framework have been proposed. In the following, we highlight key differences to some of them to justify selecting a GHP-MDP to formulate the decision making problem in unstructured environments with unobservable and changing state variables:

\textit{MDP \cite[]{bellman1958dynamic}:} The state \( s \) is fully observable, and the environment’s dynamics are completely determined by the state and action, i.e., \( T(s, a, s') = P(s' | s, a) \).

\textit{POMDP \cite[]{aastrom1965optimal, kaelbling1998planning}:} The agent maintains a belief over the partially-observable state $s$. It must infer the state using the observations \( o \). Conversely, in a GHP-MDP, we do not necessarily have uncertainty over the observable part of the state \( s \), and the uncertainty arises from the unknown latent factors \( \theta \) that are affecting the system dynamics. 

\textit{Hidden Model MDP, Contextual MDP (cMDP) or Latent-Variable MDP (LMDP) \cite[]{chades2012momdps, hallak2015contextual, kwon2021rl, steimle2021multi}:} The model consists of a set of transition and reward functions from the same state and action space. The number of transition and reward functions is given by the number of contexts the agent could be in and the value function is the expectation over these, weighted according to a set of mixing weights. On the other hand, GHP-MDPs have a single transition and reward function whose input is augmented by the latent factors representation.

\subsection{Model-Based Reinforcement Learning} \label{sec:appendix-mbrl}

An MBRL agent interacts with the environment by learning its model and using it to make decisions. A key difference with other system identification techniques is the exploratory nature of MBRL. Specifically, the agent acts on the environment while balancing exploration and exploitation \cite[]{sutton2018reinforcement}. Every episode it visits more informative regions as it avoids what it expects to be known or irrelevant configurations of the state-action space, which translates to faster model learning via informed data collection. This approach contrasts with MFRL, where the agent directly learns a policy solely based on interactions without explicit knowledge of the environment’s dynamics.

Formally, an MBRL agent learns a model of the environment, defined by a transition function \( \hat{T}(s' | s, a) \) and a reward function \( \hat{R}(s, a) \), which approximate the true environment dynamics \( T(s' | s, a) \) and reward \( R(s, a) \), respectively. To learn $\hat{T}$, 
the transition information $\mathcal{R} = \{(s_i, a_i, s_{i+1})\}_{i=1}^N$ obtained through interaction and/or previously collected data is used for fine-tunning at the beginning of each episode. Specifically, the objective is to minimize the error between the predicted dynamics and the true dynamics of the environment:
\begin{equation}
\min_{\hat{T}, \hat{R}} \mathbb{E}_{(s, a, s') \sim \mathcal{D}} \left[ \left( s' - \hat{T}(s, a) \right)^2 \right].
\end{equation}

Then, during the episode the agent can simulate trajectories using the learned models \( (\hat{T}, \hat{R}) \), and optimize the policy \( \pi \) using any standard planning algorithm, such as value iteration or model predictive control (MPC). For example, to get the optimal action sequence from state $s_t$: 
\begin{equation}
    a^\star_{t:t+H} = \argmax_{a_{t:t+H}} \sum_{\substack{h=0}}^{\substack{H-1}} \mathbb{E}_{\hat{s}_{h}} [\gamma^h \hat{R}(\hat{s}_{h}, a_{h}) | \hat{s}_0 = s_t],
\end{equation}
where $\hat{s}_{h} = \hat{T}(\hat{s}_{h-1}, a_{h-1})$. See Algorithm \ref{alg:mbrl} for an overview of MBRL. Note that Lines 3 \& 8 are the main difference with Model Predictive Control (MPC), since these are responsible of iteratively improving the model as more experience is accumulated. In particular, the process in Line 3 is skipped during deployment once a good model has been obtained and the model can be used by a planner to synthesize a policy for the task.

\begin{algorithm}
\caption{MBRL Framework Overview}\label{alg:mbrl}
\begin{algorithmic}[1]
\State Initialize the replay buffer $\mathcal{R}$ with a random controller or previously obtained data.
\For{\texttt{Episode} k = 1 \texttt{to} K}
    \State Train the \textit{transition model} $\hat{T}$ using $\mathcal{R}$. 
    \For{\texttt{Time} t = 1 \texttt{to} T}  
        \State Find ${a}^\star_{t:t+H}$ using a planner or a learned policy.
        \State Execute the first action ${a}^\star_t$ from ${a}^\star_{t:t+H}$.
        \State Record the new state ${s}_{t+1}$.
        \State Update the dataset: $\mathcal{R} \leftarrow \mathcal{R} \cup \{{s}_t, {a}^\star_t, {s}_{t+1}\}$.
    \EndFor
\EndFor
\end{algorithmic}
\end{algorithm}

\subsection{Bayesian Online Changepoint Detection (BOCD)} \label{sec:prelim-bocd}

The problem of changepoint detection is concerned with determining the point where the observed data distribution changes in an ordered set of measurements, such as in time-series \cite[]{fearnhead2019changepoint}. BOCD was introduced by \citet{adams2007bayesian} to tackle this problem by framing it as an estimation of the posterior distribution of the current ``run length" $r_t$, meaning how likely it is that the measurement at time-step $t$ belongs to the same data generating process that started $r_t$ timesteps ago; while also obtaining the parameters $\boldsymbol{\eta}$ defining such process.

Intuitively, BOCD continuously monitors a data stream to identify points where the underlying data distribution changes. It starts with a prior belief about where changes might occur and updates this belief by evaluating how well the current data fits different scenarios of changepoints (segment/run lengths). At each time step, the algorithm uses the new data to adjust the likelihood of potential changepoints, incorporating prior knowledge and observed evidence. The method recursively updates this distribution to assess the existence of changes in the UDGP of the current segment.

Formally, the method assumes that the sequence of observations $\boldsymbol{x}_1, \boldsymbol{x}_2, \dots, \boldsymbol{x}_T$ can be segmented into non-overlapping intervals, where a changepoint corresponds to the point $\boldsymbol{x}_i$ marking the transition between two adjacent intervals. The set $\boldsymbol{x}_t^{(r)}$ contains the points estimated to belong to run $r_t$. Furthermore, the data within each interval $\rho$ is \textit{i.i.d.} from $P(\boldsymbol{x}_t | \boldsymbol{\eta}_\rho)$, and the parameters $\boldsymbol{\eta}_\rho, \rho = 1, 2, \dots$ are also \textit{i.i.d.} Note that the last assumption is about the \textit{parameters} of the underlying distributions describing the data generating process of each segment. Finally, we should be able to compute the predictive distribution $P(\boldsymbol{x}_{t+1} | r_t, \boldsymbol{x}_t^{(r)})$ and define a conditional prior on the changepoint $P(r_t | r_{t-1})$.

To initialize the algorithm, BOCD considers two scenarios: \textit{(a)} the changepoint took place before the initial observation, then $P(r_0 = 0) = 1$, and \textit{(b)} we are seeing a recent subset of the data, then $P(r_0 = \tau) = \frac{1}{Z} \sum_{t = \tau + 1}^{\infty} P_{\mathtt{gap}}(g=t)$, where $Z$ is a normalizing constant and $P_{\mathtt{gap}}(g)$ is the multinomial prior across the changepoints' domain. 

The conditional changepoint prior $P(r_t | r_{t-1})$ is the key for the algorithm's performance, since it is defined to have non-zero mass in two of three cases: 
\begin{equation}
    P(r_t | r_{t-1}) = \begin{cases}
         H(r_{t-1} + 1) &\text{if } r_t = 0, \\
         1 - H(r_{t-1} + 1) &\text{if } r_t = r_{t-1} +1, \\
         0 &\text{otherwise}.
    \end{cases}  
\end{equation}
where $H(\tau)$ is the \textit{hazard} function \cite[]{forbes2011statistical}. Subsequently, finding the distribution of the run length $P(r_t | \boldsymbol{x}_{1:t})$, involves recursive computation over the run length and the observations: 
\begin{align} \label{eq:joint-runlen-obs}
    P(r_t, &\boldsymbol{x}_{1:t}) = \sum_{\substack{r_{t-1}}} P(r_t, r_{t-1}, \boldsymbol{x}_{1:t}) \\ 
    &= \sum_{\substack{r_{t-1}}} P(r_t, \boldsymbol{x}_t | r_{t-1}, \boldsymbol{x}_{1:t-1}) P(r_{t-1}, \boldsymbol{x}_{1:t-1}) \nonumber \\ &= \sum_{\substack{r_{t-1}}} P(r_t | r_{t-1}) P(\boldsymbol{x}_t | r_{t-1}, \boldsymbol{x}_t^{(r)}) P(r_{t-1}, \boldsymbol{x}_{1:t-1}). \nonumber
\end{align} where the predictive distribution $P(\boldsymbol{x}_t | r_{t-1}, \boldsymbol{x}_{1:t})$ depends only on the data of the current run $\boldsymbol{x}_t^{(r)}$.

$P(r_t | \boldsymbol{x}_{1:t})$ is updated with each new observation $\boldsymbol{x}_t$, by first obtaining its predictive probability 
$\pi_t^{(r)} = P(\boldsymbol{x}_t | \boldsymbol{\eta}_t^{(r)})$, and then calculating the \textit{growth probability} (probability that the run has not ended and we are still observing data from the same UDGP): 
\begin{align} \label{eq:growth-prob-def}
    P(r_t = r_{t-1}+1, &\boldsymbol{x}_{1:t}) =\\ &P(r_{t-1}, \boldsymbol{x}_{1:t-1}) \pi_t^{(r)} (1 - H(r_{t-1})). \nonumber
\end{align} 
and \textit{changepoint probability} (probability that the UDGP has changed): 
\begin{align} \label{eq:changepoint-prob-def}
    P(r_t=0, \boldsymbol{x}_{1:t}) = \sum_{\substack{r_{t-1}}} P(r_{t-1}, \boldsymbol{x}_{1:t-1}) \pi_t^{(r)} H(r_{t-1}).
\end{align}
From which we can obtain,
\begin{equation} \label{eq:pos-run-length}
    P(r_t | \boldsymbol{x}_{1:t}) = \frac{P(r_t, \boldsymbol{x}_{1:t})}{P(\boldsymbol{x}_{1:t})} = \frac{P(r_t, \boldsymbol{x}_{1:t})}{\sum_{\substack{r_t}} P(r_t, \boldsymbol{x}_{1:t})}.
\end{equation}
Finally, after obtaining $P(r_t | \boldsymbol{x}_{1:t})$ using the latest observation $\boldsymbol{x}_t$, we update the sufficient statistics of the data generating process' model. 

\section{Method} \label{sec:method}

We propose a method that learns a situationally-aware dynamics model by identifying latent, unobserved factors from online interaction data. We organize our presentation in a top-down manner. First, we describe the overall pipeline: a model-based reinforcement learning agent that leverages a learned representation of the current `\textit{situation}' to improve planning and control. This high-level overview explains how the system works as a whole. Second, we detail our core contribution: an online and unsupervised method for discovering these situations by detecting changes in the robot's transition dynamics. This section provides the theoretical underpinnings of our approach.

\begin{figure*}[t!]
    \centering
    \includegraphics[width=0.9\linewidth]{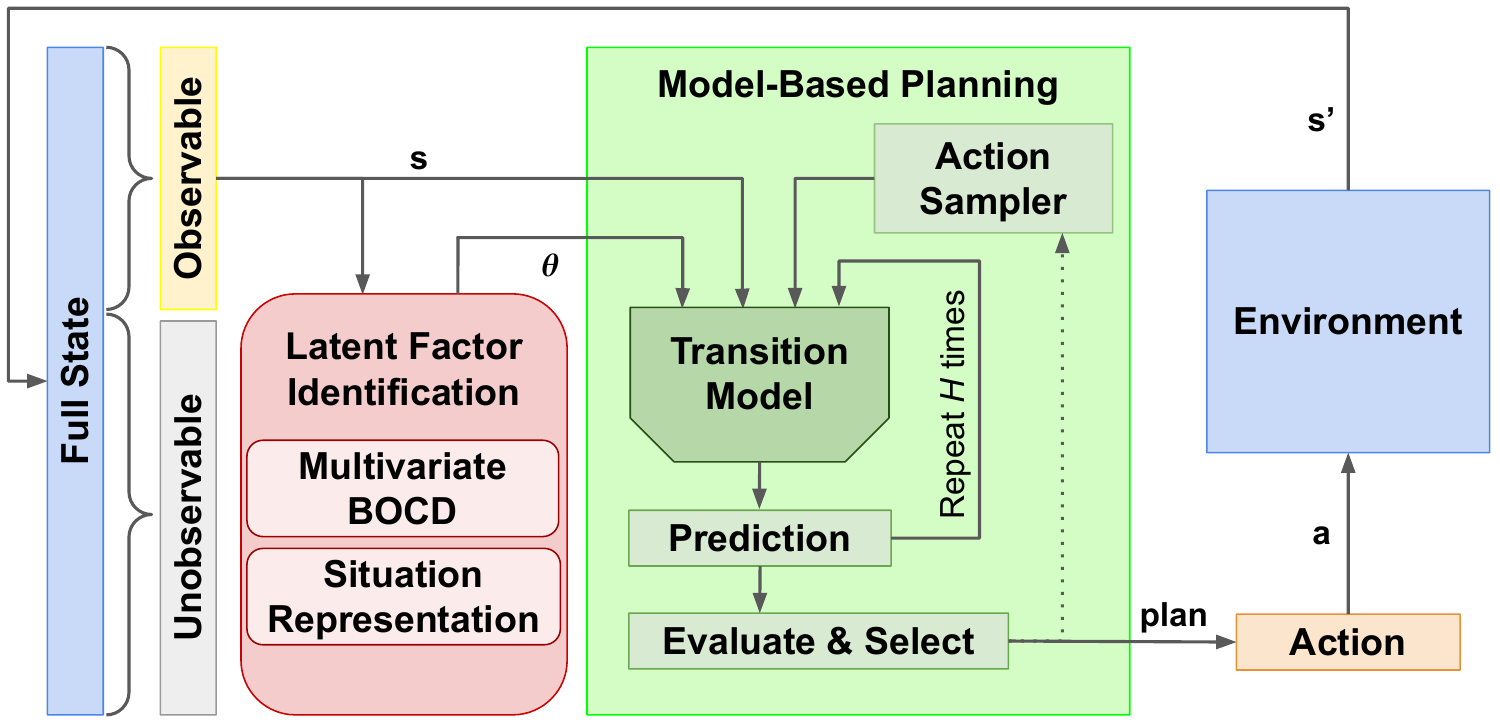}
    \caption{System overview. Our situation identification method leverages the observable state to model online the local transition distribution, in order to build a representation  of how the latent factors currently affect the robot's dynamics. In other words, the learned representation encompasses the hidden or unobservable component of the state. Using the recovered information, the model-based planner can consider the particularities of the current situation via an informed transition model.}
    \label{fig:method-flow-diagram}
\end{figure*}

\subsection{Situationally-Aware Dynamics}

For effective planning and control in changing or unstructured environments, an agent must account for hidden state variables that influence its dynamics. We address this by augmenting the standard MDP transition model with a parameter $\theta \in \Theta$ that represents these unmodeled world characteristics. This results in a GHP-MDP with transition function:
\begin{equation} \label{eq:transition_func_ghpmdp}
    T(s, \theta, a, s') = P(S_{t+1} = s' \mid S_t = s, \Theta_t = \theta, A_t = a),
\end{equation}
where $S_{t}, \Theta_t, A_t \text{ and } S_{t+1}$ are random variables representing the state of the world, the current representation of the latent factors, and the actions taken by the agent. The parameter $\theta$ encapsulates the current ``situation" of the robot (e.g., traversing slippery ice versus firm ground in a UGV navigation scenario), allowing the dynamics model to adapt its predictions accordingly. An overview of our approach is shown in Figure \ref{fig:method-flow-diagram}.

In practical terms, we model the right-hand side of Eq. \eqref{eq:transition_func_ghpmdp} as an ensemble $\mathcal{M}_m : S \times \Theta \times A \rightarrow S$ of $m$ probabilistic (Gaussian) neural networks. The ensemble's input is augmented with a representation of the current situation, $\theta$, in addition to the states and actions. Together, these inputs enable the ensemble to learn the effects of the hidden state on the robot's dynamics. This architecture choice estimates the predictive distribution over the next state and simultaneously considers the epistemic and aleatoric uncertainty \cite[]{chua2018deep}.

\subsubsection{Training and Testing Setup:} \label{sec:training_testing}

The system operates in two distinct phases: training and testing. The \textbf{training phase} is designed to build the situation library $\boldsymbol{\Psi}$ and learn the dynamics model $\mathcal{M}_m$, and can be performed in either an active or passive manner.

In the \textit{active setting}, the agent interacts directly with the environment. It learns by alternating exploration and exploitation, following the MBRL framework described in Algorithm \ref{alg:mbrl}. During this process, transition data is collected and our situation identification algorithm (detailed in Sec. \ref{sec:online_unsupervised}) runs in parallel to discover new situations and populate the library $\boldsymbol{\Psi}$. The collected transitions are stored in a replay buffer $\mathcal{R} = \mathcal{R} \cup \{(s_i, \theta_i, a_i, s_{i+1})\}_{i=1}^{N_e}$, where each tuple also includes the symbol $\theta_i$ representing the identified situation $\psi_i \in \boldsymbol{\Psi}$ and $N_e$ are the number of steps in the episode.

On the other hand, in the \textit{passive setting}, the agent learns from a pre-existing, static dataset of transitions $\mathcal{D} = \{(s_i, a_i, s_{i+1})\}_{i=1}^{N_d}$. Here, no exploration occurs. Instead, to construct the situation-annotated replay buffer, we process the dataset $\mathcal{D}$ sequentially: as each transition is fed to our system, our online situation identification algorithm discovers the underlying situations present in the data and builds the library $\boldsymbol{\Psi}$. This allows us to annotate each tuple with its corresponding situation symbol $\theta_i$, creating the final replay buffer $\mathcal{R} = \{(s_i, \theta_i, a_i, s_{i+1})\}_{i=1}^{N_d}$.

It is worth noting that a hybrid configuration is also possible: the agent can start from a pre-collected dataset (passive learning) and then transition to active data collection for further dynamics model refinement.

Regardless of the training paradigm, the resulting replay buffer $\mathcal{R}$ that will be used to learn the model should contain transitions annotated with situation symbols. Furthermore, given that the period of time the robot experiences in each situation is different, the replay buffer $\mathcal{R}$ will naturally have an unbalanced sample of situations. Thus, to ensure that the dynamics $\mathcal{M}_m$ are learned appropriately for all situations $\psi \in \boldsymbol{\Psi}$, we introduce a loss weighted by the cardinality of each situation. Specifically, we minimize the weighted Negative Log-Likelihood (NLL):
\begin{equation} \label{eq:loss-function}
    \mathcal{L}(\theta) = - \sum_{i=1}^{|\mathcal{R}|} w(\theta_i) \left( \log \hat{\sigma}_i^2 \| \hat{\mu}_i - y_i \|_2^2 + \log \hat{\sigma}_i^2 \right),
\end{equation} 
where $\hat{\mu}_i$ and $\hat{\sigma}_i^2$ are the predicted mean and variance, and $y_i$ is the measured outcome, and each prediction is weighted as: 
\begin{equation}
    w(\theta_i) = \frac{|\mathcal{R}|}{\sum_{j=1}^{|\mathcal{R}|} \mathbb{I}(\theta_j = \theta_i)},
\end{equation}
where \(\mathbb{I}(\theta_j = \theta_i)\) is an indicator function equal to 1 if \(\theta_j = \theta_i\) and 0 otherwise. This frequency-weighted loss enhances the model’s focus on rare values of \(\theta_i\), encouraging balanced learning across all samples.

During the \textbf{testing phase}, the situation library $\boldsymbol{\Psi}$ is held fixed. The agent no longer creates new situations. Instead, it continuously identifies its current situation by comparing its local dynamics to the learned models in $\boldsymbol{\Psi}$ and selecting the best match (detailed in Sec. \ref{sec:sit-modeling-and-id}). This identified situation informs the dynamics model $\mathcal{M}_m$, allowing for adaptive control in response to changing conditions.

\subsubsection{Planning and Control:}
We use our situationally-aware dynamics model $\mathcal{M}_m$ together with the Model Predictive Path Integral (MPPI) method \citep{gandhi2021robust}. The planning process involves generating a set of candidate action sequences over a time horizon $H$ and selecting the sequence that maximizes the expected return. The expected return is computed based on the predicted future states while assuming that the situation symbol $\theta$ remains constant throughout the rollouts. To select an action, MPPI optimizes the function:
\begin{equation}
    a^\star_{t:t+H} = \arg\min_{a_{t:t+H}} \sum_{i=0}^{H-1} \left( \| \hat{s}_{t+i} - s_{\text{goal}} \|_2 + \rho \mathcal{B}(\hat{s}_{t+i}) \right),
\end{equation}
where $\hat{s}_{t+i}$ denotes the predicted state at time $t+i$, $s_{\text{goal}}$ is the goal state, $\rho > 0$ is a weight hyperparameter and $\mathcal{B}(\hat{s}_{t+i})$ is a log-barrier function for safe navigation around obstacles ($s_{obs}$):
\begin{equation}
    \mathcal{B}(\hat{s}_{t+i}) = - \log  \|  \hat{s}_{t+i} - s_{obs} \|_2.
\end{equation} 
Every timestep, the controller applies the first action in $a^\star_{t:t+H}$ and then re-evaluates and updates the action sequence, ensuring that the agent adapts to changes in the environment. 

The effectiveness of the situationally-aware dynamics model described above hinges on its ability to accurately identify the current situation, $\psi$, and represent it as a symbol $\theta$. The following section presents our main theoretical contribution: a novel method for discovering and modeling these situations online and in an unsupervised fashion.

\subsection{Online and Unsupervised Latent Factor Representation Learning} \label{sec:online_unsupervised}

\begin{figure}
    \centering
    \includegraphics[width=\linewidth]{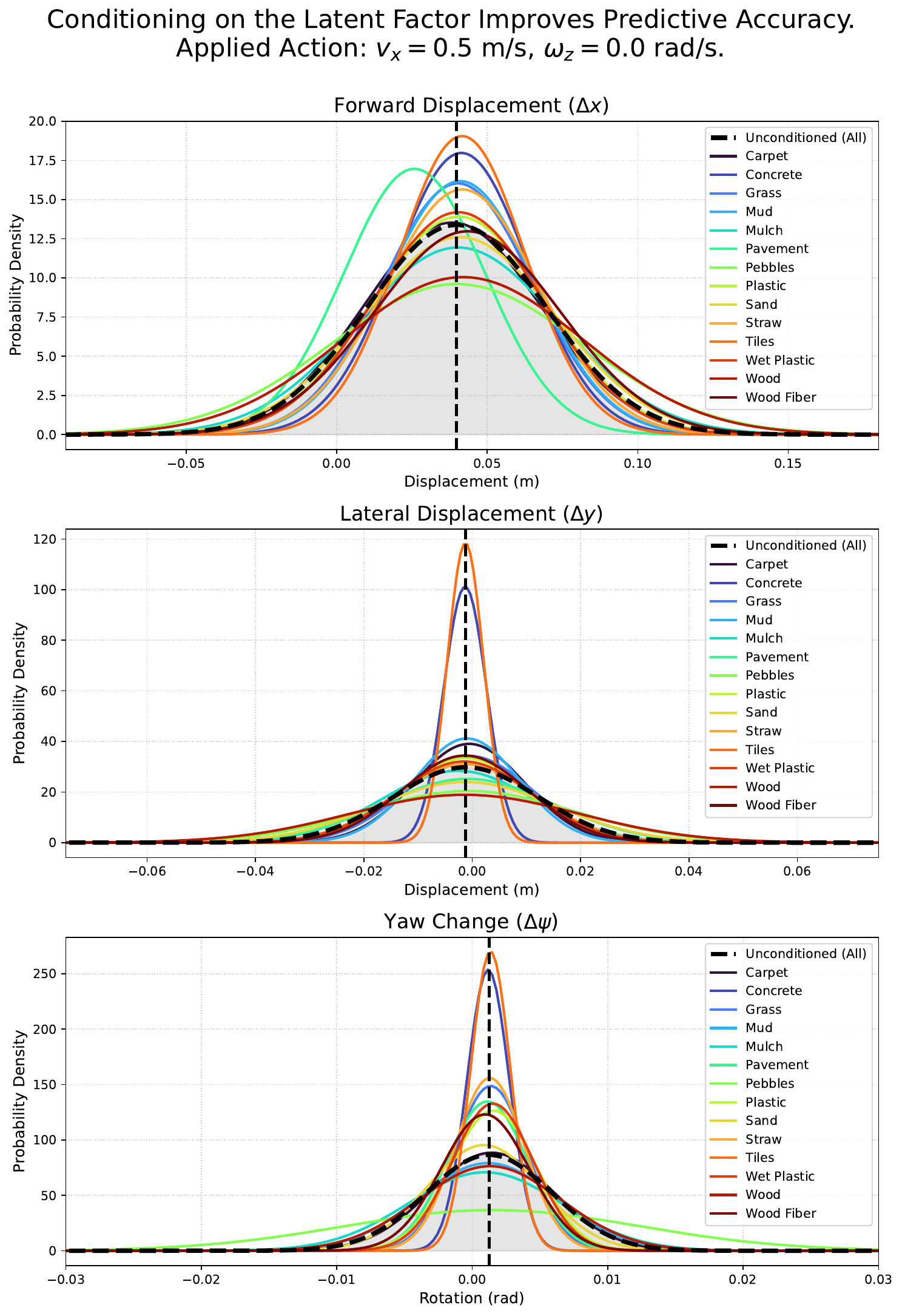}
    \caption{Motivating example showing the predictive benefit of conditioning on latent factors. Plotted are the marginal distributions of the change in state \(\Delta s\) conditioned on each terrain type from a real-world UGV dataset; colored curves correspond to terrain-specific distributions and the dashed black curve shows the unconditioned (aggregated) estimate. The unconditioned estimate smooths over distinct, terrain-dependent modes and therefore fails to capture terrain-specific transition dynamics, illustrating why latent-factor conditioning yields more accurate predictions. See Appendix~\ref{sec:appendix-latent-effect-char} for details.}
    \label{fig:1d_gaussians}
\end{figure}

Our online and unsupervised representation learning method seeks to capture variable and unknown ego- and environmental-factors. This approach models changes in the distribution governing the robot's transition dynamics as it interacts with the world. These changes encapsulate the influence of latent, unobserved factors that are challenging to predict or exhaustively anticipate. By leveraging this representation of the learned distribution of local dynamics, the robot refines its transition model, enabling more effective planning and control in downstream tasks.

\subsubsection{Online Transition Distribution Modeling:}

The transition distribution \( T(s, a, s')\) in a MDP defines a conditional probability distribution that specifies the likelihood of transitioning to a new state \( s' \) given the current state \( s \) and action \( a \). That is,
\begin{align} \label{eq:conditional-transition-prob}
    T(s, a, s') &= P(s' \mid s, a) \nonumber \\ &= \frac{P(S_t = s, A_t = a, S_{t+1} = s')}{P(S_t = s, A_t = a)}.
\end{align}
In particular, $P(s' \mid s, a) \propto P(S_t = s, A_t = a, S_{t+1} = s')$. However, as previously noted, in real-world applications, estimates of $P(s' \mid s, a)$ usually fail to accurately capture the true dynamics of the world. 
Generally, this shortfall arises from latent, unobserved factors—such as unmeasured variables or phenomena for which the robot lacks the necessary sensors—that significantly impact state transitions.
Thus, \textit{\textbf{our key insight}} is that by modeling online the joint transition distribution $P(S_t = s, A_t = a, S_{t+1} = s')$ that best explains the UDGP governing the current state transitions, we can account for the latent factors that influence the world dynamics and consequently improve performance on downstream tasks. 

To achieve this, we break the problem of online transition distribution modeling into determining local joint transition distributions. By ``local", we mean around the latest set or trajectory of states experienced by the agent. For example, a UGV that is stuck in a slippery area while climbing a mountain will estimate a different local joint transition distribution than the same robot when it is moving over a flat and smooth sidewalk. Figure \ref{fig:output-discrepancy} illustrates this concept, showing how the same action executed from an identical state can lead to vastly different outcomes whenever some important latent factors are not accounted for. Figure~\ref{fig:1d_gaussians} demonstrates this effect: it shows marginal transition distributions estimated from real-world UGV data collected while the vehicle executed the same action from comparable initial states. Colored curves correspond to different terrain types, and the dashed black curve denotes the unconditioned (aggregated) estimate, demonstrating how aggregation obscures terrain-specific transition dynamics.\footnote{Note that we do not claim that every terrain type requires a separate latent-factor representation. Instead, there is a trade-off between the number of modeled latent factors and their predictive benefit. For example, Figure~\ref{fig:terrain_latent_factor_effect_statistics} shows that the distributions for \textit{tiles} and \textit{concrete} are very similar, suggesting that a single representation may be sufficient in this case.} Crucially, such discrepancies are often better attributed to an incomplete state representation—e.g., the robot lacks sensors, such as a camera, to detect terrain types—rather than inherent aleatoric errors\footnote{While our approach does account for aleatoric uncertainty in the transition model, as done in traditional dynamics learning methods, our primary focus is on reducing epistemic uncertainty by identifying and accounting for the latent factors that currently influence the robot's dynamics.}.

We represent the {\em local dynamics} $T_L$ of the robot via a multivariate Normal distribution parameterized by its mean vector $\boldsymbol{\mu}$ and precision matrix $\Lambda$ (inverse of the covariance matrix $\Sigma$):
\begin{align} \label{eq:local-dynamics-model}
     T_L(\boldsymbol{x} | \boldsymbol{\mu}, \Lambda) = \frac{|\Lambda|^{1/2}}{(2 \pi)^{d/2}} \exp{ \left( -\frac{1}{2} (\boldsymbol{x} - \boldsymbol{\mu})^\top \Lambda (\boldsymbol{x} - \boldsymbol{\mu}) \right) },
\end{align}
we find this choice to be a computationally efficient and expressive approximation for predicting local dynamics. 
By focusing on the transitions within a neighborhood of states, the multivariate Normal distribution strikes an effective balance between representational power and computational feasibility, ensuring that the model remains practical for real-time applications.

Specifically, we use Eq. \eqref{eq:local-dynamics-model} to model the \textit{joint distribution} of the \textit{(state, action, next state)}-tuples, which we call the \textbf{\textit{situation}} of the robot from now on. Thus, $\boldsymbol{x} = [s_t; a_t; s_{t+1}]^\top \in \mathbb{R}^{d_x + d_a + d_x}$, which corresponds to concatenated state $s_t \in \mathbb{R}^{d_x}$, action $a_t \in \mathbb{R}^{d_a}$ and next state $s_{t+1} \in \mathbb{R}^{d_x}$ vectors. This will describe the situation of the robot, as we will be able to distinguish via the \textit{next state} components of $T_L$ whenever the same action under similar conditions (states) leads to different outcomes. The fact that the robot gets a markedly different outcome is what will let it know that an unknown factor is present. 
Furthermore, since $T_L$ encodes the effect of the unknown factor, we can use it to inform $T$ and adapt to the detected change in the transition dynamics.

However, to effectively model the local dynamics using Eq. \eqref{eq:local-dynamics-model}, we need to estimate $\boldsymbol{\eta} = \{\boldsymbol{\mu}, \Lambda\}$ online and simultaneously detect when a different $\boldsymbol{\eta}^\prime$ describes the current situation of the robot better. For this, we extend BOCD to a multivariate setting and apply it for online and unsupervised discovery of the distinct sets $\boldsymbol{\eta}_{\rho}$. We proceed to introduce our mutivariate extension of BOCD in Section \ref{sec:multivariate-bocd}. Then, in Section \ref{sec:sit-modeling-and-id} we show how we can leverage this results to model online the joint transition distribution of the robot's dynamics.

\subsubsection{Multivariate BOCD:} \label{sec:multivariate-bocd}

In many applications, data is often received in a streaming fashion, meaning that data points arrive sequentially, one at a time. A highly relevant example in the robotics domain is state estimates obtained across time. Each of these incoming data points can be considered as an observation from a statistical distribution, which can help us model and understand the underlying phenomena or UDGP.

We build our work on top of BOCD but with an assumption of {\em multivariate} streaming data points. 
This extension will enable online detection of changes in the distribution describing the underlying phenomena being observed. Additionally, it also allows us to simultaneously model the new phenomena.
Formally, the data points are known to be drawn from a multivariate normal distribution characterized by a set of parameters denoted by \( \boldsymbol{\eta} = \{\boldsymbol{\mu}, \Lambda\} \). 
As a result, at time $t = N$, we have collected a set of observations $\mathcal{D} = \{\boldsymbol{x}_i : \boldsymbol{x}_i \overset{\mathrm{iid}}{\sim} \mathcal{N}(\boldsymbol{\eta}) \}_{i=1}^N$. This means: 
\begin{align} \label{eq:sampling-model}
    p(\mathcal{D} | \boldsymbol{\mu}, \Lambda) &= \prod_{\substack{\boldsymbol{x}_i \in \mathcal{D}}} \mathcal{N} (\boldsymbol{x}_i | \boldsymbol{\mu}, \Lambda)  \\&= \left( \frac{|\Lambda|}{(2 \pi)^{d}} \right)^{n/2} \nonumber \\ & \quad\quad \cdot \exp{ \left( -\frac{1}{2} \sum_{\substack{i=1}}^n (\boldsymbol{x}_i - \boldsymbol{\mu})^\top \Lambda (\boldsymbol{x}_i - \boldsymbol{\mu}) \right) \nonumber}. 
\end{align}

We are interested in detecting the time $t^\prime > t$ when the incoming data no longer comes from $\mathcal{N}(\boldsymbol{\eta})$ and determine the new values of $\boldsymbol{\eta}^\prime$ that describe the new UDGP.

Since we want to understand when the parameters $\boldsymbol{\eta}$ of the UDGP have changed, we need to define a prior for them. For this, we propose the \textit{Normal-Wishart distribution}. 
This conjugate prior is particularly useful because it allows for closed form belief updates for $\boldsymbol{\mu} \in \mathbb{R}^d$ and $\Lambda \in \mathbb{R}^{d \times d}$ as more data becomes available. It is defined as follows \cite[]{murphy2007conjugateforms}:
\begin{align} \label{eq:normal-wishart-prior}
    p(\boldsymbol{\mu}, \Lambda) &=\mathcal{NW}(\boldsymbol{\mu}, \Lambda | \mu_0, \kappa_0, T_0, \nu_0) \\&= \mathcal{N}(\boldsymbol{\mu} | \mu_0, (\kappa_0 \Lambda)^{-1}) \mathcal{W}_{\nu_0}(\Lambda | T_0) \nonumber \\
    &= \frac{1}{Z} |\Lambda|^{1/2} \exp \left( -\frac{\kappa}{2} (\boldsymbol{\mu} - \mu_0)^T \Lambda (\boldsymbol{\mu} - \mu_0) \right) \nonumber \\
    & \quad \quad \quad \cdot |\Lambda|^{(\kappa - d - 1)/2} \exp \left( -\frac{1}{2} \operatorname{tr}(T^{-1} \Lambda) \right), \nonumber
\end{align}
where
\[
    Z = \left( \frac{\kappa}{2 \pi} \right)^{d/2} |T|^{\kappa / 2} 2^{d \kappa / 2} \Gamma_d(\kappa / 2)
\]
is a normalizing factor and $\operatorname{tr}(\cdot)$ is the trace of a matrix.

Essentially, the Normal-Wishart prior combines two distributions: it leverages the Normal distribution for the mean $\boldsymbol{\mu}$, incorporating prior knowledge $\boldsymbol{\mu}_0$ about the likely location of $\boldsymbol{\mu}$, and it employs the Wishart distribution to model the precision matrix $\Lambda$, acting as the prior on the inverse covariance. The Wishart distribution is flexible, encoding varying levels of confidence in the scale of the precision matrix through its degrees of freedom parameter and scale matrix, $T$. Parameters $\kappa_0$ and $\nu_0$ allow us to control the strength of our prior beliefs, with $\kappa_0$ scaling the covariance of the mean and $\nu_0$ defining the degrees of freedom of the Wishart distribution. This prior choice is satisfactory for our application since it assumes an unknown mean vector $\boldsymbol{\mu}$ and precision matrix $\Lambda$, which enables flexibility by allowing uncertainty in the covariance structure to be encoded in a principled way \cite[]{degroot1970optimal, murphy2007conjugateforms}. Consequently, it spares us from imposing restrictive assumptions on the covariance matrix, such as isotropy, homogeneity, or mean-dependence, making it highly suitable for modeling multivariate normal distributions with little assumptions about its parameters.

With the sampling model Eq. \eqref{eq:sampling-model} and prior Eq. \eqref{eq:normal-wishart-prior} determined, we proceed to make the connection to BOCD. Specifically, we will determine the closed form solutions for the growth probability Eq. \eqref{eq:growth-prob-def} and changepoint probability Eq. \eqref{eq:changepoint-prob-def}, from which we can recover the run length probability Eq. \eqref{eq:pos-run-length}. We begin by assuming that the prior on the changepoint probability is $P_{\mathtt{gap}}(g) \sim \mathtt{geometric}(\lambda)$. This makes the process memoryless and the hazard function constant at $H(\tau) = 1 / \lambda$, where $\lambda$ can be interpreted as the expected time between changepoints \cite[]{adams2007bayesian}.

\begin{restatable}[Growth and Changepoint Probabilities]{lemma}{sitproblemma} \label{lemma:sit-prob}
        Let $\boldsymbol{x}_t$ be the observation received at time $t$. Let $\boldsymbol{\eta}^{(r)} = \{\boldsymbol{\mu}_{i}^{(r)}, T_{i}^{(r)}, \nu_{i}^{(r)}, \kappa_{i}^{(r)}\}_{i=t_{cp}}^{t}$ be the parameters of the UDGP $(r)$ that started at time $t_{cp} < t$. Then, 
        \begin{itemize}[leftmargin=0.45cm]
            \item The \textbf{Growth Probability}, or probability that we stayed in the same UDGP, at time $t$ is: 
            \begin{align} \label{eq:permanence-prob}
                P(r_t = &~r_{t-1} + 1, \boldsymbol{x}_{t_{cp}:t}) = \\ &\frac{\lambda - 1}{\lambda} P(r_{t-1}, \boldsymbol{x}_{t_{cp}:t-1})  
                \mathcal{N} (\boldsymbol{x}_t | \boldsymbol{\mu}_{t-1}^{(r)}, \Lambda_{t-1}^{(r)}). \nonumber
            \end{align}
            \item The \textbf{Changepoint Probability}, or probability that we moved to a different UDGP, at time $t$ is:
            \begin{align} \label{eq:jump-prob}
                P(r_t = 0, &~\boldsymbol{x}_{t_{cp}:t}) = \\ & \frac{1}{\lambda} \sum_{\substack{\tau = 0}}^{r_{t-1}} P(\tau, \boldsymbol{x}_{t_{cp}:t-1}) 
                \mathcal{N} (\boldsymbol{x}_t | \boldsymbol{\mu}_{\tau}^{(r)}, \Lambda_{\tau}^{(r)}). \nonumber
            \end{align}
        \end{itemize}
\end{restatable}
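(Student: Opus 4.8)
The plan is to derive Lemma~\ref{lemma:sit-prob} by specializing the generic BOCD recursions for the growth probability~\eqref{eq:growth-prob-def} and the changepoint probability~\eqref{eq:changepoint-prob-def} to the concrete model set up in this section: the multivariate-normal sampling model~\eqref{eq:sampling-model} with the conjugate Normal--Wishart prior~\eqref{eq:normal-wishart-prior}, together with a geometric prior on the inter-changepoint gap. Two ingredients are needed — a closed form for the hazard $H(\cdot)$ and a closed form for the one-step predictive probability $\pi_t^{(r)} = P(\boldsymbol{x}_t \mid \boldsymbol{\eta}_t^{(r)})$ — after which the stated identities follow by substitution and a relabelling of indices.

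First I would fix the hazard. Taking $P_{\mathtt{gap}}(g) \sim \mathtt{geometric}(\lambda)$ and exploiting the memorylessness of the geometric distribution, the hazard $H(\tau) = P_{\mathtt{gap}}(g = \tau) / \sum_{k \ge \tau} P_{\mathtt{gap}}(g = k)$ does not depend on $\tau$; with the parameterization chosen so that $\lambda$ is the expected gap, one obtains $H(\tau) \equiv 1/\lambda$ and hence $1 - H(r_{t-1}) = (\lambda - 1)/\lambda$. This is exactly the constant-hazard statement already quoted just before the lemma, so this step is essentially bookkeeping.

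Next I would compute the predictive term. Because each run is modelled as i.i.d.\ draws from $\mathcal{N}(\boldsymbol{\mu}, \Lambda)$ and the Normal--Wishart prior is conjugate, the hyperparameters $\{\boldsymbol{\mu}_i^{(r)}, T_i^{(r)}, \nu_i^{(r)}, \kappa_i^{(r)}\}$ can be updated in closed form as each observation of the current run arrives, and from them one reads off a running point estimate $\{\boldsymbol{\mu}_{t-1}^{(r)}, \Lambda_{t-1}^{(r)}\}$ of the natural parameters based on the data $\boldsymbol{x}_{t_{cp}:t-1}$ of that run. Evaluating the sampling model at this estimate gives $\pi_t^{(r)} = P(\boldsymbol{x}_t \mid \boldsymbol{\eta}_{t-1}^{(r)}) = \mathcal{N}(\boldsymbol{x}_t \mid \boldsymbol{\mu}_{t-1}^{(r)}, \Lambda_{t-1}^{(r)})$, i.e.\ the multivariate Gaussian density of Eq.~\eqref{eq:local-dynamics-model}. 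I would make explicit here that this is the plug-in predictive: the exact marginal posterior predictive under a Normal--Wishart prior is a multivariate Student-$t$, and replacing it by the Gaussian evaluated at the current estimate is the deliberate modelling choice — cheap to update online — that the lemma formalizes.

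Finally I would substitute. Plugging $H \equiv 1/\lambda$ and $\pi_t^{(r)} = \mathcal{N}(\boldsymbol{x}_t \mid \boldsymbol{\mu}_{t-1}^{(r)}, \Lambda_{t-1}^{(r)})$ into~\eqref{eq:growth-prob-def} yields~\eqref{eq:permanence-prob} directly; plugging into~\eqref{eq:changepoint-prob-def}, writing the marginalization over the previous run length explicitly as $\sum_{\tau = 0}^{r_{t-1}}$ with each candidate length $\tau$ carrying its own estimate $\{\boldsymbol{\mu}_\tau^{(r)}, \Lambda_\tau^{(r)}\}$, and factoring out the constant hazard $1/\lambda$, yields~\eqref{eq:jump-prob}; restricting the conditioning window $\boldsymbol{x}_{1:t}$ to the current segment $\boldsymbol{x}_{t_{cp}:t}$ is only a matter of notation. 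I expect the one delicate point to be the index bookkeeping — ensuring that the parameters entering $\pi_t^{(r)}$ are those estimated from the run's data through time $t-1$ (not including $\boldsymbol{x}_t$), and that in the changepoint sum the predictive for each candidate previous run length $\tau$ uses the corresponding $\{\boldsymbol{\mu}_\tau^{(r)}, \Lambda_\tau^{(r)}\}$ — together with stating clearly that the Gaussian (rather than Student-$t$) predictive is an approximation, not the exact Bayesian quantity.
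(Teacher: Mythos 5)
Your proposal is correct and follows essentially the same route as the paper's own proof: fix the constant hazard $H(\tau)=1/\lambda$ from the geometric gap prior and substitute it, together with the Gaussian predictive $\pi_t^{(r)}=\mathcal{N}(\boldsymbol{x}_t\mid\boldsymbol{\mu}_{t-1}^{(r)},\Lambda_{t-1}^{(r)})$, into the generic BOCD recursions \eqref{eq:growth-prob-def} and \eqref{eq:changepoint-prob-def}. If anything, your explicit remark that the Gaussian is a plug-in approximation to the exact Normal--Wishart posterior predictive (which is Student-$t$, as the paper itself later derives in Lemma~\ref{lemma:post-pred}) is a point the paper's proof glosses over, so your write-up is slightly more careful on that score.
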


\begin{proof} We assume that the situation the robot starts in begins at the exact moment the robot processes its first observation. Therefore, $P(r_0 = 0, \boldsymbol{x}_{0:0}) = 1$. This becomes the initial condition for the recursive computation key in the BOCD framework to find the distribution of the run length $P(r_t | \boldsymbol{x}_{1:t})$. Then, we can obtain the probability of staying in the current UDGP or jumping to a new UDGP as follows:

    \begin{itemize}[leftmargin=0.45cm]
        \item \textit{\textbf{Growth Probability:}} As defined in Eq. \eqref{eq:growth-prob-def}, the probability that the current run length increases is $P(r_t = r_{t-1}+1, \boldsymbol{x}_{1:t}) = P(r_{t-1}, \boldsymbol{x}_{1:t-1}) \pi_t^{(r)} (1 - H(r_{t-1}))$. For us, $\pi_t^{(r)}$ is the probability of the data given by Eq. \eqref{eq:sampling-model}, our sampling model. We also established before that $H(r_{t-1}) = 1 / \lambda$. Then, the growth probability at $t=1$ is:
        \begin{align*}
            P(r_1 = 1,&~\boldsymbol{x}_{0:1} = \{\boldsymbol{x}_1\}) \\&= P(r_0, \boldsymbol{x}_{0:0}) \cdot \pi_1^{(r)} \cdot (1 - H(0)) \nonumber \\
            &= 1 \cdot P(\{\boldsymbol{x}_1\} | \boldsymbol{\eta}_0) \cdot (1 - \frac{1}{\lambda}) \nonumber \\
            &= \frac{\lambda - 1}{\lambda} \mathcal{N}(\boldsymbol{x}_1 | \boldsymbol{\mu}_0, \Lambda_0). \nonumber
        \end{align*}

        And the growth probability at $t > 1$ is:
        \begin{align*}
            P(&r_t=r_{t-1}+1, \boldsymbol{x}_{0:t}) \\&= P(r_{t-1}, \boldsymbol{x}_{0:t-1}) \cdot \pi_t^{(r)} \cdot (1 - H(r_{t-1})) \nonumber \\
            &= P(r_{t-1}, \boldsymbol{x}_{0:t-1}) \cdot \prod_{\substack{r^\prime = 0}}^{\substack{r}_{t-1}} \mathcal{N} (\boldsymbol{x}_t | \boldsymbol{\mu}_{r^\prime}, \Lambda_{r^\prime}) \cdot (1 - \frac{1}{\lambda}) \nonumber \\
            &= \frac{\lambda - 1}{\lambda} \cdot \prod_{\substack{r^\prime = 0}}^{\substack{r}_{t-1}} \mathcal{N} (\boldsymbol{x}_t | \boldsymbol{\mu}_{r^\prime}, \Lambda_{r^\prime}) \cdot P(r_{t-1}, \boldsymbol{x}_{0:t-1}). \nonumber
        \end{align*}

        Therefore, the permanence probability at time $t$, given that we consider the time of the previous changepoint $t_{cp}$ to be zero, is:
        \begin{align}
            P(&r_t = r_{t-1} + 1, \boldsymbol{x}_{0:t}) \nonumber\\ &= \frac{\lambda - 1}{\lambda} \cdot \prod_{\substack{r^\prime = 0}}^{\substack{r}_{t-1}} \mathcal{N} (\boldsymbol{x}_t | \boldsymbol{\mu}_{r^\prime}, \Lambda_{r^\prime}) \cdot P(r_{t-1}, \boldsymbol{x}_{0:t-1}) \nonumber \\
            &= \frac{\lambda - 1}{\lambda} P(r_{t-1}, \boldsymbol{x}_{t_{cp}:t-1})  
                \mathcal{N} (\boldsymbol{x}_t | \boldsymbol{\mu}_{t-1}^{(r)}, \Lambda_{t-1}^{(r)}). \nonumber
        \end{align}

        \item \textit{\textbf{Changepoint Probability:}} It follows from the results above and the fact that the changepoint probability is defined in Eq. \eqref{eq:changepoint-prob-def} as $P(r_t=0, \boldsymbol{x}_{1:t}) = \sum_{\substack{r_{t-1}}} P(r_{t-1}, \boldsymbol{x}_{1:t-1}) \pi_t^{(r)} H(r_{t-1})$, that the probability of jumping to a new situation is:
        \begin{align*}
            P(&r_t = 0, \boldsymbol{x}_{t_{cp}:t}) \\ &= \frac{1}{\lambda}  \cdot \prod_{\substack{r^\prime = 0}}^{\substack{r}_{t-1}} \mathcal{N} (\boldsymbol{x}_t | \boldsymbol{\mu}_{r^\prime}, \Lambda_{r^\prime}) \cdot   \sum_{\substack{\tau = 0}}^{r_{t-1}} P(\tau, \boldsymbol{x}_{t_{cp}:t-1})  \\
            &= \frac{1}{\lambda} \sum_{\substack{\tau = 0}}^{r_{t-1}} P(\tau, \boldsymbol{x}_{t_{cp}:t-1}) 
                \mathcal{N} (\boldsymbol{x}_t | \boldsymbol{\mu}_{\tau}^{(r)}, \Lambda_{\tau}^{(r)}).
        \end{align*}
    \qed
    \end{itemize}
\end{proof}

Subsequently, as new observations arrive, we need to update the run length distribution to determine whether or not the UDGP has changed. Next, we describe in Lemma \ref{lemma:sit-params} the optimal closed-form equations that will enable us to do so efficiently.

\begin{restatable}[Online Distribution Parameters Learning]{lemma}{sitparamslemma} \label{lemma:sit-params}
Given the ordered set of situation parameters $\boldsymbol{\eta}^{(r)}$ and a new observation $\boldsymbol{x}_t$, the posterior parameters $\boldsymbol{\mu}_t^{(r)}, T_t^{(r)}, \nu_t^{(r)}, \kappa_t^{(r)}$ for the current UDGP are:
    \begin{align} 
        &\boldsymbol{\mu}_t^{(r)} = \frac{\kappa_{t-1}^{(r)} \boldsymbol{\mu}_{t-1}^{(r)} + \boldsymbol{x}_t}{\kappa_{t-1}^{(r)} + 1}; \label{eq:post-params-mu} \\~~ &T_t^{(r)} = T_{t-1}^{(r)} + S + \frac{\kappa_{t-1}^{(r)}}{\kappa_{t-1}^{(r)} + 1} \Delta; \label{eq:post-params-T}
        \\~~ &\nu_t^{(r)} = \nu_{t-1}^{(r)} + 1; \label{eq:post-params-nu} \\~~ &\kappa_t^{(r)} = \kappa_{t-1}^{(r)} + 1;    \label{eq:post-params-kappa} 
    \end{align}
where $S = \sum_{\substack{i=1}}^{\substack{t-1}} (\boldsymbol{x}_i - \boldsymbol{x}_t)(\boldsymbol{x}_i - \boldsymbol{x}_t)^{\top}$ and $\Delta = (\boldsymbol{\mu}_{t-1}^{(r)} - \boldsymbol{x}_t)(\boldsymbol{\mu}_{t-1}^{(r)} - \boldsymbol{x}_t)^{\top}$.
\end{restatable}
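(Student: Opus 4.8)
The plan is to derive the posterior update by exploiting conjugacy of the Normal-Wishart prior with the multivariate-normal sampling model. Concretely, I would start from the prior $p(\boldsymbol{\mu},\Lambda)=\mathcal{NW}(\boldsymbol{\mu},\Lambda\mid\boldsymbol{\mu}_{t-1}^{(r)},\kappa_{t-1}^{(r)},T_{t-1}^{(r)},\nu_{t-1}^{(r)})$ given in Eq.~\eqref{eq:normal-wishart-prior} and multiply it by the likelihood of the run data. The key observation is that the run $(r)$ currently contains the observations $\boldsymbol{x}_{t_{cp}},\dots,\boldsymbol{x}_{t-1}$ already summarized in the prior hyperparameters, and the new datum $\boldsymbol{x}_t$ must be folded in. Writing the product of Eq.~\eqref{eq:sampling-model} restricted to the run and Eq.~\eqref{eq:normal-wishart-prior}, I would collect the quadratic-in-$\boldsymbol{\mu}$ terms and the $\operatorname{tr}(\cdot\,\Lambda)$ terms separately, then complete the square in $\boldsymbol{\mu}$ to read off the updated Normal factor, and regroup the remaining terms to read off the updated Wishart factor. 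This is the standard conjugate-update computation (cf. \citet{murphy2007conjugateforms, degroot1970optimal}), so I would cite it rather than reproduce every algebraic line.

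The steps, in order, are: (i) state the joint $p(\boldsymbol{\mu},\Lambda\mid \boldsymbol{x}_{t_{cp}:t}) \propto p(\boldsymbol{x}_{t_{cp}:t}\mid\boldsymbol{\mu},\Lambda)\,p(\boldsymbol{\mu},\Lambda)$ and recall the standard batch Normal-Wishart posterior in terms of sample size $n$, sample mean $\bar{\boldsymbol{x}}$, and scatter matrix $S_{\bar{x}}=\sum_i(\boldsymbol{x}_i-\bar{\boldsymbol{x}})(\boldsymbol{x}_i-\bar{\boldsymbol{x}})^\top$: namely $\kappa_n=\kappa_0+n$, $\nu_n=\nu_0+n$, $\boldsymbol{\mu}_n=(\kappa_0\boldsymbol{\mu}_0+n\bar{\boldsymbol{x}})/\kappa_n$, and $T_n^{-1}=T_0^{-1}+S_{\bar{x}}+\frac{\kappa_0 n}{\kappa_0+n}(\bar{\boldsymbol{x}}-\boldsymbol{\mu}_0)(\bar{\boldsymbol{x}}-\boldsymbol{\mu}_0)^\top$ (or the analogous form on $T$ itself, matching the paper's parameterization); (ii) specialize to the incremental case where the single new point $\boldsymbol{x}_t$ is added, i.e.\ $n\mapsto n+1$ with the prior already carrying the first $n=\nu_{t-1}^{(r)}-\nu_0$ points — here $\kappa_t^{(r)}=\kappa_{t-1}^{(r)}+1$ and $\nu_t^{(r)}=\nu_{t-1}^{(r)}+1$ are immediate, and $\boldsymbol{\mu}_t^{(r)}=(\kappa_{t-1}^{(r)}\boldsymbol{\mu}_{t-1}^{(r)}+\boldsymbol{x}_t)/(\kappa_{t-1}^{(r)}+1)$ follows by the rank-one mean update; (iii) verify the scatter-matrix update, i.e.\ show that adding $\boldsymbol{x}_t$ transforms the accumulated scatter plus the mean-discrepancy correction into $T_{t-1}^{(r)}+S+\frac{\kappa_{t-1}^{(r)}}{\kappa_{t-1}^{(r)}+1}\Delta$ with $S=\sum_{i=1}^{t-1}(\boldsymbol{x}_i-\boldsymbol{x}_t)(\boldsymbol{x}_i-\boldsymbol{x}_t)^\top$ and $\Delta=(\boldsymbol{\mu}_{t-1}^{(r)}-\boldsymbol{x}_t)(\boldsymbol{\mu}_{t-1}^{(r)}-\boldsymbol{x}_t)^\top$; (iv) conclude that these are exactly Eqs.~\eqref{eq:post-params-mu}--\eqref{eq:post-params-kappa}.

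The main obstacle I expect is step (iii): reconciling the paper's chosen bookkeeping — where $S$ is written as the scatter of \emph{all prior run points against the new point $\boldsymbol{x}_t$} rather than the usual scatter against the running sample mean — with the textbook recursion. I would handle this by the standard rank-one identity for updating a sum of outer products when the reference point shifts: for any points $\{\boldsymbol{x}_i\}$ and reference vectors $\boldsymbol{a},\boldsymbol{b}$, $\sum_i(\boldsymbol{x}_i-\boldsymbol{a})(\boldsymbol{x}_i-\boldsymbol{a})^\top = \sum_i(\boldsymbol{x}_i-\boldsymbol{b})(\boldsymbol{x}_i-\boldsymbol{b})^\top + n(\boldsymbol{b}-\boldsymbol{a})(\boldsymbol{b}-\boldsymbol{a})^\top + (\text{cross terms that vanish when }\boldsymbol{b}=\bar{\boldsymbol{x}})$, applied with $\boldsymbol{a}=\boldsymbol{x}_t$ and $\boldsymbol{b}$ the running mean, and then absorbing the leftover cross term into the $\Delta$ coefficient. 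A careful but routine expansion shows the $\kappa$-weighted coefficient $\frac{\kappa_{t-1}^{(r)}}{\kappa_{t-1}^{(r)}+1}$ is precisely what makes the algebra close; everything else is bookkeeping that I would state rather than belabor.
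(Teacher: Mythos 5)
Your proposal is correct and follows essentially the same route as the paper: both invoke the standard conjugate Normal--Wishart posterior update (citing \citet{degroot1970optimal} / \citet{murphy2007conjugateforms}) and then specialize to the sequential case in which the previous step's posterior serves as the prior and a single new observation $\boldsymbol{x}_t$ is absorbed, immediately yielding the $\kappa$, $\nu$, and mean updates. The only difference is that the paper's proof simply sets $n=1$ (under which the textbook scatter term vanishes, since $\overline{\boldsymbol{x}}=\boldsymbol{x}_t$) and does not address how the lemma's nonzero $S=\sum_{i=1}^{t-1}(\boldsymbol{x}_i-\boldsymbol{x}_t)(\boldsymbol{x}_i-\boldsymbol{x}_t)^{\top}$ arises, so your step (iii) reconciling the reference-point shift is, if anything, more careful than the paper's own treatment of that term.
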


\begin{proof}
    The proof follows from the results of the posterior distribution parameters for the Normal-Wishart prior in \cite[Sec. 9.10]{degroot1970optimal}. Specifically, they show that for a set of samples $\{\boldsymbol{x}_i: \boldsymbol{x}_i \in \mathbb{R}^d\}_{i = 1}^n$, the posterior parameters for Eq. \eqref{eq:normal-wishart-prior}, given prior parameters $\{\boldsymbol{\mu}_0, \kappa_0, T_0, \nu_0\}$, are:
    \begin{align*}
        \boldsymbol{\mu}_n &= \frac{\kappa_0 \boldsymbol{\mu}_0 + n \overline{\boldsymbol{x}}}{\kappa_0 + n}, \\ T_n &= T_0 + S + \frac{\kappa_0 n}{\kappa_0 + n} \Delta, \\ \nu_n &= \nu_0 + n, \\ \kappa_n &= \kappa_0 + n, 
    \end{align*}
    where $S = \sum_{\substack{i=1}}^{\substack{n}} (\boldsymbol{x}_i - \overline{\boldsymbol{x}})(\boldsymbol{x}_i - \overline{\boldsymbol{x}})^{\top}$, $\Delta = (\boldsymbol{\mu}_0 - \overline{\boldsymbol{x}})(\boldsymbol{\mu}_0 - \overline{\boldsymbol{x}})^{\top}$ and $\overline{\boldsymbol{x}} = \frac{1}{n} \sum_{\substack{i = 1}}^{\substack{n}} \boldsymbol{x}_i$.
    
    Then, the derivation follows from the fact that $n=1$ for the observation $\boldsymbol{x}_t$ at time $t$, and the fact that the parameters $\boldsymbol{\eta}_{t-1}^{(r)} = \{\boldsymbol{\mu}_{t-1}^{(r)}, T_{t-1}^{(r)}, \nu_{t-1}^{(r)}, \kappa_{t-1}^{(r)}\}$ from the previous time step are the current prior. \qed
\end{proof}

Finally, if we would like to make predictions about future observations it is necessary to obtain the posterior distribution. We proceed to show in Lemma \ref{lemma:post-pred} that for our case, the posterior follows Student's $\mathtt{t}$ distribution.

\begin{restatable}[Observation Posterior]{lemma}{postpredlemma} \label{lemma:post-pred}

Building on the predicate established in Lemma \ref{lemma:sit-prob} and incorporating the updated parameters derived from Lemma \ref{lemma:sit-params}, the posterior predictive distribution for $\boldsymbol{\hat{x}}_{t+1}$ is given by:
\vspace{-1mm}
    \begin{align} \label{eq:post-pred}
        p(\boldsymbol{\hat{x}}_{t+1} &| \boldsymbol{x}_{t}) = \nonumber \\ &\sum_{\substack{\tau = 0}}^{r_{t-1}} \mathtt{t}_{\nu_n^{(r)} - d + 1} \left( \boldsymbol{\hat{x}}_{t+1} \middle| \boldsymbol{\mu}_{\tau}^{(r)}, \frac{T_{\tau}^{(r)} (\kappa_{\tau}^{(r)} + 1)}{\kappa_{\tau}^{(r)} (\nu_{\tau}^{(r)} - d + 1)} \right) \nonumber \\ &~~~~\cdot \frac{P(r_t = \tau, \boldsymbol{x}_{0:t})}{\sum_{\substack{\tau^\prime=0}}^{r_t} P(r_t = \tau^\prime, \boldsymbol{x}_{0:t})},
    \end{align}
    where \texttt{t} is the Student's \texttt{t} distribution.
\end{restatable}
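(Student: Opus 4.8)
The plan is to assemble Eq.~\eqref{eq:post-pred} from two ingredients that are already in place: the BOCD mixture structure of the one-step predictive, and the classical fact that marginalizing a Gaussian likelihood against a Normal--Wishart prior yields a multivariate Student's $\mathtt{t}$. Concretely, starting from the BOCD decomposition underlying Eq.~\eqref{eq:joint-runlen-obs}, the posterior predictive is the run-length-weighted mixture $p(\boldsymbol{\hat{x}}_{t+1}\mid\boldsymbol{x}_{t}) = \sum_{\tau=0}^{r_{t-1}} p(\boldsymbol{\hat{x}}_{t+1}\mid r_t=\tau,\boldsymbol{x}_t^{(r)})\,P(r_t=\tau\mid\boldsymbol{x}_{0:t})$, where the weights are exactly the normalized run-length posterior of Eq.~\eqref{eq:pos-run-length}; this reproduces the second factor in Eq.~\eqref{eq:post-pred} verbatim (with $P(\boldsymbol{x}_{0:t})=\sum_{\tau'}P(r_t=\tau',\boldsymbol{x}_{0:t})$). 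It therefore remains only to identify the per-run predictive $p(\boldsymbol{\hat{x}}_{t+1}\mid r_t=\tau,\boldsymbol{x}_t^{(r)})$.

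For a fixed hypothesis $r_t=\tau$, the observations assigned to that run together with the Normal--Wishart prior Eq.~\eqref{eq:normal-wishart-prior} induce, by conjugacy, a Normal--Wishart posterior whose parameters are precisely $\{\boldsymbol{\mu}_\tau^{(r)},\kappa_\tau^{(r)},T_\tau^{(r)},\nu_\tau^{(r)}\}$ from Lemma~\ref{lemma:sit-params}. The per-run predictive is then the marginal $\int\!\!\int \mathcal{N}(\boldsymbol{\hat{x}}_{t+1}\mid\boldsymbol{\mu},\Lambda)\,\mathcal{NW}(\boldsymbol{\mu},\Lambda\mid\boldsymbol{\mu}_\tau^{(r)},\kappa_\tau^{(r)},T_\tau^{(r)},\nu_\tau^{(r)})\,d\boldsymbol{\mu}\,d\Lambda$. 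I would evaluate it in two stages: first integrate out $\boldsymbol{\mu}$, using that the Gaussian--Gaussian convolution gives $\boldsymbol{\hat{x}}_{t+1}\mid\Lambda\sim\mathcal{N}(\boldsymbol{\mu}_\tau^{(r)},\tfrac{\kappa_\tau^{(r)}+1}{\kappa_\tau^{(r)}}\Lambda^{-1})$; then integrate out $\Lambda$ against the Wishart, recognizing the result as a Gaussian scale mixture, i.e.\ a multivariate Student's $\mathtt{t}$. Matching the Wishart and Gaussian normalizing constants---equivalently, invoking the closed form in \cite[Sec.~9.10]{degroot1970optimal} or \cite{murphy2007conjugateforms}---gives degrees of freedom $\nu_\tau^{(r)}-d+1$, location $\boldsymbol{\mu}_\tau^{(r)}$, and scale matrix $\frac{T_\tau^{(r)}(\kappa_\tau^{(r)}+1)}{\kappa_\tau^{(r)}(\nu_\tau^{(r)}-d+1)}$, which is exactly the first factor in Eq.~\eqref{eq:post-pred}. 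Substituting this into the mixture above completes the proof.

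The main obstacle is the parameter-marginalization integral: although classical, it is bookkeeping-heavy, since one must track the multivariate gamma normalizer $\Gamma_d(\cdot)$ and the powers of $|\Lambda|$ carefully, and the degrees-of-freedom and scale conventions must be matched exactly to the Wishart parametrization of Eq.~\eqref{eq:normal-wishart-prior}. The cleanest route is to avoid re-deriving the Normal--Wishart posterior predictive from scratch and instead cite its known closed form, then merely substitute the Lemma~\ref{lemma:sit-params} parameters and attach the BOCD mixing weights of Lemma~\ref{lemma:sit-prob}; two minor points to check along the way are the indexing of the sum (it ranges over the run-length support $\tau=0,\dots,r_{t-1}$, consistent with Lemma~\ref{lemma:sit-prob}) and the notational consistency of the degrees-of-freedom argument, which should read $\nu_\tau^{(r)}-d+1$ throughout.
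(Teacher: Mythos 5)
Your proposal is correct and follows essentially the same route as the paper: decompose the predictive as a run-length-posterior-weighted mixture per the BOCD framework, then identify each per-run predictive as the known Normal--Wishart posterior predictive (a multivariate Student's $\mathtt{t}$) with the Lemma~\ref{lemma:sit-params} parameters substituted in, citing the closed form rather than re-deriving the marginalization integral. Your added sketch of the two-stage integration over $\boldsymbol{\mu}$ and $\Lambda$ and your note that the degrees-of-freedom index should read $\nu_\tau^{(r)}-d+1$ throughout are both sound but do not change the argument.
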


\begin{proof}

From the BOCD framework, we assumed it is possible to compute the predictive distribution conditional on a given run length \cite[]{adams2007bayesian}: $$P(x_{t+1} | \boldsymbol{x}_{1:t}) = \sum_{r_t} P(x_{t+1} | r_t, \boldsymbol{x}_t^{(r)}) P(r_t | \boldsymbol{x}_{1:t}).$$
Additionally, $P(\boldsymbol{x}_{t+1} | \boldsymbol{x}_{1:t})$ depends only on the data of the current run $\boldsymbol{x}_t^{(r)}$. Thus, $P(\boldsymbol{x}_t | r_{t-1}, \boldsymbol{x}_{1:t}) = P(\boldsymbol{x}_t | r_{t-1}, \boldsymbol{x}_t^{(r)})$. Therefore, using the result from \cite{murphy2007conjugateforms} that says that the posterior predictive of the Normal-Wishart prior is:
\begin{align*}
p(\boldsymbol{\hat{x}}_{t+1} | \boldsymbol{x}_t) = \mathtt{t}_{\nu_n - d + 1} \left( \boldsymbol{\mu}_n, \frac{T_n (\kappa_n + 1)}{\kappa_n (\nu_n - d + 1)} \right),
\end{align*} 
where $\boldsymbol{\mu}_n, T_n, \kappa_n, \nu_n$ are the posterior parameters as derived in Lemma \ref{lemma:sit-params}, we arrive to Eq. \eqref{eq:post-pred}. \qed

\end{proof}

\subsubsection{Online Situation Modeling and Identification:} \label{sec:sit-modeling-and-id}

To determine when $T_L$ changes and given that we are modeling the system's situations via distribution parameters $\boldsymbol{\eta}_0 \coloneqq \{\boldsymbol{\mu}_0, \kappa_0, T_0, \nu_0\}$, we can use our previous multivariate extension to BOCD. This process is summarized in Algorithm \ref{alg:sit-modelling}. Concretely, we first determine if we have changed situations (Lemma \ref{lemma:sit-prob}), then learn the current situation's model parameters (Lemma \ref{lemma:sit-params}) and finally predict future transitions (Lemma \ref{lemma:post-pred}).

\begin{algorithm}[H]
    {\small \caption{Online Situation Modeling $\mathtt{(SM)}$}\label{alg:sit-modelling}
    \begin{algorithmic}[1]
    \State Get $\boldsymbol{x}_t = [s_{t-1}; a_{t-1}; s_{t}]^\top$
    \State Get the \textit{Growth Probability}, $j = Eq.~\eqref{eq:permanence-prob}$
    \State Get the \textit{Changepoint Probability}, $k = Eq.~\eqref{eq:jump-prob}$
    \State \textit{Update} parameters $\boldsymbol{\eta}_{t-1}$ according to $Eqs.~\eqref{eq:post-params-mu}-\eqref{eq:post-params-kappa}$
    \State $\mathtt{situation\_change} = \mathbbm{1}(k > j)$
    \State \textit{return} $\mathtt{situation\_change}, \boldsymbol{\eta}_t$
    \end{algorithmic}
    }
\end{algorithm}

To estimate $T_L$ (Eq. \ref{eq:local-dynamics-model}) online we use the Maximum Likelihood Estimate (MLE) of its parameters given the data from the current run $\boldsymbol{x}^{(r)}_t$. The estimated parameters are added to our library of situations, $\boldsymbol{\Psi}$. The data for the current run, $\boldsymbol{x}^{(r)}_t$, is collected in a variable-sized window; incoming state-action-state tuples are added to $\boldsymbol{x}^{(r)}_t$ until a changepoint is detected, at which point $\boldsymbol{x}^{(r)}_t$ is reset to the empty set.

To identify which situation $\psi \in \boldsymbol{\Psi}$ the robot is currently experiencing we employ the average Mahalanobis distance \cite[]{mahalanobis1930test}:
\begin{align} \label{eq:situation-distance}
    Mah(&\psi) \coloneqq \\ \nonumber &\left( \frac{1}{\mid \boldsymbol{x}^{(r)}_t \mid} \sum_{\boldsymbol{x}_i \in \boldsymbol{x}^{(r)}_t} \sqrt{(\mathbf{x}_i - \boldsymbol{\mu}_{\psi})^\top \Lambda_{\psi} (\mathbf{x}_i - \boldsymbol{\mu}_{\psi})} \right), 
\end{align}
where $\psi = (\boldsymbol{\mu}_{\psi}, \Lambda_{\psi})$. In this way we can estimate how far each point in the current run data $\boldsymbol{x}^{(r)}_t$ is to the distributions describing the detected local transition models. Then, using a threshold $\tau$ we determine if the distribution of the closest situation is appropriate to represent the current local dynamics of the robot. If it is not, we create a new situation $\psi = (\boldsymbol{\mu}, \Lambda)$ that we add to $\boldsymbol{\Psi}$ during training. At test time, we use the closest experienced situation in $\boldsymbol{\Psi}$ to the one the robot is currently experiencing. The full identification process is detailed in Algorithm \ref{alg:sit-identification}.
 
\begin{algorithm}[H]
{
    \small
    \caption{Online Situation Identification $\mathtt{(SI)}$}\label{alg:sit-identification}
    \begin{algorithmic}[1]
        \State Get $\boldsymbol{x}_t = [s_{t-1}; a_{t-1}; s_{t}]^\top$
        \State Get data from current run $\boldsymbol{x}^{(r)}_t$
        \State $\mathtt{situation\_change}, \boldsymbol{\eta}_t \gets \mathtt{SM}(\boldsymbol{x}_t, \boldsymbol{x}^{(r)}_t, \boldsymbol{\eta}_{t-1})$ \Comment{Alg. \ref{alg:sit-modelling}}
        \State $\psi^* \gets \argmin_{\psi \in \boldsymbol{\Psi}} Mah(\psi)$  \Comment{$Mah(\psi) =$ Eq. \eqref{eq:situation-distance}}
        \If{$\mathtt{situation\_change}$}

            \If{ $Mah(\psi^*) \geq \tau$ and $\mathtt{training}$}
            \State Add $(\boldsymbol{\mu}, \Lambda) \gets \text{MLE}(\boldsymbol{x}_t^{(r)})$ to $\boldsymbol{\Psi}$  \Comment{Create Situation}
            \State $\psi^* \gets (\boldsymbol{\mu}, \Lambda)$
            \EndIf
            
            \State $\boldsymbol{x}_t^{(r)} \gets \emptyset$    \Comment{Variable-sized window of past states}
        \Else
            \State Add $\boldsymbol{x}_t$ to $\boldsymbol{x}_t^{(r)}$
        \EndIf
        \State \textit{return} $\psi^*$

    \end{algorithmic}
}
\end{algorithm}

\subsubsection{Representing the Local Transition Model:}

To provide the dynamics model $\mathcal{M}_m$ with the current situation information, we need to map each situation $\psi \in \boldsymbol{\Psi}$ to a symbolic representation $\theta \in \Theta$. For this, we define a function $\boldsymbol{\Theta} : \boldsymbol{\Psi} \rightarrow \Theta$. We need this function to distinguish between multiple instances of the local distribution $T_L$ (Eq. \ref{eq:local-dynamics-model}), all of which belong to a family of distributions that share a common form but vary in their parameters. 

We find that the Moment Generating Function (MGF) provides an useful tool for this purpose, as it uniquely characterizes a distribution \cite[Theorem 1]{curtiss1942note}. In general, for a multivariate Gaussian random vector \( \boldsymbol{X} \sim \mathcal{N}(\boldsymbol{\mu}, \Sigma) \), the MGF is defined as 
\begin{equation} \label{eq:mgf-multigaussian}
M_{\boldsymbol{X}}(\boldsymbol{t}) = \mathbb{E}[e^{\boldsymbol{t}^\top \boldsymbol{X}}] = e^{\boldsymbol{t}^\top \boldsymbol{\mu} + \frac{1}{2} \boldsymbol{t}^\top \Sigma \boldsymbol{t}}.
\end{equation}

Thus, to symbolically represent the current situation $\psi$ we use its MGF, with a fixed, nonzero $\boldsymbol{t} \in \mathbb{R}^d$, as follows: 
\begin{equation} \label{eq:situation_representation}
    \boldsymbol{\Theta}_{\boldsymbol{t}}(\psi) = \log M_{\boldsymbol{X}}(\boldsymbol{t}; \psi) = \boldsymbol{t}^\top \boldsymbol{\mu}_{\psi} + \frac{1}{2} \boldsymbol{t}^\top \Lambda^{-1}_{\psi} \boldsymbol{t},
\end{equation} 
where we apply the $\log$ function for numerical convenience. This mapping provides the concrete value for $\theta$ used by the high-level dynamics model in Eq. \eqref{eq:transition_func_ghpmdp}. Although it could happen that for a given $\boldsymbol{t}$ the representations of two situations would end up being equal, this is highly unlikely and we did not experience any such collisions. Then, whenever $\mathcal{M}_m$ is being called, we augment the robot's state with the current situation's symbol $\theta$. Note that the current situation is constantly updated as the output from Algorithm \ref{alg:sit-identification}, which is executed every time a new observation $\boldsymbol{x}_t$ is obtained.

\subsubsection{Additional Details:}

To prevent numerical errors while computing the precision matrix during situation creation (Line 7 in Algorithm \ref{alg:sit-identification}), we also employ an additional hyperparameter describing the minimum support needed. As a rule of thumb, we set it slightly above the dimensionality of the situation space. Additionally, to compute the situation symbol (Eq. \ref{eq:situation_representation}) we use a constant value of $\mathbf{t} = \mathbf{0.1}$.

\section{Results: \textit{Unmanned Ground Vehicle}} \label{sec:results_ugv}

In this section, we evaluate the effectiveness of our situationally-aware dynamics model in addressing the following key questions:  

\begin{enumerate}

    \item \textbf{Does our situational awareness module enable effective online hidden state representation learning?} 
    This includes our multivariate BOCD method for situation change detection together with the proposed situation identification procedure.
    We compare its performance against other approaches that learn auxiliary representations alongside the dynamics learning process.  

    \item \textbf{Does our method lead to faster dynamics learning?}  
    We benchmark it against competitive model-based reinforcement learning (MBRL) methods and include model-free reinforcement learning (MFRL) baselines to highlight the efficiency gains of model-based approaches.  

    \item \textbf{Can our approach produce safe action policies in unstructured environments under partial state information?}  
    We analyze its performance in scenarios where partial state information introduces uncertainty in motion outcomes and environmental characteristics.  
\end{enumerate} 

We begin by introducing the evaluated scenarios in Section \ref{subsec:eval-scenarios-ugv} and highlight how our approach overcomes the challenges faced by traditional approaches when used in such settings. 
We then describe the baselines in Section \ref{sec:baselines-ugv}. In Section \ref{sec:exp-setup-ugv} we cover the evaluation setup. We detail the state and action space as well as the tools for simulation and real world evaluation. In Section \ref{subsec:results-training-ugv} we show that learning the situationally-aware dynamics model results in faster learning of better policies when compared to context-aware, model-based and model-free baselines. In Section \ref{sec:ablation-studies-ugv} we present a series of ablations to better understand the contribution of the method's components as well as the effect in the dynamics prediction with the learned hidden state representation. Finally, in Section \ref{subsec:results-unstructured-terrain} we carry out extensive experiments in simulation and the real world to conclude that our model augmented with the hidden state representation achieves better performance and safer policies in the unstructured terrain navigation task. Notably, we also summarize some emerging behavioral patterns that we observed during the real-world experiments.

\begin{figure}
    \centering
    \includegraphics[width=0.92\linewidth]{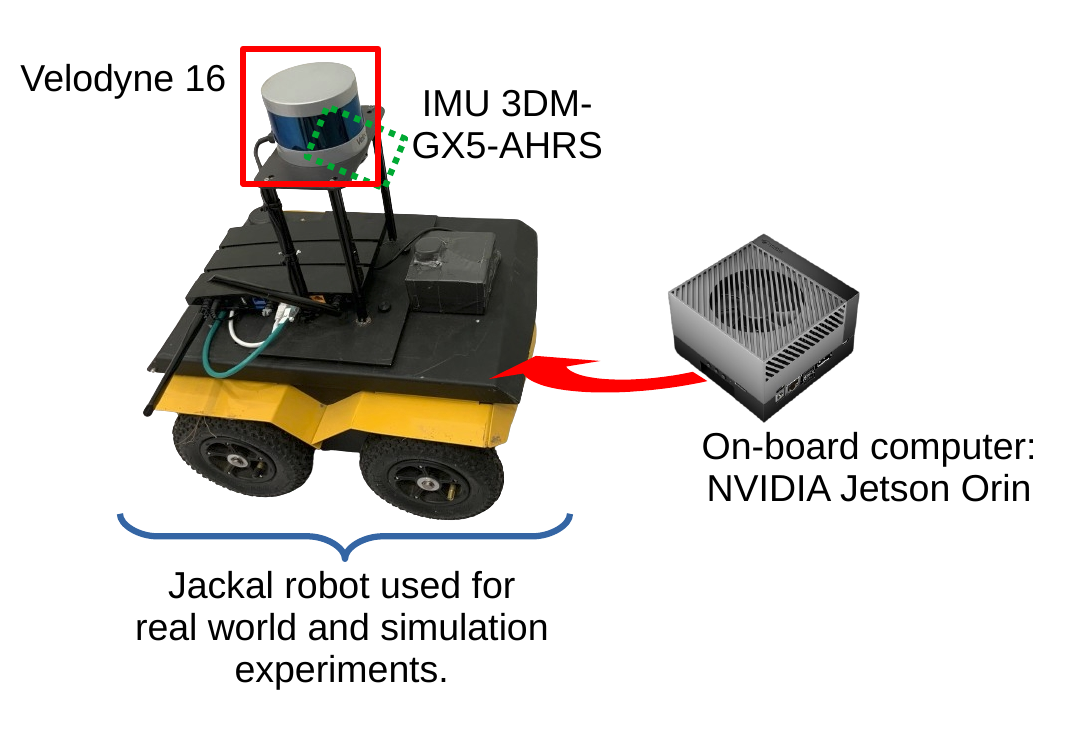}
    \caption{Hardware setup used during real world experiments.}
    \label{fig:hardware-setup}
\end{figure}

\subsection{Evaluation Scenarios} \label{subsec:eval-scenarios-ugv}

To demonstrate real-world effectiveness, we validate our approach on the challenging task of unstructured terrain navigation, where unknown and unpredictable terrain characteristics can heavily influence the robot's motion. 
We perform extensive evaluations in both simulation and real world. 

\subsubsection{Simulation:}  \label{subsubsec:sim-eval-scenarios-ugv}

We use the Gazebo simulator \cite[]{Koenig-2004-394}. Specifically, we use Clearpath's Inspection World \cite[]{inspection_world}. 
It combines uneven surfaces, slopes, and varying friction properties. These conditions mimic real-world complexities, requiring the robot to adapt dynamically to changes in stability and traction while maintaining precise control, making it an ideal testbed for training and assessing our method.

Furthermore, we create four additional worlds with increasing levels of difficulty. Concretely, these are randomly generated environments created using height maps sampled using Perlin noise \cite[]{perlin1985image} that provide a complex and realistic testing ground. These environments feature highly variable and irregular terrain patterns, including sudden changes in elevation, unpredictable slopes, and patches of different roughness and friction. The use of Perlin noise ensures natural-looking, non-repetitive landscapes that challenge the robot’s ability to adapt to dynamic and uneven surfaces. This makes them an effective way to assess the robustness and adaptability of our method in navigating real-world-like conditions. The generated height maps serve as input to the \texttt{gazebo\_terrain} package for automatic Gazebo model terrain generation \cite[]{gazebo_terrain}. The parameters used to generate the height maps are presented in Table \ref{tab:simulation-environments-params}. 

\begin{table} 
    \centering
    \small
    \begin{tabular}{ccccc}
    \toprule
    \textbf{Difficulty} & \textbf{Scale} & \textbf{Octaves} & \textbf{Persistence} & \textbf{Lacunarity} \\
    \midrule

        \textbf{1} & 50 & 4 & 0.4 & 1.5 \\
        \textbf{2} & 45 & 4 & 0.43 & 1.6 \\
        \textbf{3} & 50 & 5 & 0.6 & 7 \\
        \textbf{4} & 45 & 7 & 0.65 & 9 \\

    \bottomrule
    \end{tabular}
    \captionof{table}{Parameters for randomly generated simulation terrains with increasing levels of difficulty. \textit{Scale} adjusts slope steepness, with larger values producing smoother hills. \textit{Octaves} define terrain detail, while \textit{persistence} affects roughness by reducing octave amplitude. \textit{Lacunarity} controls octave frequency, with higher values adding ruggedness.}
    \label{tab:simulation-environments-params}
\end{table}

\subsubsection{Real World:}

\begin{figure*}
    \centering
    \includegraphics[width=0.61\linewidth]{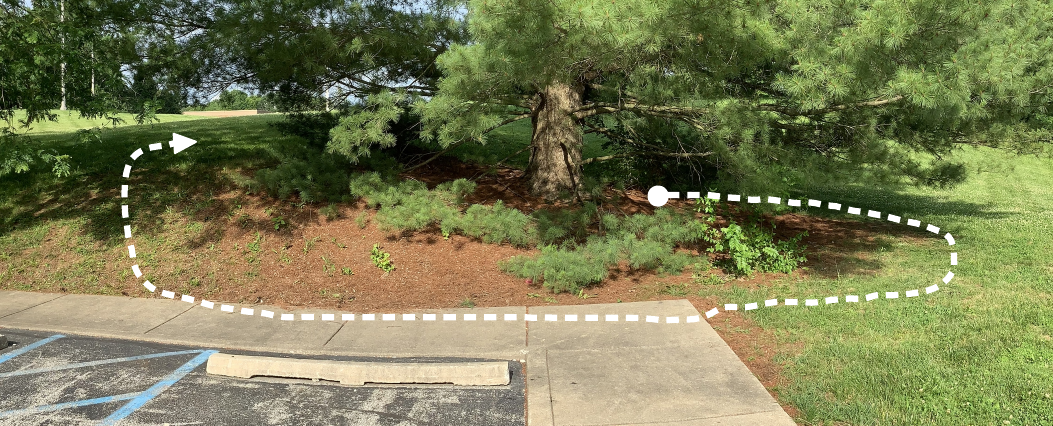}
    \includegraphics[width=0.368\linewidth]{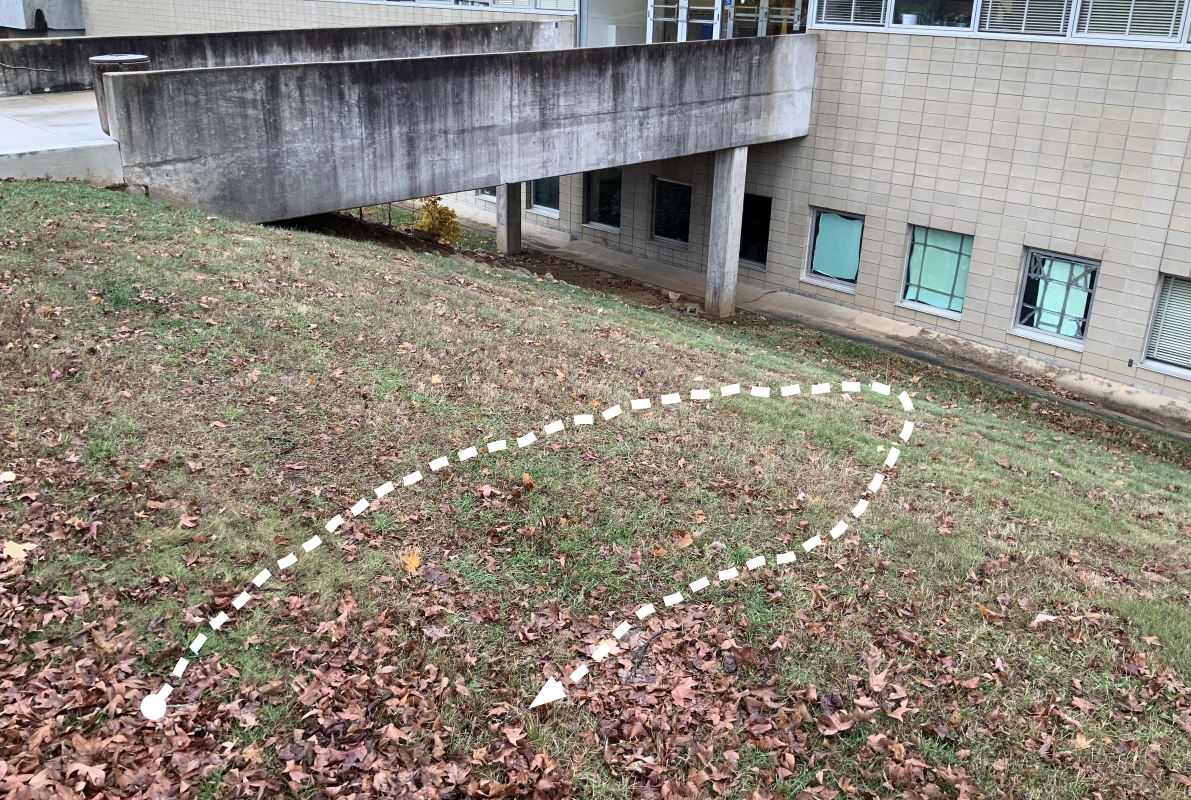}
    \caption{\textit{Some} real-world scenarios used to evaluate and compare the proposed dynamics model against the competitive baselines. \textit{(Left)} Navigation across diverse terrain types, including mulch, grass, concrete, and mud, each with distinct surface properties and geometries. The final section involves a short but steep climb on mulch and grass, which presents a significant challenge due to reduced traction. \textit{(Right)} Traversing a highly demanding terrain with steep ascents and descents, compounded by numerous hidden bumps and irregularities. These features make the environment difficult to navigate safely and efficiently. Notably, our method is the only one capable of successfully completing this scenario.} 
    \label{fig:task1}
\end{figure*}

Figure \ref{fig:task1} contains two examples of the diverse real-world environments used to evaluate the situationally-aware dynamics model against the baselines, with additional scenarios detailed in Figure \ref{fig:nav_summary} and showcased in the accompanying videos. 

Given that the observation space is limited to proprioceptive odometry estimates, as described in Section \ref{sec:exp-setup-ugv}, critical environmental factors such as terrain type, surface irregularities, and characteristics like friction are not explicitly available to the model. Instead, \textit{these factors will need to be implicitly encoded within the hidden state representation.} This representation serves to capture the unobserved dynamics that govern the robot's interaction with its environment, allowing the model to adapt its behavior to varying conditions without direct access to external sensory data.

In the first scenario, Figure \ref{fig:task1} \textit{(left)}, the robot must navigate across diverse terrain types, transitioning seamlessly between mulch, grass, concrete, and mud. Each terrain introduces distinct surface properties, such as varying friction levels, compaction, and uneven geometries. The environment is further complicated by a mix of downhill, flat, and uphill sections, with the final steep and slippery mulch-covered climb posing a significant challenge. This is caused by the mulch's composition of small, loose pieces that shift and move under pressure, similar to the behavior of sand, making it difficult for the robot to maintain traction. This scenario highlights the model's ability to adapt to abrupt changes in terrain dynamics and optimize transitions efficiently.

In the second scenario, Figure \ref{fig:task1} \textit{(right)}, the robot encounters an even more challenging terrain characterized by a very steep descent and ascent, interspersed with numerous bumps and irregularities that hinder the robot's motion due to the hardware's low profile. The difficulty of safely and effectively traversing these features highlights the importance of a model capable of representing and predicting dynamics in complex, unstructured environments. By leveraging situational awareness to represent the hidden factors affecting the state evolution, we aim to show that the model demonstrates its capacity to synthesize adaptive control policies, making these demanding scenarios navigable. 

\subsection{Baselines} \label{sec:baselines-ugv}

This work is interested in online hidden state representation learning. Additionally, we show that this can be leveraged for improved dynamics modeling. Therefore, we select learning-based and physics-based baselines that can be applied to the unstructured terrain navigation problem. This problem let us verify that the resulting situationally-aware dynamics model can adapt and devise a motion plan that accounts for the motion uncertainty in challenging scenarios, leading to safe and effective navigation policies. That is, we are interested in highlighting how adding situational awareness to the robot's model enables faster learning and safer navigation behaviors, which we illustrate in the challenging unstructured terrain navigation task.

Our learning-based and physics-based robot dynamics model baselines, with and without hidden state representations, are:

\begin{enumerate} 
    \item \textit{CaDM:} Context-aware Dynamics Model, which aims to learn a ``global model that can generalize across different dynamics" \cite[]{lee2020context}. This is the closest baseline to our work, as they also learn a model that can detect and adapt to changes in the underlying dynamics of the agent. Additionally, although they need to learn three models (context encoder, forward and backward dynamics), they do so in parallel instead of relying on a multi-stage curriculum as other recent methods. We use an author-provided and publicly available implementation.
    \item \textit{PE+LSTM:} We also aim to learn the hidden state representation using an LSTM network that takes a fixed-sized window of the last 50 state transitions. We combine the recurrent network's output with the input to a probabilistic ensemble (PE), analogous to the one used for our situationally-aware dynamics model. Note that our method does have a variable-sized window, which allows fast disposal of past uninformative states, which is not possible when using the LSTM as we do not have a readily available way of determining a variable window size.
    \item \textit{TD-MPC2:} This is one of the leading MBRL methods at the time of writing \cite[]{hansen2023td}. It is characterized for learning a latent-space dynamics model, together with an observation encoder, transition reward predictor, terminal value predictor and a return-maximizing action policy. We use an author-provided and publicly available implementation.
    \item \textit{PETS:} This method proposed by \cite{chua2018deep} is a strong state-space dynamics learning method based on a probabilistic ensemble architecture, which in fact inspires our choice of architecture. Our implementation is based on a derivation from MBRL-Lib \cite[]{Pineda2021MBRL}.
    \item \textit{CDL-CMI:} This is a causally-inspired, model-based RL method that aims to increase predictive robustness by avoiding unnecessary dependencies between state variables and actions \cite[]{wang2022causal}. They also learn a state abstraction for better out-of-distribution predictive performance and improved learning efficiency. We use an author-provided and publicly available implementation.
    \item \textit{Model-Free}: We train model-free policies with PPO \cite[]{schulman2017proximal}, SAC \cite[]{haarnoja2018soft} and TD3 \cite[]{fujimoto2018addressing}. This is done to present a well-rounded coverage of general RL methods and to get an idea of how recent competitive methods that also rely on auxiliary representations but with model-free policies would compare. 
    \item \textit{Dubins:} This is a physics-based model \cite[]{dubins-dyns} that has been used recently as an unstructured terrain navigation baseline in multiple works \cite[]{siva2024self, yu2024adaptive}.
\end{enumerate}

\subsection{Experimental Setup} \label{sec:exp-setup-ugv}

\subsubsection{State and Action Space:} The state space $s \in \mathbb{R}^{12}$ corresponds to: 
\begin{equation} \label{eq:state-space}
s = \begin{bmatrix} \mathbf{p} & \mathbf{e} & \mathbf{v} & \mathbf{\Omega} \end{bmatrix}^\top,
\end{equation}
where $\mathbf{p} = [x, y, z]\in \mathbb{R}^3$ is the robot's position, 
$\mathbf{e} = [\phi, \zeta, \gamma] \in \mathbb{R}^3$ are the roll, pitch, and yaw angles (orientation), $\mathbf{v} = [v_x, v_y, v_z] \in \mathbb{R}^3$ are the linear velocity components, and $\mathbf{\Omega} = [\omega_x, \omega_y, \omega_z] \in \mathbb{R}^3$ are the angular velocity components.
We limit the state space to the elements in Eq. \eqref{eq:state-space} in order to give room to our hidden state representation method to encode remaining necessary but unobserved ego- and world-characteristics.

The action space $a \in \mathbb{R}^2$ is,
\begin{equation} \label{eq:action-space}
    a = \begin{bmatrix}
    v_{\text{cmd}} & \omega_{\text{cmd}}
    \end{bmatrix}^\top,
\end{equation}
where $v_{\text{cmd}} \in [-1, 1]$ and $\omega_{\text{cmd}} \in [-\frac{\pi}{2}, \frac{\pi}{2}]$ denote the commanded linear and angular velocities, respectively.

\subsubsection{Situation Space:} As defined in Section \ref{sec:method}, the \textit{situation} of the robot corresponds to the joint transition distribution. Therefore, the situation space $\mathbf{x} \in \mathbb{R}^{26}$ corresponds to the concatenation of state and action vectors:
\begin{equation} \label{eq:situation-space-ugv}
    \mathbf{x}_t = \begin{bmatrix} s_{t-1}; & a_{t-1}; & s_t \end{bmatrix}^\top,
\end{equation}
where the state vectors ($s_{t-1}$ and $s_t$) undergo preprocessing to \textit{(i)} remove position information (as the dynamics should be invariant to map-specific information) and \textit{(ii)} replace orientation information (roll, pitch and yaw) with their sine and cosine transformations which enable the algorithm to uniquely identify them while avoiding discontinuities when the angles are wrapped around. 

\subsubsection{Software:} \label{sec:setup-software}
All code was developed using ROS Noetic \cite[]{Quigley09}, Python 3.10.11 and PyTorch 2.3.0+cu121 \cite[]{NEURIPS2019_9015}. For the sampling-based controllers we use the \texttt{MBRL-Lib} package \cite[]{Pineda2021MBRL}. In the real-world, for localization we used FasterLIO with IMU preintegration on manifold \cite[]{9718203, yu2023fast} with the Velodyne 16 LiDAR.

\subsubsection{Hardware:} Figure \ref{fig:hardware-setup} shows the hardware used to test our proposed situationally-aware dynamics model and the baselines in the real world. We ran our code on-board the robot with an NVIDIA Jetson Orin. We used Velodyne 16 LiDAR and IMU 3DM-GX5-AHRS for pose, orientation, linear and angular velocity estimation to obtain the robot's state as formulated in Eq. \eqref{eq:state-space}.

\subsection{Situationally-Aware Dynamics Learning} \label{subsec:results-training-ugv}

\begin{figure}
 \centering
  \includegraphics[width=\linewidth]{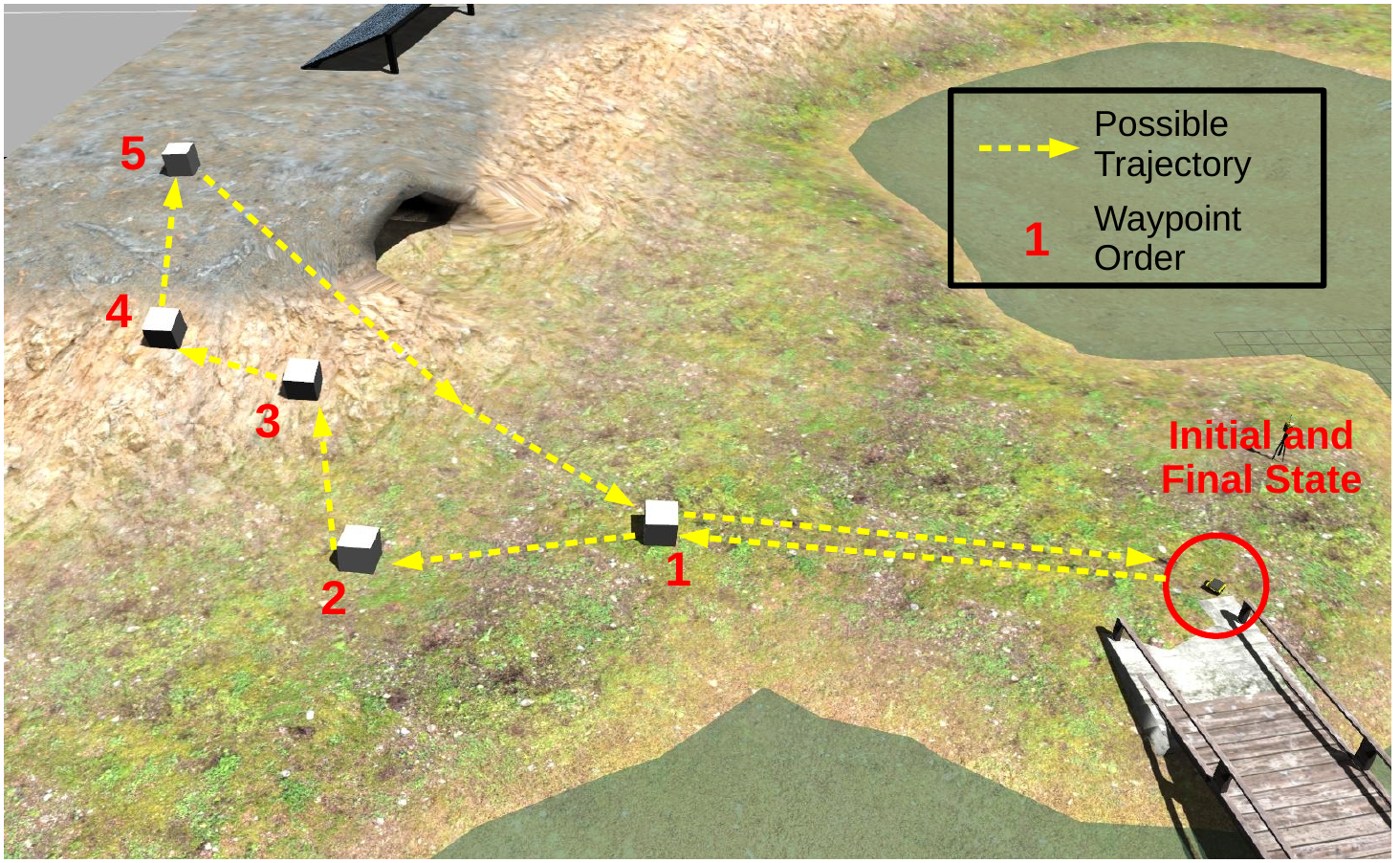}
    \captionof{figure}{Training task in the {\em Inspection World Environment} \cite[]{inspection_world}. The region around waypoint \#3 is slippery and difficult to reach.}
    \label{fig:task-overview}
\end{figure}

The situational awareness module processes the 26-dimensional situation observation (Eq. \ref{eq:situation-space-ugv}) as they arrive, in order to return the current representation of the hidden state $\theta \in \Theta$ symbolizing the local dynamics of the robot. This process follows the procedure outlined in Algorithm \ref{alg:sit-identification}.
Then, the learned dynamics model $\mathcal{M}_m$ takes the current situation $\theta_t$ together with the robot's orientation and commanded linear and angular velocities in order to predict the change in position and orientation.

The model $\mathcal{M}_m$ is composed of $m=5$ probabilistic neural networks, with three dense layers with 200 neurons each and LeakyReLU non-linearities in between. The output is the mean and log-variance of a Gaussian distribution. Inputs and outputs are normalized using data statistics computed during training. At inference time, predictions are de-normalized with respect to the data statistics obtained during training. We use the Adam optimizer with default hyperparameters \cite[]{kingma2014adam}. The batch size is 256 with an early-stopping threshold of 1e-3. The model $\mathcal{M}_m$ starts with randomly initialized parameters and is trained at the beginning of each episode with the updated replay buffer $\mathcal{R}$, analogous to the MBRL procedure shown in Algorithm \ref{alg:mbrl}. That is, we follow the \textit{active} training procedure described in Section \ref{sec:training_testing}. A summary of the hyperparameters for the situational awareness module (online hidden state representation learning) and the controller is provided in Table \ref{tab:combined_hyperparams_ugv}.

\begin{table} 
    \centering
    \small
    \begin{tabular}{llr}
    \toprule
    \textbf{Algorithm} & \textbf{Hyperparameter} & \multicolumn{1}{c}{\textbf{Value}} \\
    \midrule
    \textbf{Situational Awareness} & & \\
    & Hazard Rate $\lambda$ & $60$ \\
    & Threshold $\tau$ & $80$ \\
    & Situation Support & $60$ \\
    & & \\
    \midrule
    \multicolumn{3}{l}{\textbf{MPPI}} \\
    & Number of iterations & $2$ \\
    & Population size & $1500$ \\
    & Reward scaling $\gamma$ & $0.9$ \\
    & Noise scaling $\sigma$ & $0.4$ \\
    & Correlation term $\beta$ & $0.4$ \\
    \bottomrule
    \end{tabular}
    \caption{Hyperparameters used for Situational Awareness and MPPI in the UGV unstructured terrain navigation task.}
    \label{tab:combined_hyperparams_ugv}
\end{table}

\subsubsection{Training Results:} 
To train the model and baselines we design a \textit{sparse} waypoint mission in the Inspection World simulator shown in Figure \ref{fig:task-overview}. However, the region around waypoint 3 is steep and makes the robot slip whenever it tries to go there directly. This makes it necessary to come up with a non-trivial plan to reach it (e.g., approach it diagonally or from above). To trade-off exploration and exploitation, an episode ends if the robot takes longer than 90 seconds to reach the next waypoint. We train all methods sequentially on a single simulator. 

Figure \ref{fig:train-performance} shows that \textit{the situationally-aware dynamics enable faster learning and higher average reward} (number of waypoints reached). 
Additionally, it is possible to see that on average our model is able to complete the task faster and, more significantly, \textit{it is the only one able to complete the task within 50 episodes}.

\begin{figure*}
    \includegraphics[width=\linewidth]{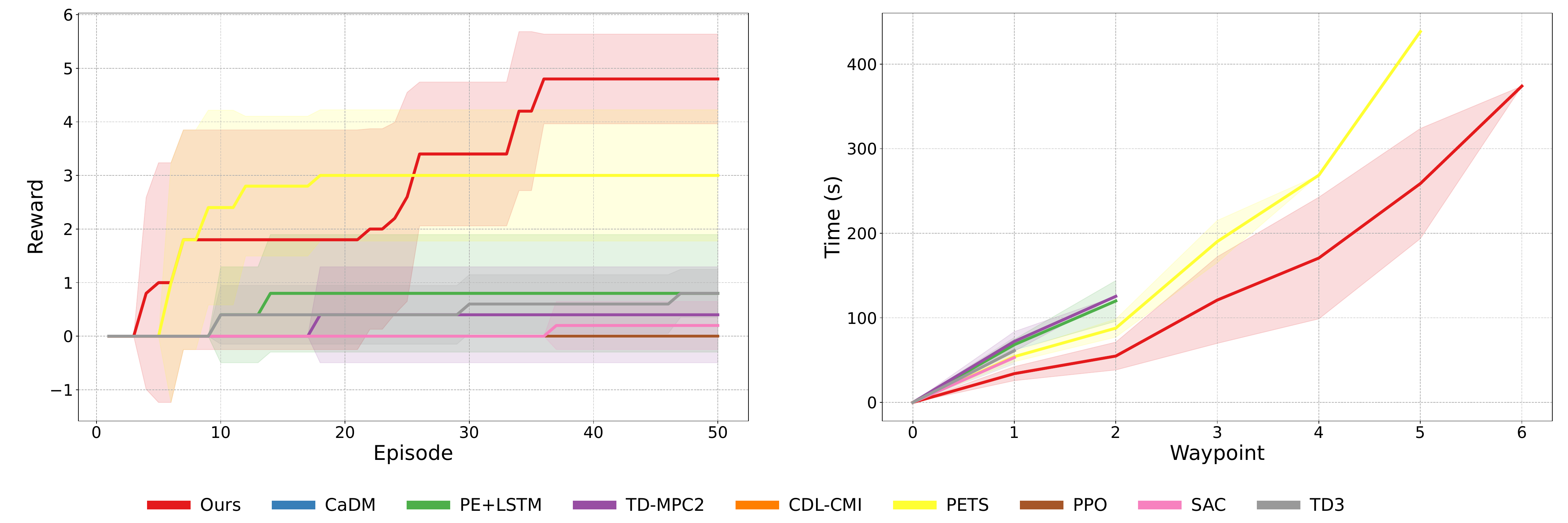}
    \caption{Dynamics Learning Results. Our model achieves a high reward faster and is able to complete its tasks in a shorter time frame. \textit{(Left)} Training task reward. We compare against model-based and model-free RL methods. \textit{(Right)} Mean $\pm$ standard deviation of the time that it takes the robot to reach the waypoints in the task. Each experiment was repeated five times.}
    \label{fig:train-performance}
\end{figure*}

\begin{figure}
 \centering
  \includegraphics[width=\linewidth]{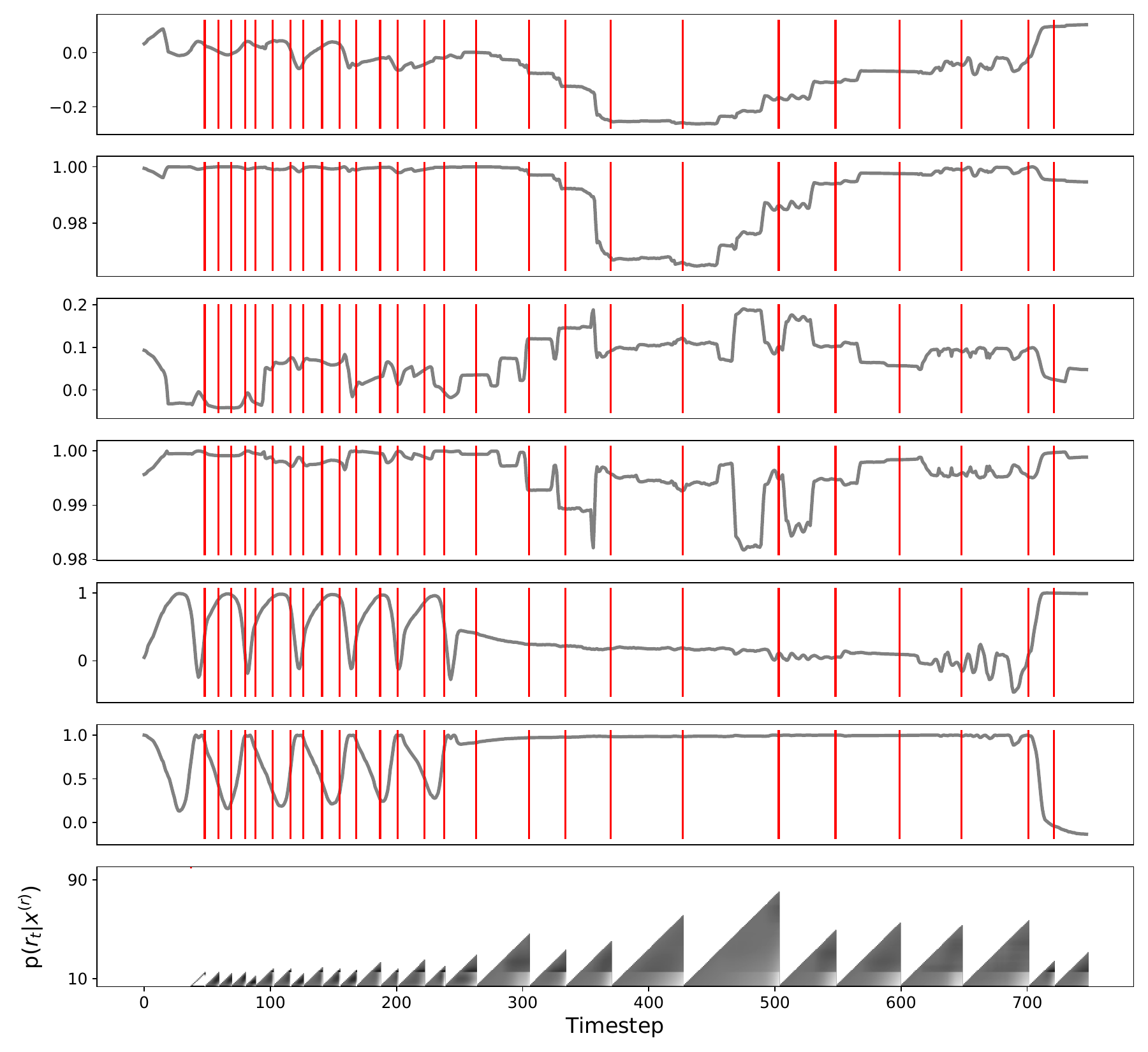}
    \captionof{figure}{\small{Observations received by the proposed situation identification algorithm during a mission in the Inspection World. The red vertical lines mark the time when we predict a situation change happened. For ease of visualization we only present the first six features of the 26-dimensional observation, the full figure is available in Appendix \ref{sec:appendix-inspection_world_sa_results}.
    Each row shows one variable's evolution over time.
    The last row shows the run length probabilities indicating how likely it is that the robot stayed in the same situation (run length grows) or a situation change happened (run length becomes zero).
    }}
    \label{fig:reduced-observation-history}
\end{figure}

\begin{figure}
    \centering
    \includegraphics[width=\linewidth]{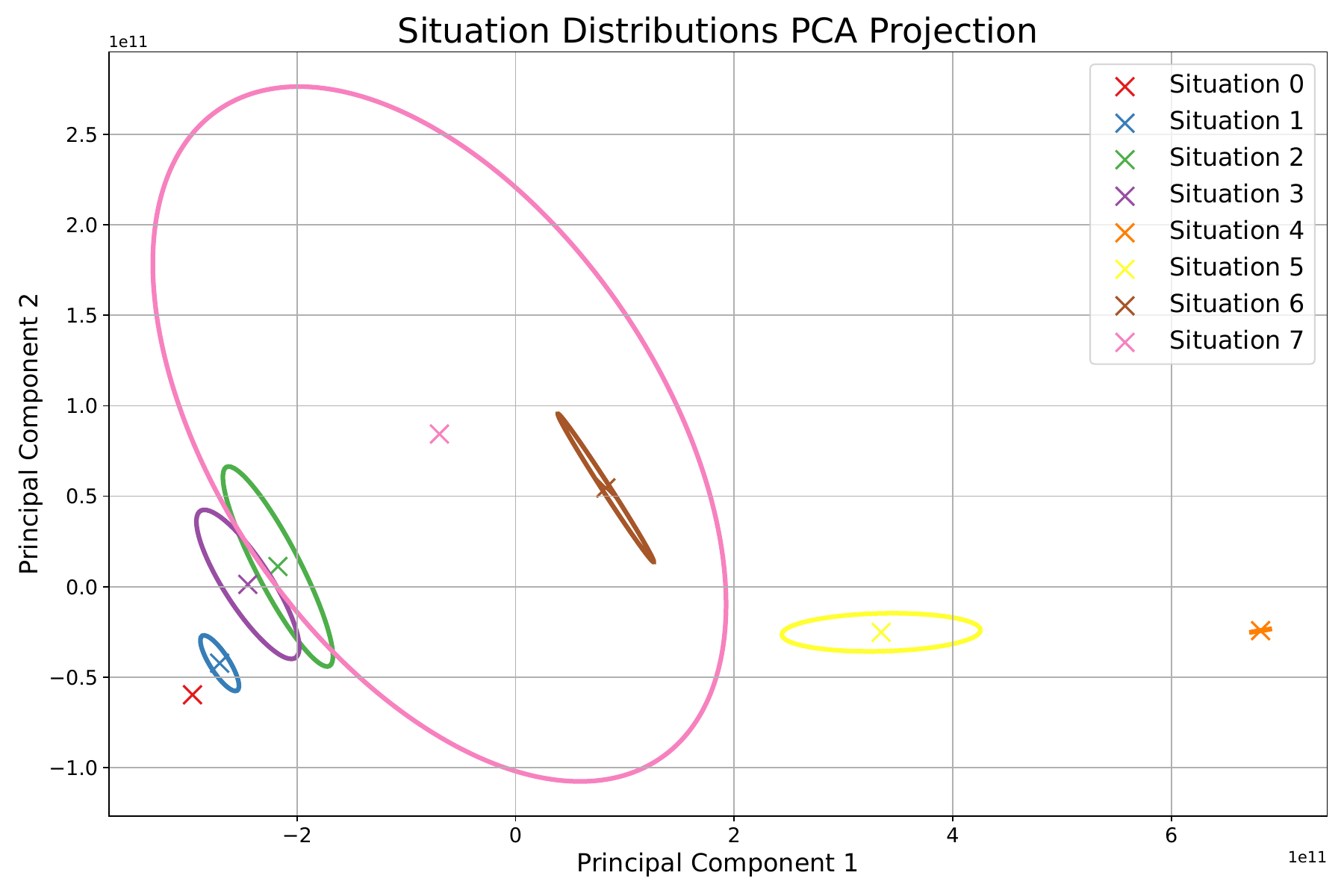}
    \captionof{figure}{PCA decomposition of the Situations (26-dimensional local transition dynamics distributions) found during training.}
    \label{fig:situations-pca}
\end{figure}

Quantitatively evaluating the performance of online hidden state representation learning is inherently challenging for this task, as the true hidden state and its transitions are naturally unobservable. To address this, we qualitatively assess the performance of the situational awareness module through two complementary analyses. First, we examine whether the module identified meaningful situation changepoints during the robot's operation, as illustrated in Figure \ref{fig:reduced-observation-history}. \textit{The results show that the model effectively segmented the local transition dynamics under varying patterns, accurately capturing and representing the changes in the dynamics.} The full 26-dimensional picture is available in Appendix \ref{sec:appendix-inspection_world_sa_results}. Second, we visualize the diversity of learned hidden state representations by mapping the 26-dimensional local transition dynamics $T_L$ identified during training onto the first two principal components, shown in Figure \ref{fig:situations-pca}. \textit{This visualization reveals distinct clusters corresponding to different situations, demonstrating that each representation encodes unique patterns in the transition dynamics without collapsing into a single, generalized representation.} 

\subsection{Ablation Studies} \label{sec:ablation-studies-ugv}

\subsubsection{Learning Efficiency:}

\begin{figure*}
    \centering
    \includegraphics[width=\linewidth]{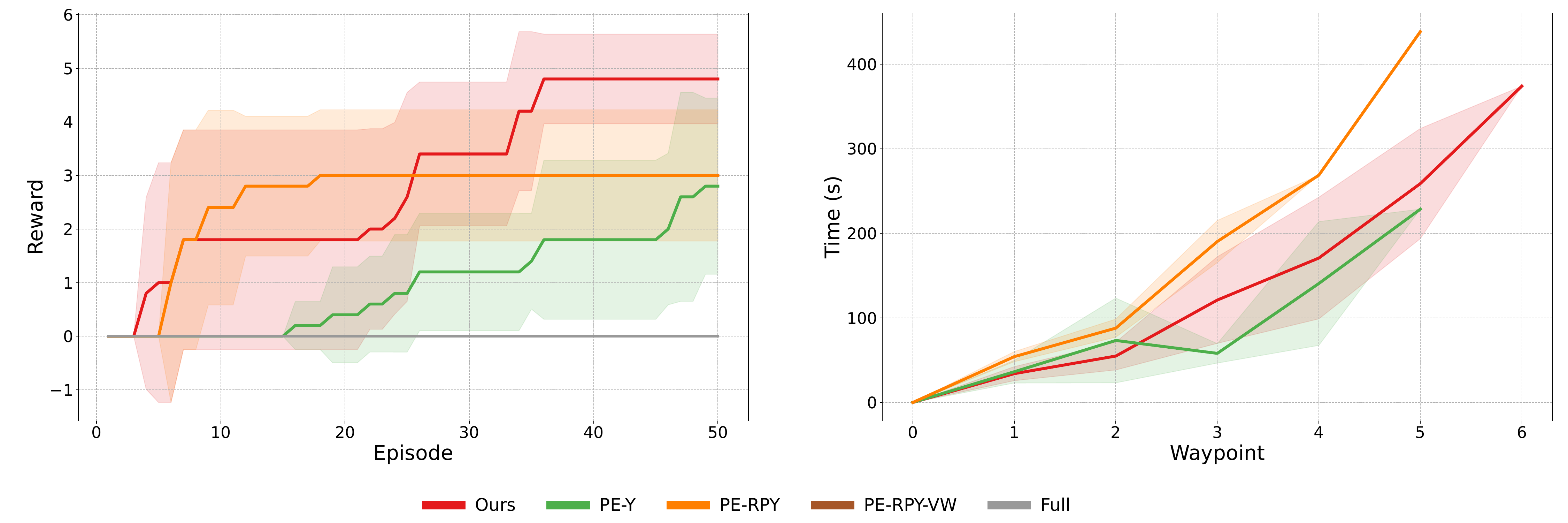}
    \captionof{figure}{Ablation Results. \textit{(Left)} Training task reward. We show that the learned hidden state representation helps the robot complete its tasks faster. \textit{(Right)} Mean $\pm$ standard deviation of the time that it takes the robot to reach the waypoints in the task. Each experiment was repeated five times.}
    \label{fig:ablation_results}
\end{figure*}

We evaluate whether the advantages of our method stem from the situation information or input feature selection. We compare against other probabilistic ensembles with equivalent architectures but relying on different input features that do not include the hidden state representation. Specifically, the models include commanded linear and angular velocity as well as: \textit{(a)} the robot's heading as state input (which we call ``PE-Y"); \textit{(b)} all the orientation information (``PE-RPY"); \textit{(c)} orientation along with linear velocities of the robot (``PE-RPY-VW"); and \textit{(d)} one that also includes position to see if additionally learning the map information results in better performance (``Full"). These models let us evaluate the contribution of our hidden state representation, as they involve multiple possible input configurations for the dynamics model but \textit{without} the situation information. 

The results, shown in Figure \ref{fig:ablation_results}, reveal that while some of the ablated models achieve better early performance than the baselines within the first 50 episodes, they consistently underperform compared to our situationally-aware dynamics model. 
This pattern highlights the importance of situational information in capturing complex dynamics. Models with fewer input features tend to perform better in the initial stages, likely due to reduced compounding error in auto-regressive predictions during trajectory sampling. However, their limited input representation results in inferior long-term performance, underscoring the critical role of situational awareness in achieving fast, robust and adaptive dynamics modeling.

\subsubsection{Effect of the Hidden State Representations:}

We use the Integrated Gradients \cite[]{integrated_gradients} feature attribution method to verify that our situationally-aware dynamics model relies on the different situations to make its predictions, thus influencing the downstream planner's control strategies. Figure \ref{fig:feature-contributions} shows that on average, 60\% of the prediction is explained by the current situation the robot is in, while the remaining input features together account for roughly 40\%. This clearly highlights that the model is aware of the situation the robot is experiencing and modulates its predictions accordingly.

Furthermore, in Figure \ref{fig:dynamics-predictions-analysis} it is possible to see how when we apply the same action sequence from the same initial state but assuming the robot stays in different situations, the dynamics predictions differ. What this means is that the hidden state information represented as $\theta \in \Theta$ that is fed to the dynamics model is successfully accomplishing its mission of informing about the unobserved latent factors that could hinder the dynamics learning process when such differentiation is not possible. Specifically, we can see how the model can encode subtle differences in position changes (e.g., Situation 1 vs. Situation 3), larger changes (e.g., Situation 4 vs. Situation 6) or completely different dynamics (e.g., Situations 0, 2 \& 5). This could be explained, for example, by how the robot's motion is limited in certain terrains or how the same action can lead to different reactions under different circumstances (e.g., when going downhill vs. moving on flat and smooth terrain). This phenomena is also intuitively illustrated in Figure \ref{fig:output-discrepancy}.

\begin{figure}
 \centering
  \includegraphics[width=0.7\linewidth]{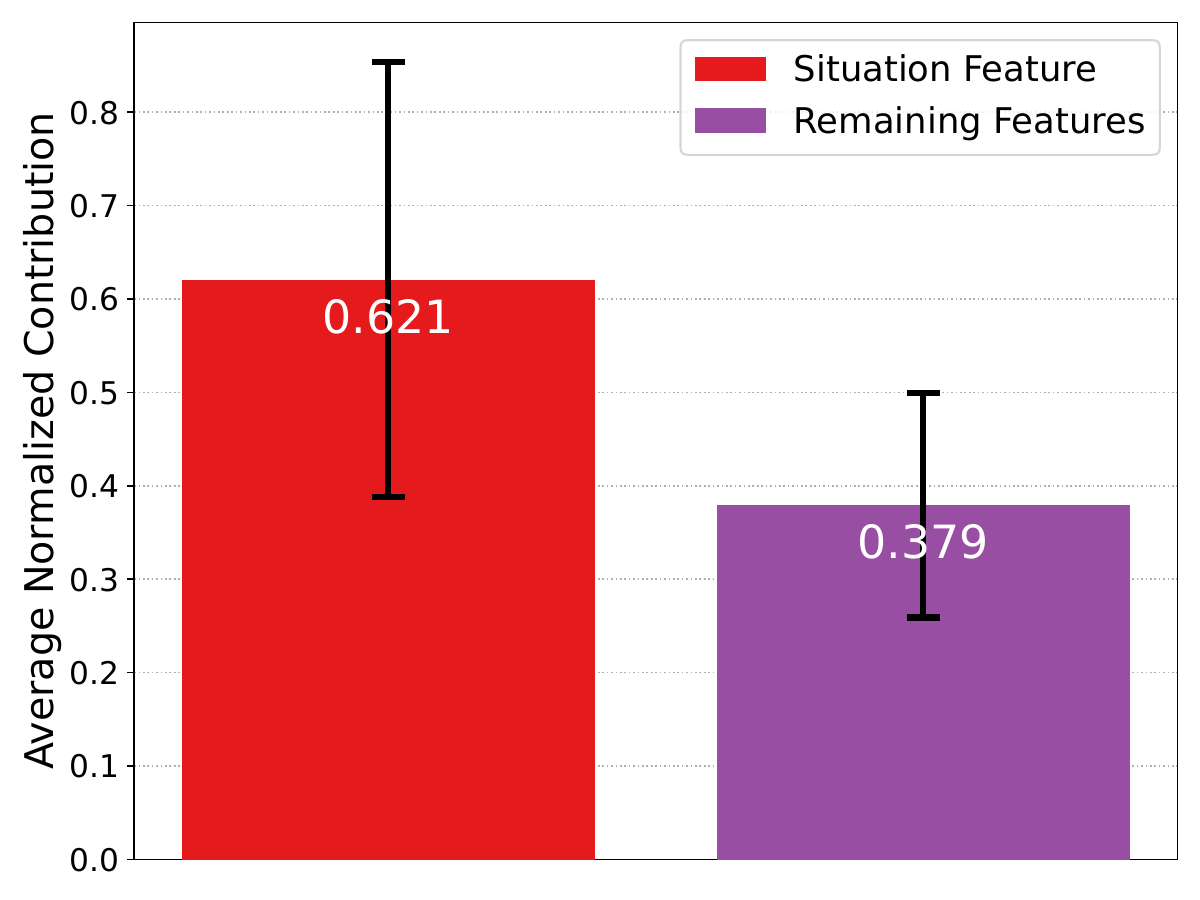}
    \captionof{figure}{Dynamics model input features contributions to the prediction.}
    \label{fig:feature-contributions}
\end{figure}

\begin{figure*}
 \centering
  \includegraphics[width=0.85\linewidth]{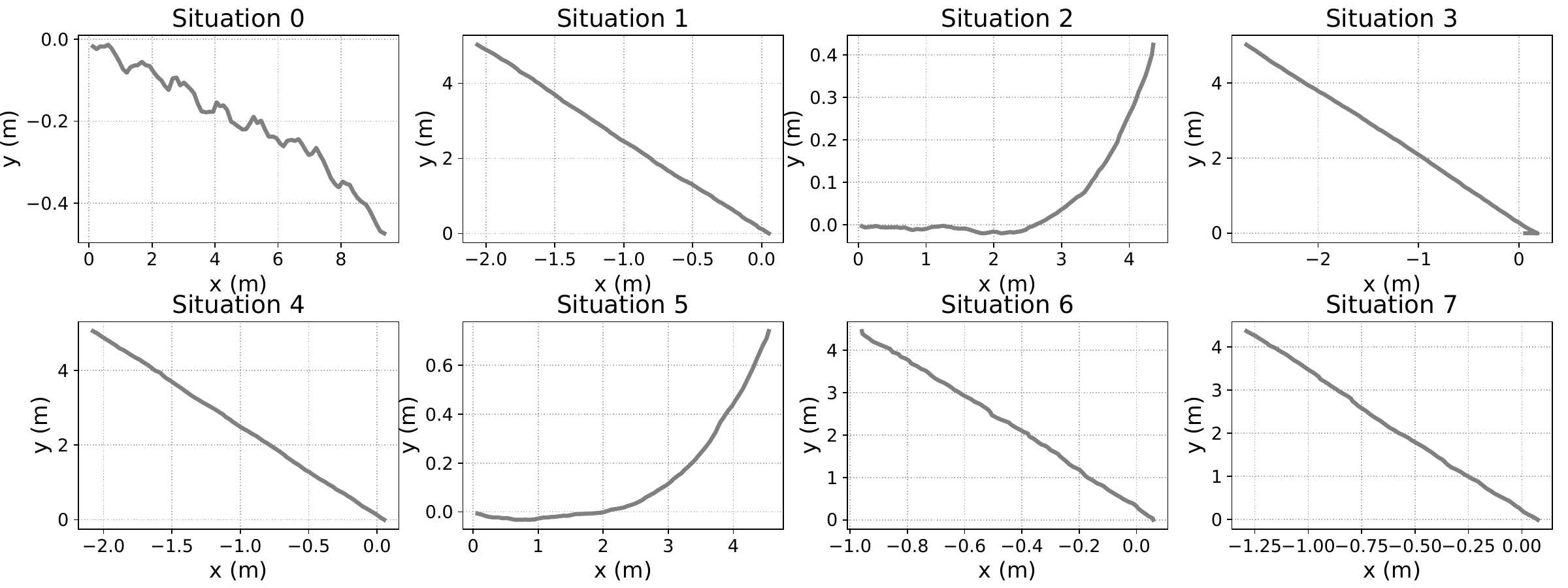}
    \captionof{figure}{Predicted dynamics for the same action across different situations. The hidden state effectively captures both subtle and significant variations in the robot's dynamics, providing critical information that can be leveraged during decision-making to enhance performance.}
    \label{fig:dynamics-predictions-analysis}
\end{figure*}

For a more detailed examination of the situational awareness module's behavior under controlled dynamics changes, we conduct additional experiments in a simulated environment with a manipulable UDGP. These experiments, detailed in Appendix \ref{sec:appendix-lake_sim_sa_results}, use the Lake World Simulator to evaluate how changes in wave generation patterns affect the identification of situation transitions by an autonomous boat. The results highlight the impact of the hyperparameter $\lambda$ in segmenting the observation space and provide further insights into the relationship between UDGP variations and situation changepoints.

\subsection{Situationally-Aware Unstructured Terrain Navigation} \label{subsec:results-unstructured-terrain}

We evaluate the dynamics models further through extensive simulation and real-world tasks, comparing against the physics-based Dubins model and the top-performing learning-based baselines: PE-Y and PETS (PE-RPY).

\subsubsection{Simulation Experiments:}

The complementary simulation experiments were carried out in the four randomly-generated simulation environments introduced in Section \ref{subsubsec:sim-eval-scenarios-ugv}, whose parameters are described in Table \ref{tab:simulation-environments-params}. These environments feature increasing levels of difficulty with successively steeper and more rugged terrains. The experiments comprised navigation tasks from fifty randomly-generated initial and goal positions for every environment. Table \ref{tab:simulation-results} shows that our situationally-aware dynamics model had a higher success rate than the remaining methods. Nonetheless, we identify that situation switching leads the robot to re-plan under the new context which seems to slightly affect some of the path smoothness metrics (position jerk, orientation vibration and orientation jerk).

\begin{figure}
 \centering
  \includegraphics[width=\linewidth]{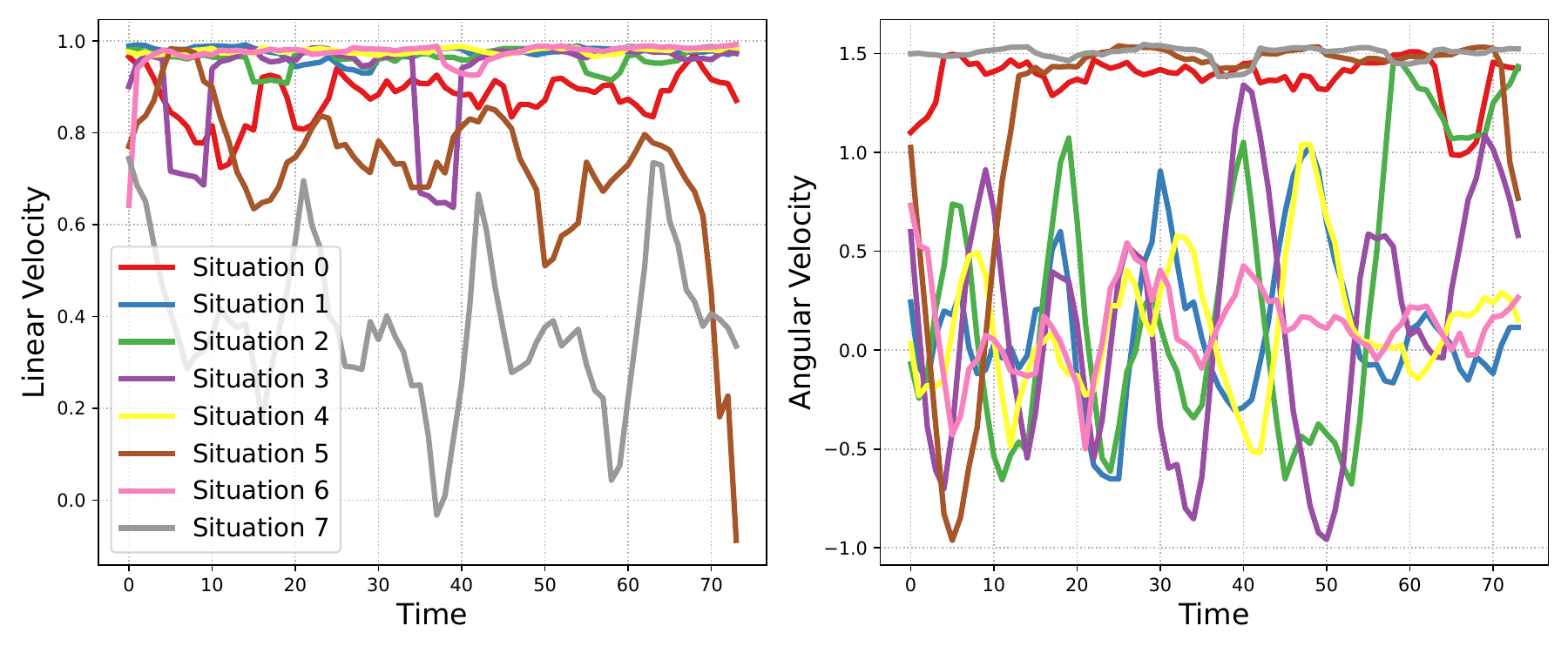}
    \captionof{figure}{Control plans assuming the robot stays in each situation. As the transition dynamics change, the robot is able to generate successful plans that adapt to the constraints of each situation. For ease of visualization we present the moving average of the signals (window size 5).}
    \label{fig:dynamics-actions-analysis}
\end{figure}

\begin{figure} 
    \centering
    \includegraphics[width=0.9\linewidth]{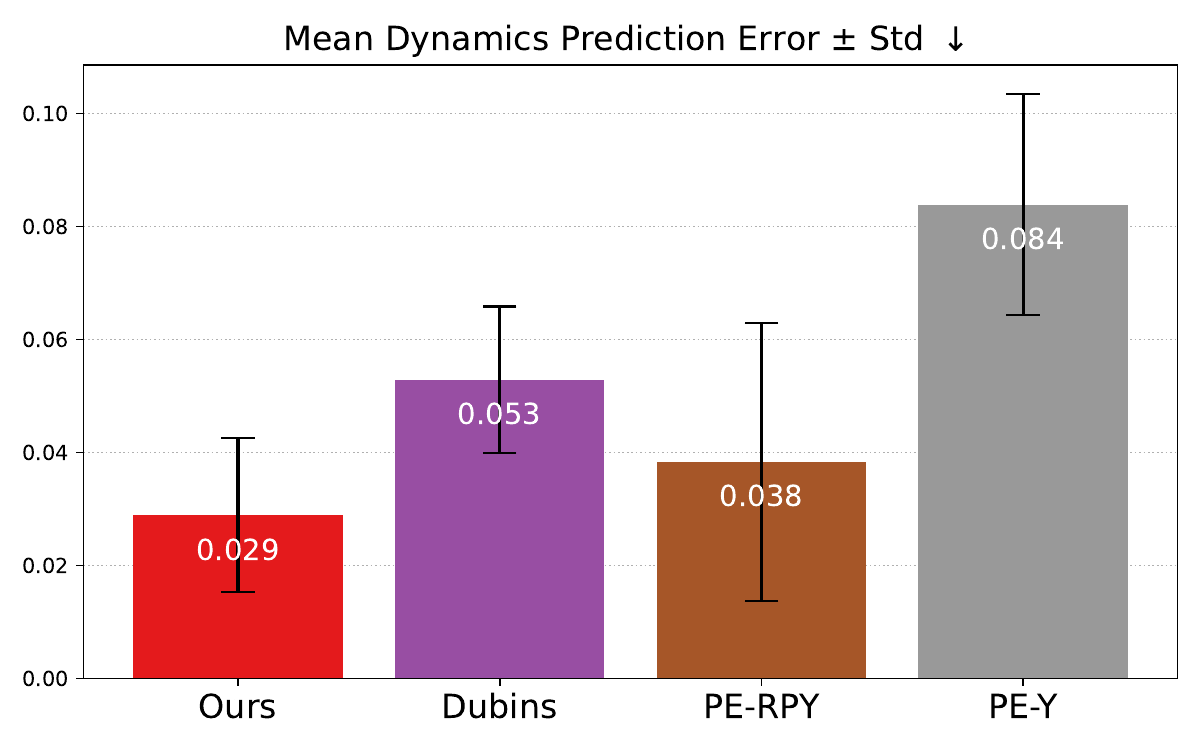}
    \caption{Control Inconsistency. Mean squared error between the predicted and actual behaviors resulting from a control command. Our approach achieves the lowest error, demonstrating superior accuracy in modeling the robot's dynamics.}
    \label{fig:inconsistency}
\end{figure}

We also carry out additional experiments to better understand the plans under each situation. As shown in Figure \ref{fig:dynamics-predictions-analysis}, different situations result in distinct dynamics models. This means that during model-based planning with the sampling-based controller the robot will synthesize different policies for the same task whenever it is in a different situation. We illustrate this in Figure \ref{fig:dynamics-actions-analysis}, where we can see how for the same mission, when we assume the robot stays in the same situation, it selects significantly different linear and angular velocity commands. 
Notably, Situations 0, 5, and 7 exhibit an emergent behavior where the robot tends to move more cautiously, with reduced velocities, suggesting it perceives these situations as requiring safer actions. This behavior aligns with observations in real-world evaluations, as detailed in the following section.

\textbf{Control Inconsistency.} In Figure \ref{fig:inconsistency} we measure the control \textit{inconsistency}. Formally, it relates to ``the error between the robot’s expected navigational behavior and its actual behavior in terms of robot poses" \cite[]{siva2024self}. Our method demonstrates the best performance, followed by the model that incorporates all orientation information (PE-RPY). 

\textit{This result underscores the suitability of our model for sampling-based control.} Specifically, our approach strikes the best balance for auto-regressive prediction tasks, such as planning with MPPI. It benefits from the lower one-step prediction error characteristic of more complex models that are harder to learn and generalize, while also offering the reduced prediction space of simpler models (e.g., Dubins), which helps mitigate compounding errors during trajectory rollout.

\begin{table}
    \centering
    \small
    \begin{tabular}{lcccc} 
    \toprule
    \textbf{Method} & \textbf{S. Rate} & \textbf{Pos. Jerk}  & \textbf{Ori. Vib.} & \textbf{Ori. Jerk} \\ 
\toprule \textit{Difficulty:} 1 \\ \hdashline 
PE-Y & 0.32 & 0.00 & 0.38 & 0.01 \\ 
PE-RPY & 0.68 & 0.00 & 0.44 & 0.13 \\
Dubins & 0.98 & 0.02 & 0.18 & 0.03 \\
\rowcolor{lightgray}   Ours & 1.0 & 0.04 & 0.30 & 0.08 \\ 
\toprule \textit{Difficulty:} 2 \\ \hdashline 
PE-Y & 0.26 & 0.01 & 0.32 & 0.05 \\ 
PE-RPY & 0.64 & 0.00 & 0.48 & 0.02 \\ 
Dubins & 1.0 & 0.00 & 0.19 & 0.02 \\ 
\rowcolor{lightgray}   Ours & 1.0 & 0.03 & 0.34 & 0.02 \\ 
\toprule \textit{Difficulty:} 3 \\ \hdashline 
PE-Y & 0.34 & 0.01 & 0.42 & 0.09 \\ 
PE-RPY & 0.86 & 0.01 & 0.57 & 0.09 \\ 
Dubins & 0.98 & 0.02 & 0.30 & 0.12 \\
\rowcolor{lightgray}   Ours & 1.0 & 0.07 & 0.42 & 0.03 \\ 
\toprule \textit{Difficulty:} 4 \\ \hdashline 
PE-Y & 0.22 & 0.09 & 0.55 & 0.35 \\ 
PE-RPY & 0.56 & 0.02 & 0.66 & 0.62 \\ 
Dubins & 0.82 & 0.08 & 0.52 & 1.02 \\
\rowcolor{lightgray}   Ours & 0.86 & 0.09 & 0.58 & 0.77 \\ 
    \bottomrule
    \end{tabular}
    \captionof{table}{Results on four simulation environments with increasing level of difficulty. The robot needs to traverse to fifty randomly generated initial and goal locations in each environment. We evaluate the Success Rate, Position Jerk $[\frac{m}{s^3}]$, Orientation Vibration $[\frac{rad}{s}]$ and Orientation Jerk $[\frac{rad}{s^3}]$.}
    \label{tab:simulation-results}
\end{table}

\subsubsection{Real World Experiments:} 

\begin{figure*} 
    \centering
    \includegraphics[width=\linewidth]{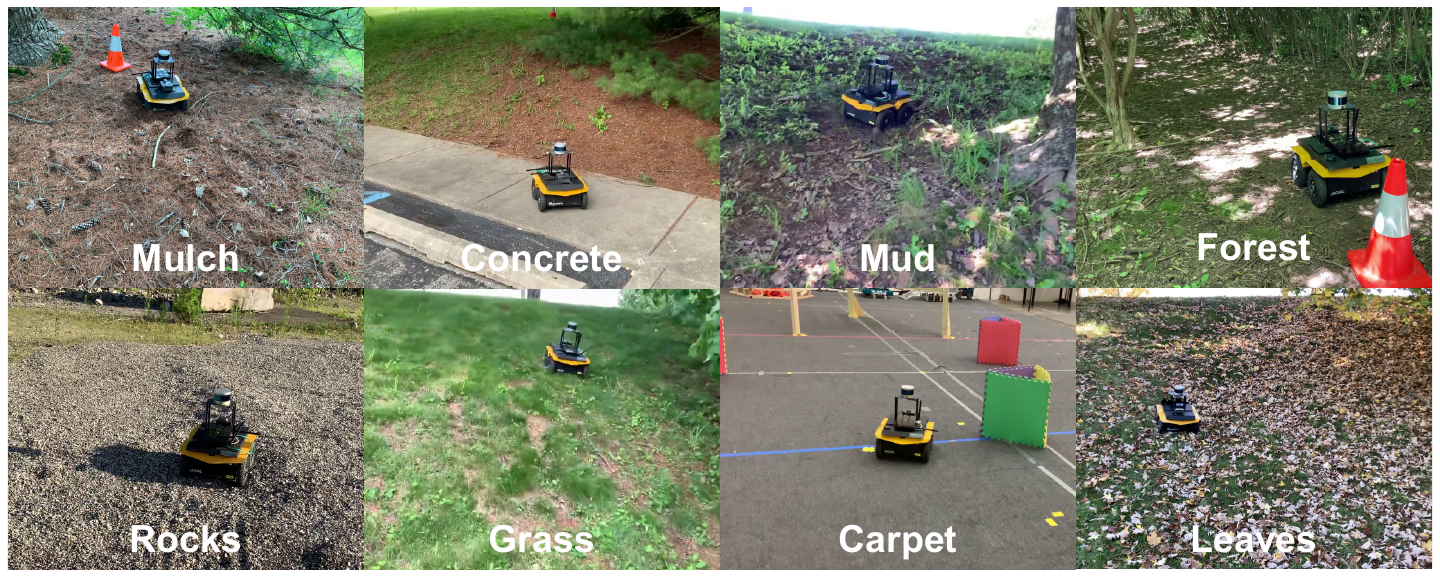}
    \caption{Using the learned Situationally-Aware Dynamics Model the robot can successfully traverse multiple unstructured terrains.}
    \label{fig:nav_summary}
\end{figure*}

We evaluate the zero-shot performance of our situationally-aware dynamics model by applying it directly from simulation to real-world scenarios without additional fine-tuning. Figure \ref{fig:nav_summary} highlights various challenging terrains successfully navigated using our method, demonstrating its robustness and adaptability in real-world conditions. The \textbf{Supplementary Video 1 (SV1)}: \url{https://youtu.be/7a-NzAackE4}, showcases the robot successfully navigating these scenarios. Additionally in \textbf{Supplementary Video 2 (SV2)}: \url{https://youtu.be/Eo68c5APuAo}, we also analyze the real time responsiveness of the situational awareness module as the robot traverses new scenarios.

\textbf{Tasks \& Performance Assessment.} To quantitatively evaluate our method in the real-world, we compare it against the baselines on the navigation task shown in Figure \ref{fig:task1} \textit{(left)}. This task involves traversing diverse terrain types (mulch, grass, mud, and concrete) and geometries (downhill, flat, and uphill). Notably, the final segment features a steep, short climb that includes mulch, where reduced friction presents a significant challenge. Each method was tested 10 times, with results summarized in Figure \ref{fig:task1-results}. Overall, our method consistently outperformed the baselines, being the only approach to successfully complete the task in all trials.
Although our method shows a slightly higher completion time compared to the successful runs of the best-performing baseline (Dubins), this can be attributed to the emergent behavior highlighted in Figure \ref{fig:dynamics-actions-analysis}. Specifically, in certain situations, the robot adopts slower, more cautious policies, prioritizing safety over speed. 

We also move to more challenging tasks with steeper and more rugged terrain, which is shown in Figure \ref{fig:task1} \textit{(right)}. To protect the hardware, we only evaluated the best performing baseline in the previous task (Dubins). However, we had to stop the experiment with the Dubins model as it repeatedly led to the robot turning over and rolling down the hill, as shown in Figure \ref{fig:downhill-task}. In contrast, our method repeatedly completed the task, as can be seen starting at the \href{https://youtu.be/7a-NzAackE4?t=102}{1:42} minute mark of \textbf{SV1}. Overall, our method can lead the robot to successfully climb bumpy and muddy hills, whereas the baselines fail as they get stuck on terrain irregularities or are unable to devise a safe and successful plan.

\textbf{Emerging Behaviors.} A highlight of the real-world experiments is the emergence of some behavioral patterns that helped the robot navigate safely and overcome terrain irregularities. For instance,
\begin{itemize}
    \item \textit{Backing Up:} Whenever forward motion is impeded, we see the robot back up and try moving with a slight orientation change to overcome the terrain irregularities. This mirrors the behavior of human drivers when their vehicles become stuck in challenging terrains. Check the clips from \textbf{SV1} at the: \href{https://youtu.be/7a-NzAackE4?t=60}{1:00}, \href{https://youtu.be/7a-NzAackE4?t=87}{1:27 \textit{(trial 4)}}, \href{https://youtu.be/7a-NzAackE4?t=126}{2:06}, \href{https://youtu.be/7a-NzAackE4?t=156}{2:36 \textit{(trial 4)}}, \href{https://youtu.be/7a-NzAackE4?t=210}{3:30} \& \href{https://youtu.be/7a-NzAackE4?t=374}{6:14 \textit{(rocks)}} minute marks for evidence of this.   
    \item \textit{Velocity Control:} We have seen multiple evidence of how the robot's planned velocity commands differ according to its current situation. For example, in Figure \ref{fig:dynamics-actions-analysis} we saw, for every situation, different plans to reach the same goal from the same starting state. Moreover, at the \href{https://youtu.be/7a-NzAackE4?t=240}{4:00} minute mark of the \textbf{SV1}, the robot maintains a slower pace while navigating downhill and then deliberately accelerates (minute \href{https://youtu.be/7a-NzAackE4?t=255}{4:15}) upon a situation change--reaching safer, more stable terrain.
    \item \textit{Trajectory Smoothness:} On very rugged and dangerous terrains the robot followed curved patterns (see the clips from \textbf{SV1} at minutes \href{https://youtu.be/7a-NzAackE4?t=102}{1:42}, \href{https://youtu.be/7a-NzAackE4?t=142}{2:22} \& \href{https://youtu.be/7a-NzAackE4?t=198}{3:18}). On the other hand, when the terrain was not as rugged and steep, the robot approached its targets directly, relying only on the velocity control to maintain its integrity (\href{https://youtu.be/7a-NzAackE4?t=240}{4:00}, \href{https://youtu.be/7a-NzAackE4?t=263}{4:22} \& \href{https://youtu.be/7a-NzAackE4?t=364}{6:04} minute marks).
    
\end{itemize}
It is important to clarify that during training we did not try to enforce learning of such patterns in any way. 

We dive deeper into why we believe this is happening in Section \ref{sec:discussion}. At a high-level, we hypothesize that upon a situation change, the optimization process carried out by the sampling-based planner will be bounded by new constraints given by the dynamics of the current situation, thus generating new diverse action plans. 

Furthermore, we want to highlight that the UGV platform used (Clearpath Jackal) is a rigid-body robot with no Ackermann-style steering mechanism, minimal suspension, and relatively low ground clearance. This means that the robot often experiences contact between the vehicle body and external factors (grass, rocks, tree roots, etc.), so the chassis is not consistently clear of obstructions. These physical constraints become latent factors that can also affect the dynamics and some maneuvers that might be feasible for other vehicles can be unsafe or damaging for this platform.

\begin{figure*}
 \centering
  \includegraphics[width=\linewidth]{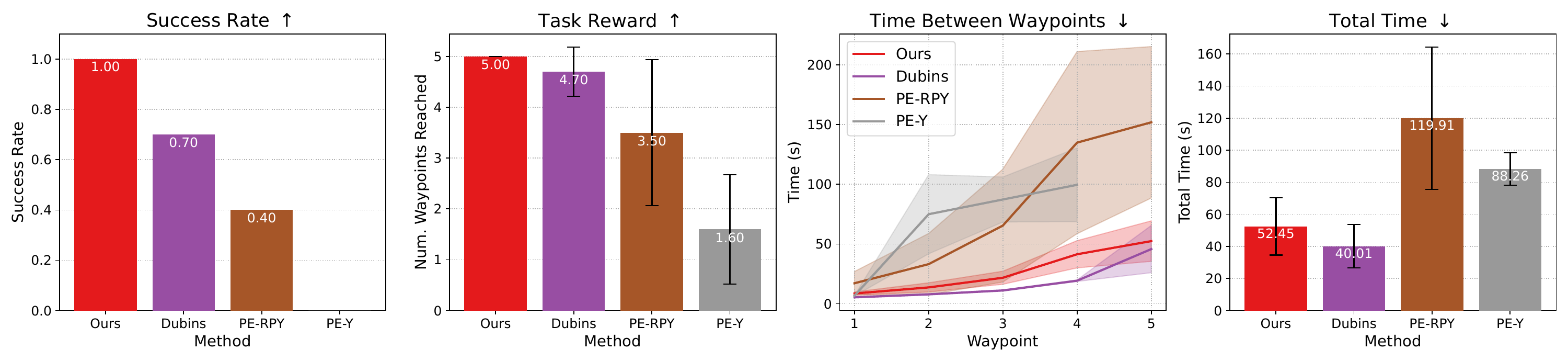}
    \captionof{figure}{Results from the UGV experiments in the real world. We repeat the experiments ten times with each baseline and report the success rate, number of waypoints reached, the cumulative time between successive waypoints and the total time to complete the task.}
    \label{fig:task1-results}
\end{figure*}

\begin{figure*}
 \centering
  \includegraphics[width=\linewidth]{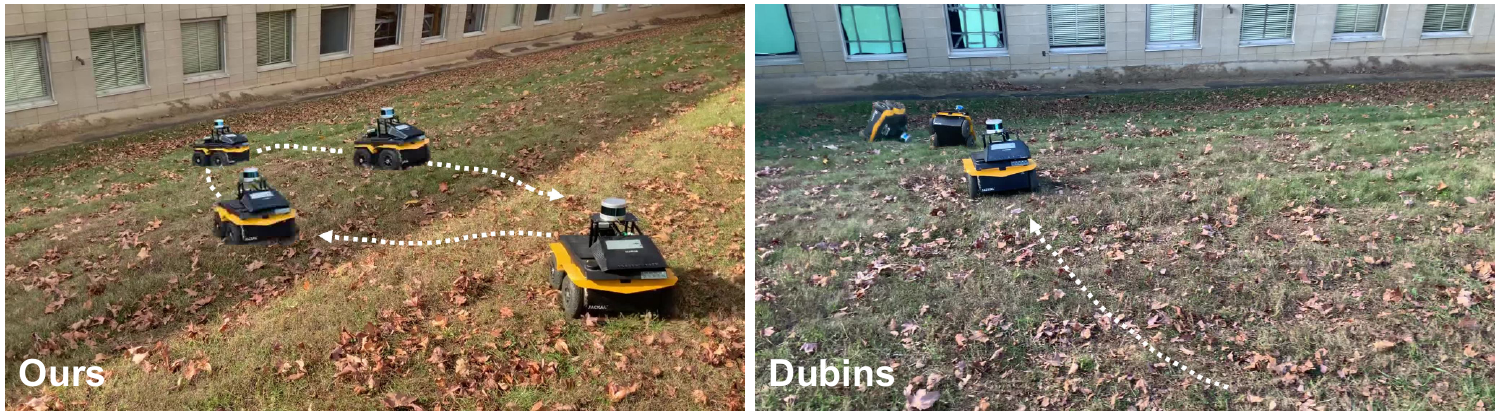}
    \captionof{figure}{We test our situationally-aware dynamics model against the physics-based Dubins model (best performing baseline) in the challenging task of descending a steep hill with wet leaves and many large hidden bumps. Across multiple trials, our model consistently succeeded in completing the task, whereas the baseline failed, causing the robot to flip.}
    \label{fig:downhill-task}
\end{figure*}

\section{Results: \textit{Quadrotor}} \label{sec:results_quadx}

\begin{figure}[h!]
    \centering
    \includegraphics[width=0.475\linewidth]{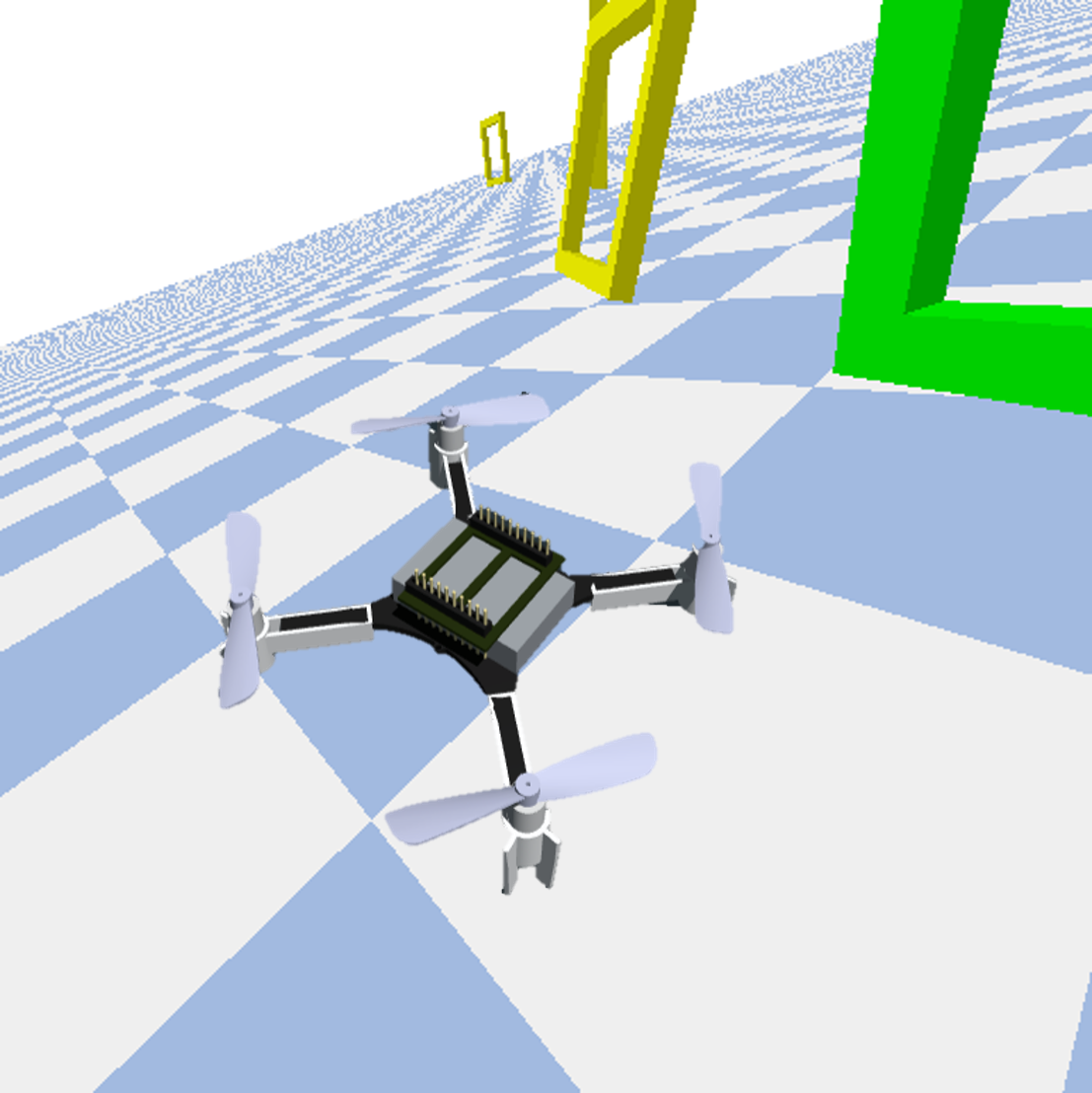}
    \includegraphics[width=0.475\linewidth]{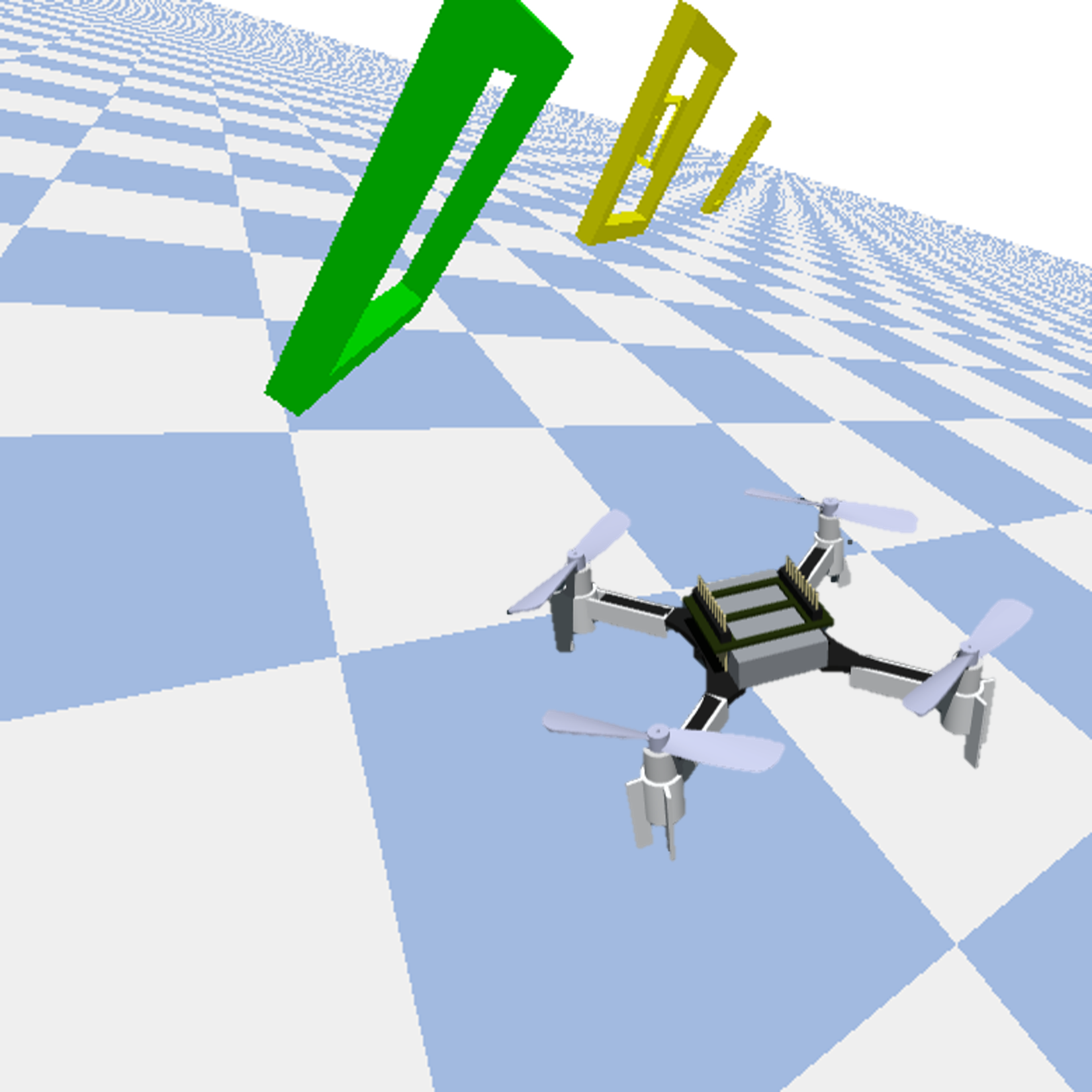}
    \caption{Crazyflie simulated in PyFlyt's gates environment with changing wind conditions. The next target gate is colored green and the remaining targets yellow.}
    \label{fig:quadx_simulator}
\end{figure}

We continue to assess our situationally-aware dynamics model with a different platform type, seeking to provide further evidence to answer the three questions posited in Section \ref{sec:results_ugv}. Specifically, we investigate the performance of SA for a quadrotor tasked with crossing a series of gates suppend in the air while being disturbed by sudden changes in wind patterns. Figure \ref{fig:quadx_simulator} presents an example of the task. Note that the agent has neither knowledge of when the change happens nor a dedicated sensor to measure specific variables that would let it determine when the wind conditions have changed or how they have changed. Therefore, it needs to rely on its proprioception to decide how to adapt.

In Section \ref{subsec:eval-scenarios-quadx} we introduce the multiple evaluation scenarios. Section \ref{sec:exp-setup-quadx} covers the evaluation setup, including a description of the state, action and situation space. Finally, in Section \ref{subsec:results-training-quadx} we show that leveraging a situationally-aware dynamics model the quadrotor achieves a higher reward on the task across environments with different levels of difficulty and with multiple data budgets.

\subsection{Evaluation Scenarios}  \label{subsec:eval-scenarios-quadx}

We leverage the PyFlyt simulator \cite[]{tai2023pyflyt}. The main driver for our selection was the ease to implement different wind patterns, including randomly switching wind fields. In particular, we focused on the gate-crossing task, where a quadrotor needs to cross five gates suspended in the air. We modified the base task to include switching wind fields with three different levels of difficulty, but leave the reward computation and remaining components of the original task unmodified.

We designed three distinct environments, with different levels of complexity. In each environment, the active wind condition is randomly selected from a predefined set and persists for a random duration before switching to a new, different condition. The specifics of each environment are detailed in Table~\ref{tab:wind_configs}.

\begin{table*}[h!]
\centering
\small
\begin{tabular}{@{}clll@{}}
\toprule
\textbf{Environment} & \textbf{Switching Interval (steps)} & \textbf{Wind Type} & \textbf{Key Parameters} \\ \midrule
1 & {[}15, 45{]} & \begin{tabular}[c]{@{}l@{}}Nominal \\ Thermal Updraft\end{tabular} & \begin{tabular}[c]{@{}l@{}}- \\ Strength = 1.0\end{tabular} \\ \midrule
2 & {[}15, 45{]} & \begin{tabular}[c]{@{}l@{}}Nominal \\ Wind Gradient \\ Wind Gradient\end{tabular} & \begin{tabular}[c]{@{}l@{}}- \\ Speed at 10m = 2.0 m/s, From 180°, $z_0=0.1$ \\ Speed at 5m = 5.0 m/s, From 45°, $z_0=0.1$\end{tabular} \\ \midrule
3 & {[}10, 50{]} & \begin{tabular}[c]{@{}l@{}}Nominal \\ Wind Gradient \\ Wind Gradient \\ Thermal Updraft\end{tabular} & \begin{tabular}[c]{@{}l@{}}- \\ Speed at 10m = 5.0 m/s, From 270°, $z_0=0.3$ \\ Speed at 3m = 5.0 m/s, From 90°, $z_0=0.1$ \\ Strength = 3.0\end{tabular} \\ \bottomrule
\end{tabular}
\caption{Wind field configurations for the quadrotor gate-crossing task. During execution, the environments randomly switch between their available wind types without revealing this information to the agent. The agent has to rely on its proprioception to detect and characterize this conditions and adapt accordingly.}
\label{tab:wind_configs}
\end{table*}

\subsection{Experimental Setup}  \label{sec:exp-setup-quadx}

\subsubsection{State and Action Space:} The state space $s \in \mathbb{R}^{12}$ corresponds to $s = \begin{bmatrix} \mathbf{\Omega} & \mathbf{e} & \mathbf{v} & \mathbf{p} \end{bmatrix}^\top$,
where $\mathbf{\Omega} = [\omega_x, \omega_y, \omega_z] \in \mathbb{R}^3$ are the angular velocity components, $\mathbf{e} = [\phi, \zeta, \gamma] \in \mathbb{R}^3$ are the roll, pitch, and yaw angles (orientation), $\mathbf{v} = [v_x, v_y, v_z] \in \mathbb{R}^3$ are the linear velocity components and $\mathbf{p} = [x, y, z]\in \mathbb{R}^3$ is the drone's position.

The action space is defined as $a = \begin{bmatrix} \omega^x_{\text{cmd}} & \omega^y_{\text{cmd}} & \omega^z_{\text{cmd}} & T\ \end{bmatrix}^\top \in \mathbb{R}^4$, where $\omega^x_{\text{cmd}}, \omega^y_{\text{cmd}}, \omega^z_{\text{cmd}} \in [-\pi, \pi]$ are the commanded angular velocities, and $T \in [0, 0.8]$ is the thrust command.

\subsubsection{Situation Space:}  
Since we defined the \textit{situation} of the robot as the joint transition distribution, we concatenate the state and action variables to obtain a situation vector $\mathbf{x} \in \mathbb{R}^{28}$:
\begin{equation} \label{eq:situation-space-quadx}
    \mathbf{x}_t = \begin{bmatrix} s_{t-1}; & a_{t-1}; & s_t \end{bmatrix}^\top,
\end{equation}
where the states $s_{t-1}$ and $s_t$ are preprocessed to ensure invariance and continuity as described in Section \ref{sec:exp-setup-ugv}.

\subsection{Situationally-Aware Dynamics Learning}   \label{subsec:results-training-quadx}

To learn the quadrotor's dynamics we leverage a dataset of state transitions collected in each of the three environments. Since no exploration by the agent needs to take place, we proceed in the \textit{passive} training fashion described in Section \ref{sec:training_testing}. That is, the dataset is fed sequentially to the agent and the model is then trained after the situation-augmented replay buffer has been obtained.

We train two agents. One corresponds to a probabilistic ensemble (PETS) \cite[]{chua2018deep}. The other one, our agent which we call SA-PETS, is the exact same probabilistic ensemble but its input has been augmented to also include the situation symbol obtained by our online and unsupervised latent factor representation method.

Both probabilistic ensembles consist of five independently trained Gaussian multi-layer perceptrons (MLPs) with residual connections. Each network takes as input the concatenation of the current state and control vector, and outputs the mean and log-variance of a Gaussian distribution of the predicted state change. The architecture begins with a fully connected layer mapping the input dimension to a hidden size of 1024, followed by Layer Normalization and a Mish activation to stabilize training and improve gradient flow. This is followed by two residual blocks, each containing two linear layers with LayerNorm and Mish activations, where the input to the block is added back to its output and passed through an additional Mish nonlinearity. Finally, a linear projection layer maps the hidden representation to twice the dimensionality of the state space, producing both the mean and log-variance of the Gaussian distribution.

Both inputs and targets are normalized using statistics computed from the training data, and predictions are de-normalized at inference time. The model is trained using the Gaussian negative log-likelihood (NLL) loss with the Adam optimizer (learning rate $10^{-3}$). As described in Section \ref{sec:training_testing}, SA-PETS uses a weighted version of NLL. Training employs a learning rate scheduler (factor $0.1$, patience $10$) and early stopping with a patience of $16$ epochs. The batch size is set to 256, and the maximum number of epochs is 256. Table \ref{tab:combined_hyperparams_quadx} presents the SA and MPPI hyperparameters. The sampling based controller uses the same cost function and parameters with SA-PETS and PETS.

\begin{table} 
    \centering
    \small
    \begin{tabular}{llr}
    \toprule
    \textbf{Algorithm} & \textbf{Hyperparameter} & \multicolumn{1}{c}{\textbf{Value}} \\
    \midrule
    \textbf{Situational Awareness} & & \\
    & Hazard Rate $\lambda$ & $60$ \\
    & Threshold $\tau$ & $200$ \\
    & Situation Support & $40$ \\
    & & \\
    \midrule
    \multicolumn{3}{l}{\textbf{MPPI}} \\
    & Number of iterations & $2$ \\
    & Population size & $1024$ \\
    & Reward scaling $\gamma$ & $0.9$ \\
    & Noise scaling $\sigma$ & $0.4$ \\
    & Correlation term $\beta$ & $0.4$ \\
    \bottomrule
    \end{tabular}
    \caption{Hyperparameters used for Situational Awareness and MPPI in the Quadrotor flying under changing wind task.}
    \label{tab:combined_hyperparams_quadx}
\end{table}

\begin{figure*}
    \centering
    \includegraphics[width=0.84\linewidth]{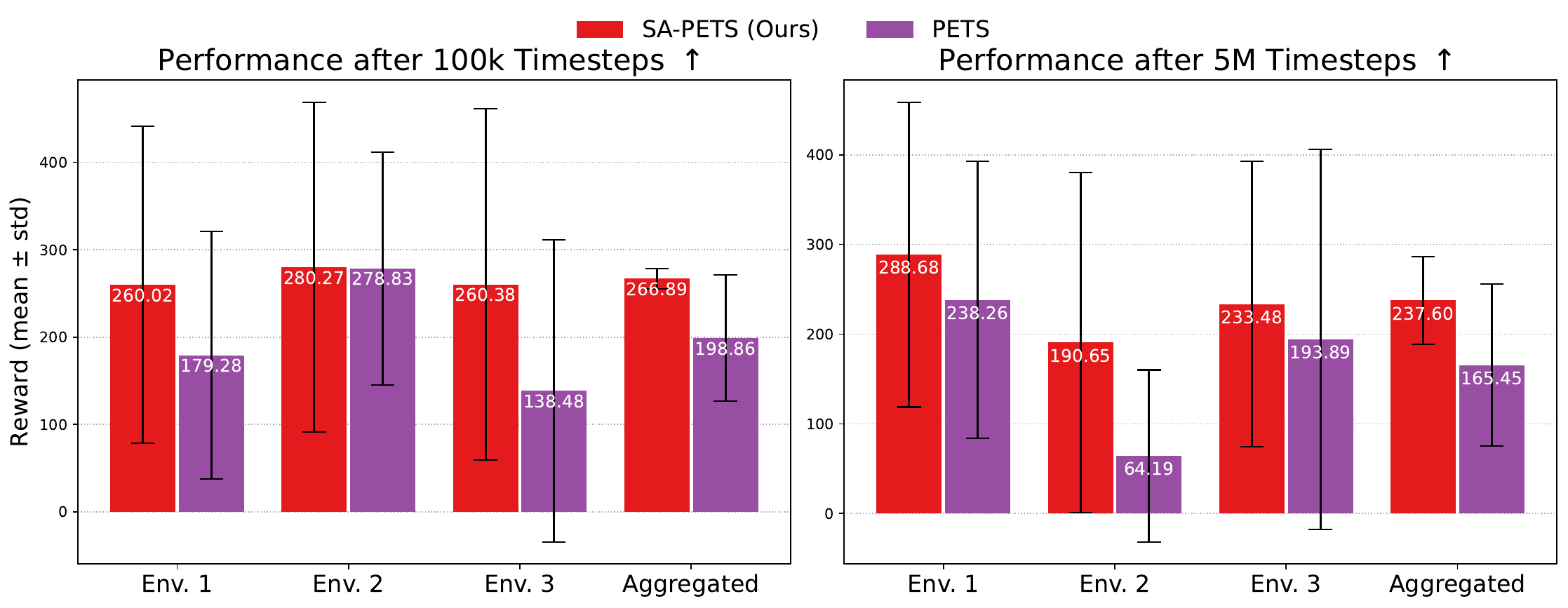}
    \caption{Quadrotor performance on three environments with different randomly changing wind fields. Shown are task rewards across ten episodes after training on 100k (left) and 5M (right) state transitions; higher reward indicates greater robustness.}
    \label{fig:quadx_results}
\end{figure*}

\subsubsection{Training Results:}

Figure \ref{fig:quadx_results} presents the results after training PETS and SA-PETS with 100.000 (left) and 5 million (right) state transitions. In general, we can see SA-PETS significantly outperforms the probabilistic ensemble that does not have access to situation information.

Our proposed method, SA-PETS, demonstrates marked superiority over the PETS baseline in both data-limited and data-rich regimes, as detailed in Figure \ref{fig:quadx_results}. In the low-data scenario with only \textit{100k} transitions, SA-PETS already achieves a substantial \textbf{34.2\%} overall performance improvement, showcasing significant gains in data efficiency. This advantage becomes even more pronounced as the models are exposed to more experience. With \textit{5M} transitions, the performance gap widens, culminating in a \textbf{43.6\%} overall improvement. Most strikingly, we observe a dramatic performance divergence in Environment 2, where the \textit{improvement leaps} from 0.5\% at 100k transitions to a remarkable 197.0\% at 5M transitions. This results strongly indicate that adding situational awareness to the standard PETS model provides a fundamental advantage in model capacity, enabling it to capture latent factors (wind conditions in this case) and exploit complex system dynamics that the baseline model fails to identify, even when provided with large datasets. Such a characteristic is critical for achieving robust, high-performance control in complex, real-world robotic applications.

Upon closer inspection, we also notice an interesting pattern. For instance, in Environment 2, the performance of both PETS and SA-PETS actually degrades as the robot has access to more data. We hypothesize this is due to the fact that larger datasets include the presence of more latent factors ---as the robot experiences more wind patterns--- that confound the dynamics learning process, thereby making it harder for standard models to generalize effectively. Nonetheless, we see that the model augmented with situational awareness is significantly more robust: its task reward only decreased moderately (from 280.27 to 190.65), whereas the baseline PETS suffered a much sharper drop (from 278.83 to 64.19). This highlights not only the stabilizing effect of situational awareness in the face of noisy or confounding data but also suggests that simply collecting more data is not always beneficial—what matters is whether the model can disentangle and adapt to the hidden factors underlying the environment’s variability.

We present videos of the quadrotor flying using both dynamics models in the three environments in \textbf{Supplementary Video 3 (SV3)}: \url{https://youtu.be/a9oVk-Qor9c}.

\section{Discussion} \label{sec:discussion}

This study presented a framework for learning online and unsupervised hidden state representations to inform the dynamics models of robots operating in complex, unstructured environments. By integrating these learned representations with the dynamics models of ground and aerial vehicles we demonstrated significant performance improvements over traditional approaches that rely solely on raw sensory inputs. Particularly interesting was the emergence of safe behaviors that unlocked a new range of tasks the robot could safely accomplish. Here, we dive into the implications of these results, their connection to prior work, and the broader insights they provide for learning-based dynamics modeling in robotics.

\textbf{Impact of Hidden State Representations.} The introduction of hidden state representations proved pivotal in capturing unobservable or implicit environmental factors that influence the robot's dynamics. Figure \ref{fig:dynamics-predictions-analysis} shows that the hidden state representations give the robot a grasp of how latent factors affect its dynamics. Specifically, the transition model incorporates the situation information to get a better per-situation prediction of the states' evolution. It is clear from the plots that such effect has different levels of magnitude, which in case they were unaccounted for, would be absorbed by a dynamics model with higher uncertainty about its predictions. This is undesirable, as increased uncertainty results in less accurate predictions, ultimately reducing the performance of downstream planners. The combined effect of better prediction accuracy leading to better control plans is evidenced in Figure \ref{fig:inconsistency}, where we show our model attained the lowest \textit{inconsistency}. This means the situationally-aware dynamics achieved the lowest error between what the robot predicted an action would generate and the actual measured effect after executing such action in the environment. 
Furthermore, Figure \ref{fig:dynamics-actions-analysis} demonstrates how the robot adjusts its planned behaviors based on its current situation. These representations allow the dynamics model to account for terrain features not explicitly captured in the sensor data, such as ground compliance, surface irregularities, and friction. Additionally, transitioning between situations prompts the robot's sampling-based planner to optimize under varying sets of constraints. We hypothesize that this mechanism contributes to the observed emerging behaviors, as they arise from sequences of optimal actions tailored to the specific conditions of each situation.
Altogether, this underscores the potential of latent representations in bridging the gap between sensor limitations and complex real-world dynamics.

\textbf{Generalization to Novel Terrains.}
One of the most encouraging findings was the UGV model's ability to generalize to unseen terrains directly without additional work from simulation to reality and the emergence of safe navigation behaviors. This suggests that the learned hidden states encode features that are transferable across a broad range of terrain types. Such generalization capabilities are crucial for autonomous systems that must operate in diverse environments without extensive retraining, particularly in scenarios such as disaster response or planetary exploration.

\textbf{Data Efficiency and Model Scalability.}
Despite the complexity of the augmented dynamics model, it demonstrated significant data efficiency compared to baseline methods. By leveraging the representations learned online, the UGV's model required fewer training episodes to achieve superior performance, as evidenced in Figures \ref{fig:train-performance} \& \ref{fig:ablation_results}. Similar results were observed in the quadrotor's case, as demonstrated in Figure \ref{fig:quadx_results}. This efficiency highlights the potential for scaling the approach to larger, more complex systems, and even online dynamics learning tasks, since we abandon the need for privileged information and extensive offline training from previous works on auxiliary representation learning.

\textbf{Interpretability of Learned Representations.}
A notable advantage of our approach is its potential for interpretability. By analyzing the latent space (Figure \ref{fig:situations-pca}), we observed clusters corresponding to distinct transition dynamics, suggesting that the model learns distinct physically meaningful representations. Additionally, as the situations are represented using the structured nature of the local transition dynamics (Eq. \ref{eq:local-dynamics-model}), their values can be interpreted in such framework, in contrast to other approaches that learn representations in an unstructured latent space. This could be further explored to provide insights into environment dynamics for online transition model learning, improving the transparency and trustworthiness of autonomous systems—a critical consideration for deployment in safety-critical applications.

\textbf{On the Situation Change Responsiveness.}
Quantitatively measuring detection lag for an unobserved latent context is inherently difficult, since ground-truth labels for latent factor changes are not available. To address responsiveness within these limits, we rely on complementary qualitative and diagnostic analyses: representative changepoints detected during operation (Figure \ref{fig:reduced-observation-history} and Appendix~\ref{sec:appendix-lake_sim_sa_results}) and a 2D projection of the learned 26-dimensional local-transition descriptors that reveals distinct situation clusters (Figure~\ref{fig:situations-pca}). We also recorded three \textit{new} videos in simulation and the real world (\textbf{SV2}: \url{https://youtu.be/Eo68c5APuAo}) where the accompanying dashboard let us visualize the system responsiveness to a situation change as well as the situation histories. There we can also appreciate the emergence of behavior such as the situation-dependent velocity control when entering more difficult terrain (as also show in Figure~\ref{fig:dynamics-actions-analysis}). Together, these materials show that the module segments local transition dynamics and that the learned representations encode different transition patterns rather than collapsing to a single generic embedding.

How quickly the system switches situations depends on deliberate design trade-offs in the BOCD's changepoint logic. Two elements are particularly important: the changepoint prior, which controls how readily the model hypothesizes a switch (a higher prior yields faster detections at the cost of more false positives), and the minimum-support requirement used before instantiating a new situation (included to prevent numerical issues during precision-matrix inversion), which introduces a short, controlled detection delay. Whether a single state–action sample triggers a switch therefore depends on how out-of-distribution that sample is: in BOCD the run-length posterior depends on the likelihood of the latest observation (see Eqs.~\ref{eq:joint-runlen-obs} and~\ref{eq:pos-run-length}), so a very low likelihood can rapidly shift posterior mass toward a changepoint hypothesis. In practice, with our chosen hazard and minimum-support settings the system favors robust detection of sustained changes rather than flipping on isolated, noisy samples.

\textbf{Integration with Planning and Control.}
The improved dynamics predictions directly impacted downstream planning and control tasks, leading to safer and adaptive navigation/flying. As shown in Figures \ref{fig:nav_summary}, \ref{fig:task1-results}, \ref{fig:downhill-task} \& \ref{fig:quadx_results}, the performance improvement is notably higher in challenging scenarios regardless of their real or simulated nature. By augmenting the state space with the hidden state representations, the controller could generate trajectory plans that proactively accounted for terrain- and wind-induced dynamics changes.

\textbf{Comparison with Learning-Based Models.}
The performance of the CaDM and PE+LSTM baselines highlights the inherent difficulty of learning hidden state representations alongside dynamics in a data-efficient manner. \textit{While these methods have shown promise in other scenarios, our online hidden state representation framework, grounded in multivariate BOCD, offers a more effective alternative. By leveraging structured latent spaces and online learning, our approach eliminates the need for interdependent, separately trained models, addressing the problem from a novel perspective.} We attribute these performance gains to the online nature of our method, which mitigates the non-stationarity in context representations that arises when dynamics and context encoders are trained simultaneously, as in the case of PE+LSTM. Moreover, by avoiding the need to learn a separate context model, our approach reduces data requirements and maintains a smaller overall model size. As discussed in Section \ref{sec:related-work-auxreps}, other methods that leverage auxiliary representations usually have to rely on multi-stage approaches and privileged environment information in order to train successful policies.

On the dynamics learning side, the PE-Y model, despite using and predicting the same variables as the Dubins model, exhibited poor real-world performance. Similarly, PE-RPY struggled to generalize to new terrains; its requirement to predict the robot's orientation (roll, pitch, and yaw) introduced compounding errors during model-based planning, which led to suboptimal strategies, as demonstrated in minute \href{https://youtu.be/7a-NzAackE4?t=51}{0:51 \textit{(PE-RPY)}} of the \textbf{SV1}. We further hypothesize that TD-MPC2's slower learning is attributable to its large model size, necessitating the simultaneous learning of five components (encoder, latent dynamics, reward predictor, terminal value predictor and policy), which may introduce unnecessary overhead for this specific task. Finally, while the model-free alternatives predictably underperformed in terms of learning efficiency, they underscore the advantages of a model-based approach for adaptive robot behavior. This is particularly compelling when contrasted with the policy-learning strategies for adaptive robot behaviors employed by other works, as discussed in Section \ref{sec:related-work-auxreps}.

\textbf{Comparison with Physics-Based Models.} The Dubins model was the most effective baseline but with critical flaws in complex real world scenarios. Note that its performance stems from being designed by an expert with extensive domain knowledge. This adds some limitations due to the stringent prerequisites to replicate this approach for other robots. Additionally, it operates under the assumption of ideal conditions, which limits its applicability in real-world, unstructured terrain navigation tasks. Critically, the model lacks a feedback mechanism to account for the specific characteristics of varying terrains, making it ill-suited for dynamic and complex environments. This limitation was evident in several scenarios where the robot's performance was unsatisfactory. For instance, the model was unable to compensate for latent factors affecting the robot's motion—as captured in minute \href{https://youtu.be/7a-NzAackE4?t=37}{0:37 \textit{(Dubins)}} of \textbf{SV1}—and frequently resulted in unsafe behaviors, both for the robot and its surroundings, as demonstrated in Figure \ref{fig:downhill-task}.

\textbf{On the Role of Exteroceptive Perception.} Integrating exteroceptive perception (e.g., camera or LiDAR) into our situational-awareness module offers a promising route to both accelerate and make more proactive the adaptation process. Rather than replacing the latent modeling core, perceptual cues can be used to form an informed prior over changepoint probabilities—replacing the constant hazard function used in Section~4.2.2—and thereby adaptively bias the model to expect particular transitions (e.g., entering gravel or grass) before they fully manifest in dynamics. This perceptually-informed prior reduces the representational burden on the SA module (it no longer needs to infer gross, easily observable factors from dynamics alone) and enables the planner to adopt anticipatory strategies instead of purely reactive ones.
Perception can also be coupled via existing traversability planners to provide intermediate objectives or risk estimates that guide the model-based controller. Important caveats for future work include robustly handling noisy or out-of-distribution sensor inputs and explicitly representing perceptual uncertainty so that incorrect priors do not degrade performance; principled fusion methods (e.g., uncertainty-aware Bayesian updates or learned perceptual priors) are natural directions. Overall, exteroceptive perception acts as a complementary information channel that can make latent adaptation faster and planning more proactive while leaving the SA module responsible for modeling residual, unmeasured latent factors (e.g., \textit{wet} vs.\ \textit{dry} grass).

\textbf{Broader Implications for Robotics.}
As we show in Section \ref{sec:results_ugv} (ground), Section \ref{sec:results_quadx} (aerial) \& Appendix \ref{sec:appendix-sa_results} (aquatic), our online hidden state representation learning method naturally extends to other platforms and environments. This suggests that our framework for situational awareness is a promising direction for adaptive methods, specially in field robotics applications. 
By learning hidden states that augment ego- and world-information, robots can handle variability and uncertainty in a wide range of tasks, thanks to reduced ambiguity. This ability to adapt to environmental and system-specific factors unlocks many possibilities without significantly adding a lot of complexity to the systems.

\subsection{Limitations and Scope} \label{sec:limitations}
While the proposed approach demonstrates significant advantages, it is not without its limitations. In simulation, we observed a slight decline in path smoothness metrics for the UGV evaluation, even as the overall success rate remained high. This trade-off appears to stem from the situation changes, which trigger re-planning with a new set of constraints provided by the dynamics model associated to the updated situation. These transitions, while enabling emergent adaptive behaviors, occasionally resulted in less smooth trajectories. However, this effect did not critically impact the robot's performance or overall mission success, suggesting that the benefits of adaptability outweighed the costs associated with these deviations.

A more fundamental limitation relates to the model's generalization capabilities, particularly when encountering dynamics that are out-of-distribution (OOD) with respect to the situations learned during training, which is a common restriction of most learning-based methods. Our SA framework excels at identifying and representing latent factors that are either represented within its context library or can be inferred from online transition data. However, its performance will predictably degrade if the test-time dynamics are entirely outside the support of this learned distribution. Such scenarios could include encountering fundamentally different physical phenomena (e.g., transitioning from rigid-body to deformable contact mechanics), experiencing novel forms of actuator or sensor degradation not seen during training, or operating in an environment with entirely new external forces. However, when the robot encounters a new environment, its observed state transitions are associated with the most probable situation symbol from the existing library. The planner then utilizes a dynamics model conditioned on this best-match context/situation. For instance, Figures~\ref{fig:1d_gaussians} \& \ref{fig:terrain_latent_factor_effect_statistics} illustrate that the dynamics for \textit{tiles} and \textit{concrete} are closely related, while both are distinct from those of \textit{pebbles}. Consequently, if a robot trained on \textit{concrete} were to encounter \textit{tiles} for the first time, the system is expected to reuse the same situation it employed for `concrete'. This mechanism is intended to reduce compounding error in the dynamics model and improve downstream planning performance.

In practice, OOD conditions can be diagnosed by monitoring key statistical indicators, such as a sustained increase in the predictive uncertainty of the dynamics model or a significant divergence between observed state transitions and those predicted by any available context. To mitigate this, the framework is amenable to established techniques; for instance, the context library can be made more robust by proactively expanding its support during training using methods like domain randomization to cover a wider array of potential dynamics. For truly novel situations, this diagnostics could trigger a safe-mode or a request for human intervention, forming a clear direction for future work in lifelong adaptation.

\section{Conclusion} \label{sec:conclusion}

To advance learning-based dynamics modeling for robots operating in unstructured settings, in this work we proposed a novel framework for adaptive robot decision-making based on representing the latent world- and ego-factors that introduce ambiguity and increase uncertainty. 
Our approach demonstrates that modeling latent factors through the joint distribution of state transitions—capturing what we define as the robot's \textit{situation}—is both effective and efficient for adaptive decision-making. This is achieved by extending the BOCD framework to a multivariate setting, enabling the robot to dynamically detect and adapt to shifts in its operational situation. Notably, we achieve this entirely through online, unsupervised learning, without reliance on privileged information or data-intensive training regimes typically reserved for large-scale simulation enterprises.
We demonstrated substantial improvements in 
the challenging domain of latent factor discovery, where achieving a balance between expressive representations, efficient data utilization, and minimization of inductive biases is critical.

This work lays the foundation for a promising new direction in adaptive dynamics learning, where robots can learn to adapt directly in real-world settings with limited data. Future research will explore how this approach can be further generalized, potentially transforming the way robots learn and operate in dynamic, uncertain environments.

\section{Acknowledgement}

The authors acknowledge Youwei Yu for his support during the real world experiments. 

\section{Declaration of Conflicting Interests}
The authors declared no potential conflicts of interest with
respect to the research, authorship, and/or publication of this
article. 

\section{Funding}
This work was primarily supported by NSF: CAREER: Autonomous Live Sketching of Dynamic Environments by Exploiting Spatiotemporal Variations (grant \#2047169) and partially supported by Army Research Lab (grant \#W911NF2520083). This work used Amazon Web Services through the CloudBank project, which is supported by National Science Foundation grant \#1925001.

\bibliographystyle{sageh}  
\bibliography{references.bib}

@InProceedings{bingham19toward,
  Title                    = {Toward Maritime Robotic Simulation in Gazebo},
  Author                   = {Brian Bingham and Carlos Aguero and Michael McCarrin and Joseph Klamo and Joshua Malia and Kevin Allen and Tyler Lum and Marshall Rawson and Rumman Waqar},
  Booktitle                = {Proceedings of MTS/IEEE OCEANS Conference},
  Year                     = {2019},
  Address                  = {Seattle, WA},
  Month                    = {October}
}

@article{ljung2010perspectives,
  title={Perspectives on system identification},
  author={Ljung, Lennart},
  journal={Annual Reviews in Control},
  volume={34},
  number={1},
  pages={1--12},
  year={2010},
  publisher={Elsevier}
}

@book{Goodfellow-et-al-2016,
    title={Deep Learning},
    author={Ian Goodfellow and Yoshua Bengio and Aaron Courville},
    publisher={MIT Press},
    note={\url{http://www.deeplearningbook.org}},
    year={2016}
}

@ARTICLE{murillogonzalez2025causaldistributions,
  author={Murillo-González, Alejandro and Xu, Junhong and Liu, Lantao},
  journal={IEEE Robotics and Automation Letters}, 
  title={Learning Causal Structure Distributions for Robust Planning}, 
  year={2025},
  volume={},
  number={},
  pages={1-8},
  keywords={Robots;Planning;Mathematical models;Robustness;Periodic structures;Computational modeling;Uncertainty;Vectors;Probabilistic logic;Training;Model Learning for Control;Integrated Planning and Control;Reinforcement Learning;Planning under Uncertainty;Unstructured Environment},
  doi={10.1109/LRA.2025.3598663}
}

@article{tai2023pyflyt,
  title={PyFlyt--UAV Simulation Environments for Reinforcement Learning Research},
  author={Tai, Jun Jet and Wong, Jim and Innocente, Mauro and Horri, Nadjim and Brusey, James and Phang, Swee King},
  journal={arXiv preprint arXiv:2304.01305},
  year={2023}
}

@article{aastrom1971system,
  title={System identification—a survey},
  author={{\AA}str{\"o}m, Karl Johan and Eykhoff, Peter},
  journal={Automatica},
  volume={7},
  number={2},
  pages={123--162},
  year={1971},
  publisher={Elsevier}
}

@article{wang2024imperative,
  title={Imperative learning: A self-supervised neural-symbolic learning framework for robot autonomy},
  author={Wang, Chen and Ji, Kaiyi and Geng, Junyi and Ren, Zhongqiang and Fu, Taimeng and Yang, Fan and Guo, Yifan and He, Haonan and Chen, Xiangyu and Zhan, Zitong and others},
  journal={arXiv preprint arXiv:2406.16087},
  year={2024}
}

@inproceedings{perez2020generalized,
  title={Generalized hidden parameter mdps: Transferable model-based rl in a handful of trials},
  author={Perez, Christian and Such, Felipe Petroski and Karaletsos, Theofanis},
  booktitle={Proceedings of the AAAI Conference on Artificial Intelligence},
  volume={34},
  number={04},
  pages={5403--5411},
  year={2020}
}

@article{aastrom1965optimal,
  title={Optimal control of Markov processes with incomplete state information I},
  author={{\AA}str{\"o}m, Karl Johan},
  journal={Journal of mathematical analysis and applications},
  volume={10},
  pages={174--205},
  year={1965},
  publisher={Elsevier}
}

@article{kaelbling1998planning,
  title={Planning and acting in partially observable stochastic domains},
  author={Kaelbling, Leslie Pack and Littman, Michael L and Cassandra, Anthony R},
  journal={Artificial intelligence},
  volume={101},
  number={1-2},
  pages={99--134},
  year={1998},
  publisher={Elsevier}
}

@article{bellman1958dynamic,
  title={Dynamic programming and stochastic control processes},
  author={Bellman, Richard},
  journal={Information and control},
  volume={1},
  number={3},
  pages={228--239},
  year={1958},
  publisher={Elsevier}
}

@article{hallak2015contextual,
  title={Contextual markov decision processes},
  author={Hallak, Assaf and Di Castro, Dotan and Mannor, Shie},
  journal={arXiv preprint arXiv:1502.02259},
  year={2015}
}

@article{kwon2021rl,
  title={Rl for latent mdps: Regret guarantees and a lower bound},
  author={Kwon, Jeongyeol and Efroni, Yonathan and Caramanis, Constantine and Mannor, Shie},
  journal={Advances in Neural Information Processing Systems},
  volume={34},
  pages={24523--24534},
  year={2021}
}

@inproceedings{chades2012momdps,
  title={MOMDPs: a solution for modelling adaptive management problems},
  author={Chades, Iadine and Carwardine, Josie and Martin, Tara and Nicol, Samuel and Sabbadin, R{\'e}gis and Buffet, Olivier},
  booktitle={Proceedings of the AAAI Conference on Artificial Intelligence},
  volume={26},
  number={1},
  pages={267--273},
  year={2012}
}

@inproceedings{deisenroth2011pilco,
  title={PILCO: A model-based and data-efficient approach to policy search},
  author={Deisenroth, Marc and Rasmussen, Carl E},
  booktitle={Proceedings of the 28th International Conference on machine learning (ICML-11)},
  pages={465--472},
  year={2011}
}

@inproceedings{abbeel2006using,
  title={Using inaccurate models in reinforcement learning},
  author={Abbeel, Pieter and Quigley, Morgan and Ng, Andrew Y},
  booktitle={Proceedings of the 23rd international conference on Machine learning},
  pages={1--8},
  year={2006}
}

@article{zhang2020learning,
  title={Learning robust state abstractions for hidden-parameter block mdps},
  author={Zhang, Amy and Sodhani, Shagun and Khetarpal, Khimya and Pineau, Joelle},
  journal={arXiv preprint arXiv:2007.07206},
  year={2020}
}

@article{killian2017robust,
  title={Robust and efficient transfer learning with hidden parameter markov decision processes},
  author={Killian, Taylor W and Daulton, Samuel and Konidaris, George and Doshi-Velez, Finale},
  journal={Advances in neural information processing systems},
  volume={30},
  year={2017}
}

@inproceedings{costen2023planning,
  title={Planning with hidden parameter polynomial MDPs},
  author={Costen, Clarissa and Rigter, Marc and Lacerda, Bruno and Hawes, Nick},
  booktitle={Proceedings of the AAAI Conference on Artificial Intelligence},
  volume={37},
  number={10},
  pages={11963--11971},
  year={2023}
}

@article{konidaris2012robot,
  title={Robot learning from demonstration by constructing skill trees},
  author={Konidaris, George and Kuindersma, Scott and Grupen, Roderic and Barto, Andrew},
  journal={The International Journal of Robotics Research},
  volume={31},
  number={3},
  pages={360--375},
  year={2012},
  publisher={SAGE Publications Sage UK: London, England}
}

@inproceedings{niekum2012learning,
  title={Learning and generalization of complex tasks from unstructured demonstrations},
  author={Niekum, Scott and Osentoski, Sarah and Konidaris, George and Barto, Andrew G},
  booktitle={2012 IEEE/RSJ International Conference on Intelligent Robots and Systems},
  pages={5239--5246},
  year={2012},
  organization={IEEE}
}

@article{steimle2021multi,
  title={Multi-model Markov decision processes},
  author={Steimle, Lauren N and Kaufman, David L and Denton, Brian T},
  journal={IISE Transactions},
  volume={53},
  number={10},
  pages={1124--1139},
  year={2021},
  publisher={Taylor \& Francis}
}

@article{kulic2009online,
  title={Online segmentation and clustering from continuous observation of whole body motions},
  author={Kulic, Dana and Takano, Wataru and Nakamura, Yoshihiko},
  journal={IEEE Transactions on Robotics},
  volume={25},
  number={5},
  pages={1158--1166},
  year={2009},
  publisher={IEEE}
}

@article{chiappa2010movement,
  title={Movement extraction by detecting dynamics switches and repetitions},
  author={Chiappa, Silvia and Peters, Jan},
  journal={Advances in neural information processing systems},
  volume={23},
  year={2010}
}

@inproceedings{kim2022physics,
  title={Physics embedded neural network vehicle model and applications in risk-aware autonomous driving using latent features},
  author={Kim, Taekyung and Lee, Hojin and Lee, Wonsuk},
  booktitle={2022 IEEE/RSJ International Conference on Intelligent Robots and Systems (IROS)},
  pages={4182--4189},
  year={2022},
  organization={IEEE}
}

@article{curtiss1942note,
  title={A note on the theory of moment generating functions},
  author={Curtiss, John H},
  journal={The Annals of Mathematical Statistics},
  volume={13},
  number={4},
  pages={430--433},
  year={1942},
  publisher={JSTOR}
}

@article{leininger2024gaussian,
  title={Gaussian Process-based Traversability Analysis for Terrain Mapless Navigation},
  author={Leininger, Abe and Ali, Mahmoud and Jardali, Hassan and Liu, Lantao},
  journal={arXiv preprint arXiv:2403.19010},
  year={2024}
}

@article{zhao2024physord,
  title={PhysORD: A Neuro-Symbolic Approach for Physics-infused Motion Prediction in Off-road Driving},
  author={Zhao, Zhipeng and Li, Bowen and Du, Yi and Fu, Taimeng and Wang, Chen},
  journal={arXiv preprint arXiv:2404.01596},
  year={2024}
}

@inproceedings{doshi2016hidden,
  title={Hidden parameter markov decision processes: A semiparametric regression approach for discovering latent task parametrizations},
  author={Doshi-Velez, Finale and Konidaris, George},
  booktitle={IJCAI: proceedings of the conference},
  volume={2016},
  pages={1432},
  year={2016},
  organization={NIH Public Access}
}

@misc{inspection_world,
  author = {Clearpath},
  title = {Clearpath Additional Simulation Worlds},
  year = {2020},
  howpublished = {\url{https://github.com/clearpathrobotics/cpr_gazebo}},
}

@article{havens2019learning,
  title={Learning latent state spaces for planning through reward prediction},
  author={Havens, Aaron and Ouyang, Yi and Nagarajan, Prabhat and Fujita, Yasuhiro},
  journal={arXiv preprint arXiv:1912.04201},
  year={2019}
}

@article{sharma2019dynamics,
  title={Dynamics-aware unsupervised discovery of skills},
  author={Sharma, Archit and Gu, Shixiang and Levine, Sergey and Kumar, Vikash and Hausman, Karol},
  journal={arXiv preprint arXiv:1907.01657},
  year={2019}
}

@article{sharma2020emergent,
  title={Emergent real-world robotic skills via unsupervised off-policy reinforcement learning},
  author={Sharma, Archit and Ahn, Michael and Levine, Sergey and Kumar, Vikash and Hausman, Karol and Gu, Shixiang},
  journal={arXiv preprint arXiv:2004.12974},
  year={2020}
}

@inproceedings{lee2020context,
  title={Context-aware dynamics model for generalization in model-based reinforcement learning},
  author={Lee, Kimin and Seo, Younggyo and Lee, Seunghyun and Lee, Honglak and Shin, Jinwoo},
  booktitle={International Conference on Machine Learning},
  pages={5757--5766},
  year={2020},
  organization={PMLR}
}

@article{yu2024adaptive,
  title={Adaptive Diffusion Terrain Generator for Autonomous Uneven Terrain Navigation},
  author={Yu, Youwei and Xu, Junhong and Liu, Lantao},
  journal={Conference on Robot Learning (CoRL)},
  year={2024}
}

@article{siva2024self,
  title={Self-reflective terrain-aware robot adaptation for consistent off-road ground navigation},
  author={Siva, Sriram and Wigness, Maggie and Rogers, John G and Quang, Long and Zhang, Hao},
  journal={The International Journal of Robotics Research},
  pages={02783649231225243},
  year={2024},
  publisher={SAGE Publications Sage UK: London, England}
}

@misc{gazebo_terrain,
  author       = {Matthew Verbryke},
  title        = {gazebo\_terrain},
  year         = {2021},
  howpublished = {\url{https://github.com/MatthewVerbryke/gazebo_terrain}},
  note         = {Accessed: 2024-11-20}
}

@article{perlin1985image,
  title={An image synthesizer},
  author={Perlin, Ken},
  journal={ACM Siggraph Computer Graphics},
  volume={19},
  number={3},
  pages={287--296},
  year={1985},
  publisher={ACM New York, NY, USA}
}

@article{mahalanobis1930test,
  title={On test and measures of group divergence: theoretical formulae},
  author={Mahalanobis, Prasanta Chandra},
  year={1930},
  publisher={Journal and Proceedings of Asiatic Society of Bengal New series26, 541-588}
}

@article{ly2012learning,
  title={Learning symbolic representations of hybrid dynamical systems},
  author={Ly, Daniel L and Lipson, Hod},
  journal={The Journal of Machine Learning Research},
  volume={13},
  number={1},
  pages={3585--3618},
  year={2012},
  publisher={JMLR. org}
}

@article{gandhi2021robust,
  title={Robust model predictive path integral control: Analysis and performance guarantees},
  author={Gandhi, Manan S and Vlahov, Bogdan and Gibson, Jason and Williams, Grady and Theodorou, Evangelos A},
  journal={IEEE Robotics and Automation Letters},
  volume={6},
  number={2},
  pages={1423--1430},
  year={2021},
  publisher={IEEE}
}

@Article{Pineda2021MBRL,
  author  = {Luis Pineda and Brandon Amos and Amy Zhang and Nathan O. Lambert and Roberto Calandra},
  journal = {Arxiv},
  title   = {MBRL-Lib: A Modular Library for Model-based Reinforcement Learning},
  year    = {2021},
  url     = {https://arxiv.org/abs/2104.10159},
}

@article{adams2007bayesian,
  title={Bayesian online changepoint detection},
  author={Adams, Ryan Prescott and MacKay, David JC},
  journal={arXiv preprint arXiv:0710.3742},
  year={2007}
}

@article{stachowicz2024racer,
  title={RACER: Epistemic Risk-Sensitive RL Enables Fast Driving with Fewer Crashes},
  author={Stachowicz, Kyle and Levine, Sergey},
  journal={arXiv preprint arXiv:2405.04714},
  year={2024}
}

@inproceedings{fujimoto2018addressing,
  title={Addressing function approximation error in actor-critic methods},
  author={Fujimoto, Scott and Hoof, Herke and Meger, David},
  booktitle={International conference on machine learning},
  pages={1587--1596},
  year={2018},
  organization={PMLR}
}

@inproceedings{haarnoja2018soft,
  title={Soft actor-critic: Off-policy maximum entropy deep reinforcement learning with a stochastic actor},
  author={Haarnoja, Tuomas and Zhou, Aurick and Abbeel, Pieter and Levine, Sergey},
  booktitle={International conference on machine learning},
  pages={1861--1870},
  year={2018},
  organization={PMLR}
}

@article{schulman2017proximal,
  title={Proximal policy optimization algorithms},
  author={Schulman, John and Wolski, Filip and Dhariwal, Prafulla and Radford, Alec and Klimov, Oleg},
  journal={arXiv preprint arXiv:1707.06347},
  year={2017}
}

@article{fearnhead2019changepoint,
  title={Changepoint detection in the presence of outliers},
  author={Fearnhead, Paul and Rigaill, Guillem},
  journal={Journal of the American Statistical Association},
  volume={114},
  number={525},
  pages={169--183},
  year={2019},
  publisher={Taylor \& Francis}
}

@article{murphy2007conjugateforms,
    author = {Murphy, Kevin P} ,
    title = {Conjugate Bayesian analysis of the Gaussian distribution},
    url = {https://www.cs.ubc.ca/~murphyk/Papers/bayesGauss.pdf},
    year = {2007}
}

@book{degroot1970optimal,
    author = {DeGroot, Morris H},
    title = {Optimal Statistical Decisions},
    publisher = {McGraw-Hill},
    year = {1970}
}

@article{chua2018deep,
  title={Deep reinforcement learning in a handful of trials using probabilistic dynamics models},
  author={Chua, Kurtland and Calandra, Roberto and McAllister, Rowan and Levine, Sergey},
  journal={Advances in neural information processing systems},
  volume={31},
  year={2018}
}

@article{hansen2023td,
	title={TD-MPC2: Scalable, Robust World Models for Continuous Control}, 
	author={Nicklas Hansen and Hao Su and Xiaolong Wang},
	booktitle={International Conference on Learning Representations (ICLR)},
	year={2024}
}

@inproceedings{meyer2011using,
  title={Using artificial landmarks to reduce the ambiguity in the environment of a mobile robot},
  author={Meyer-Delius, Daniel and Beinhofer, Maximilian and Kleiner, Alexander and Burgard, Wolfram},
  booktitle={2011 IEEE International Conference on Robotics and Automation},
  pages={5173--5178},
  year={2011},
  organization={IEEE}
}

@article{pramanick2022talk,
  title={Talk-to-Resolve: Combining scene understanding and spatial dialogue to resolve granular task ambiguity for a collocated robot},
  author={Pramanick, Pradip and Sarkar, Chayan and Banerjee, Snehasis and Bhowmick, Brojeshwar},
  journal={Robotics and Autonomous Systems},
  volume={155},
  pages={104183},
  year={2022},
  publisher={Elsevier}
}

@article{celemin2023knowledge,
  title={Knowledge-and ambiguity-aware robot learning from corrective and evaluative feedback},
  author={Celemin, Carlos and Kober, Jens},
  journal={Neural Computing and Applications},
  volume={35},
  number={23},
  pages={16821--16839},
  year={2023},
  publisher={Springer}
}

@article{mccallum1996hidden,
  title={Hidden state and reinforcement learning with instance-based state identification},
  author={McCallum, R Andrew},
  journal={IEEE Transactions on Systems, Man, and Cybernetics, Part B (Cybernetics)},
  volume={26},
  number={3},
  pages={464--473},
  year={1996},
  publisher={IEEE}
}

@inproceedings{kroemer2014learning,
  title={Learning to predict phases of manipulation tasks as hidden states},
  author={Kroemer, Oliver and Van Hoof, Herke and Neumann, Gerhard and Peters, Jan},
  booktitle={2014 IEEE International Conference on Robotics and Automation (ICRA)},
  pages={4009--4014},
  year={2014},
  organization={IEEE}
}

@inproceedings{ferguson2004pao,
  title={PAO for planning with hidden state},
  author={Ferguson, Dave and Stentz, Anthony and Thrun, Sebastian},
  booktitle={IEEE International Conference on Robotics and Automation, 2004. Proceedings. ICRA'04. 2004},
  volume={3},
  pages={2840--2847},
  year={2004},
  organization={IEEE}
}

@article{kobayashi2022latent,
  title={Latent representation in human--robot interaction with explicit consideration of periodic dynamics},
  author={Kobayashi, Taisuke and Murata, Shingo and Inamura, Tetsunari},
  journal={IEEE Transactions on Human-Machine Systems},
  volume={52},
  number={5},
  pages={928--940},
  year={2022},
  publisher={IEEE}
}

@article{langdon2019uncovering,
  title={Uncovering the ‘state’: Tracing the hidden state representations that structure learning and decision-making},
  author={Langdon, Angela J and Song, Mingyu and Niv, Yael},
  journal={Behavioural processes},
  volume={167},
  pages={103891},
  year={2019},
  publisher={Elsevier}
}

@inproceedings{hundt2006representing,
  title={Representing Systems with Hidden State.},
  author={Hundt, Christopher and Panangaden, Prakash and Pineau, Joelle and Precup, Doina},
  booktitle={PROCEEDINGS OF THE NATIONAL CONFERENCE ON ARTIFICIAL INTELLIGENCE},
  volume={21},
  number={1},
  pages={368},
  year={2006},
  organization={Menlo Park, CA; Cambridge, MA; London; AAAI Press; MIT Press; 1999}
}

@inproceedings{integrated_gradients,
  title={Axiomatic attribution for deep networks},
  author={Sundararajan, Mukund and Taly, Ankur and Yan, Qiqi},
  booktitle={International conference on machine learning},
  pages={3319--3328},
  year={2017},
  organization={PMLR}
}

@article{hafner2019dream,
  title={Dream to control: Learning behaviors by latent imagination},
  author={Hafner, Danijar and Lillicrap, Timothy and Ba, Jimmy and Norouzi, Mohammad},
  journal={arXiv preprint arXiv:1912.01603},
  year={2019}
}

@article{lancaster2023modem,
  title={MoDem-V2: Visuo-Motor World Models for Real-World Robot Manipulation},
  author={Lancaster, Patrick and Hansen, Nicklas and Rajeswaran, Aravind and Kumar, Vikash},
  journal={arXiv preprint arXiv:2309.14236},
  year={2023}
}

@article{ha2018world,
  title={World models},
  author={Ha, David and Schmidhuber, J{\"u}rgen},
  journal={arXiv preprint arXiv:1803.10122},
  year={2018}
}

@inproceedings{siva2021enhancing,
  title={Enhancing consistent ground maneuverability by robot adaptation to complex off-road terrains},
  author={Siva, Sriram and Wigness, Maggie and Rogers, John and Zhang, Hao},
  booktitle={Conference on Robot Learning},
  year={2021}
}

@article{yoo2024traversability,
  title={Traversability-aware Adaptive Optimization for Path Planning and Control in Mountainous Terrain},
  author={Yoo, Se-Wook and Son, E-In and Seo, Seung-Woo},
  journal={IEEE Robotics and Automation Letters},
  year={2024},
  publisher={IEEE}
}

@inproceedings{sathyamoorthy2022terrapn,
  title={Terrapn: Unstructured terrain navigation using online self-supervised learning},
  author={Sathyamoorthy, Adarsh Jagan and Weerakoon, Kasun and Guan, Tianrui and Liang, Jing and Manocha, Dinesh},
  booktitle={2022 IEEE/RSJ International Conference on Intelligent Robots and Systems (IROS)},
  pages={7197--7204},
  year={2022},
  organization={IEEE}
}

@inproceedings{siva2019robot,
  title={Robot adaptation to unstructured terrains by joint representation and apprenticeship learning},
  author={Siva, Sriram and Wigness, Maggie and Rogers, John and Zhang, Hao},
  booktitle={Robotics: science and systems},
  year={2019}
}

@inproceedings{cai2023probabilistic,
  title={Probabilistic traversability model for risk-aware motion planning in off-road environments},
  author={Cai, Xiaoyi and Everett, Michael and Sharma, Lakshay and Osteen, Philip R and How, Jonathan P},
  booktitle={2023 IEEE/RSJ International Conference on Intelligent Robots and Systems (IROS)},
  pages={11297--11304},
  year={2023},
  organization={IEEE}
}

@inproceedings{cai2022risk,
  title={Risk-aware off-road navigation via a learned speed distribution map},
  author={Cai, Xiaoyi and Everett, Michael and Fink, Jonathan and How, Jonathan P},
  booktitle={2022 IEEE/RSJ International Conference on Intelligent Robots and Systems (IROS)},
  pages={2931--2937},
  year={2022},
  organization={IEEE}
}

@inproceedings{sharma2023ramp,
  title={RAMP: A Risk-Aware Mapping and Planning Pipeline for Fast Off-Road Ground Robot Navigation},
  author={Sharma, Lakshay and Everett, Michael and Lee, Donggun and Cai, Xiaoyi and Osteen, Philip and How, Jonathan P},
  booktitle={2023 IEEE International Conference on Robotics and Automation (ICRA)},
  pages={5730--5736},
  year={2023},
  organization={IEEE}
}

@article{kingma2014adam,
  title={Adam: A method for stochastic optimization},
  author={Kingma, Diederik P and Ba, Jimmy},
  journal={arXiv preprint arXiv:1412.6980},
  year={2014}
}

@inproceedings{wang2021rough,
  title={Rough terrain navigation using divergence constrained model-based reinforcement learning},
  author={Wang, Sean J and Triest, Samuel and Wang, Wenshan and Scherer, Sebastian and Johnson, Aaron},
  booktitle={5th Annual Conference on Robot Learning},
  year={2021}
}

@inproceedings{Koenig-2004-394,
  author = "Nathan Koenig and Andrew Howard",
  title = "Design and Use Paradigms for Gazebo, An Open-Source Multi-Robot Simulator",
  booktitle = "IEEE/RSJ International Conference on Intelligent Robots and Systems",
  pages = "2149-2154",
  address = "Sendai, Japan",
  month = "Sep",
  year = "2004",
}

@incollection{NEURIPS2019_9015,
title = {PyTorch: An Imperative Style, High-Performance Deep Learning Library},
author = {Paszke, Adam and Gross, Sam and Massa, Francisco and Lerer, Adam and Bradbury, James and Chanan, Gregory and Killeen, Trevor and Lin, Zeming and Gimelshein, Natalia and Antiga, Luca and Desmaison, Alban and Kopf, Andreas and Yang, Edward and DeVito, Zachary and Raison, Martin and Tejani, Alykhan and Chilamkurthy, Sasank and Steiner, Benoit and Fang, Lu and Bai, Junjie and Chintala, Soumith},
booktitle = {Advances in Neural Information Processing Systems 32},
editor = {H. Wallach and H. Larochelle and A. Beygelzimer and F. d\textquotesingle Alch\'{e}-Buc and E. Fox and R. Garnett},
pages = {8024--8035},
year = {2019},
publisher = {Curran Associates, Inc.},
url = {https://proceedings.neurips.cc/paper/2019/hash/bdbca288fee7f92f2bfa9f7012727740-Abstract.html}
}

@inproceedings{Quigley09,
author={Morgan Quigley and Brian Gerkey and Ken Conley and Josh Faust and
Tully Foote and Jeremy Leibs and Eric Berger and Rob Wheeler and Andrew Ng},
title={ROS: an open-source Robot Operating System},
booktitle={Proc. of the IEEE Intl. Conf. on Robotics and Automation (ICRA)
Workshop on Open Source Robotics},
month = may,
year=2009,
address={Kobe, Japan}
}

@ARTICLE{9718203,  
author={Bai, Chunge and Xiao, Tao and Chen, Yajie and Wang, Haoqian and Zhang, Fang and Gao, Xiang},  
journal={IEEE Robotics and Automation Letters},   
title={Faster-LIO: Lightweight Tightly Coupled Lidar-Inertial Odometry Using Parallel Sparse Incremental Voxels},   
year={2022},  
volume={7},  
number={2},  
pages={4861-4868},  
doi={10.1109/LRA.2022.3152830}}

@inproceedings{yu2023fast,
  title={Fast Extrinsic Calibration for Multiple Inertial Measurement Units in Visual-Inertial System},
  author={Yu, Youwei and Liu, Yanqing and Fu, Fengjie and He, Sihan and Zhu, Dongchen and Wang, Lei and Zhang, Xiaolin and Li, Jiamao},
  booktitle={2023 IEEE International Conference on Robotics and Automation (ICRA)},
  pages={01--07},
  year={2023},
  organization={IEEE}
}

@article{du2011robot,
  title={Robot motion planning in dynamic, uncertain environments},
  author={Du Toit, Noel E and Burdick, Joel W},
  journal={IEEE Transactions on Robotics},
  volume={28},
  number={1},
  pages={101--115},
  year={2011},
  publisher={IEEE}
}

@inproceedings{dahn2018situation,
  title={Situation awareness for autonomous agents},
  author={Dahn, Nikolas and Fuchs, Stefan and Gross, Horst-Michael},
  booktitle={2018 27th IEEE international symposium on robot and human interactive communication (RO-MAN)},
  pages={666--671},
  year={2018},
  organization={IEEE}
}

@article{langari2005intelligent,
  title={Intelligent energy management agent for a parallel hybrid vehicle-part I: system architecture and design of the driving situation identification process},
  author={Langari, Reza and Won, Jong-Seob},
  journal={IEEE transactions on vehicular technology},
  volume={54},
  number={3},
  pages={925--934},
  year={2005},
  publisher={IEEE}
}

@article{castro2022does,
  title={How does it feel? self-supervised costmap learning for off-road vehicle traversability},
  author={Castro, Mateo Guaman and Triest, Samuel and Wang, Wenshan and Gregory, Jason M and Sanchez, Felix and Rogers III, John G and Scherer, Sebastian},
  journal={arXiv preprint arXiv:2209.10788},
  year={2022}
}

@article{pokhrel2024cahsor,
  title={CAHSOR: Competence-Aware High-Speed Off-Road Ground Navigation in SE(3)},
  author={Pokhrel, Anuj and Nazeri, Mohammad and Datar, Aniket and Xiao, Xuesu},
  journal={IEEE Robotics and Automation Letters},
  year={2024},
  publisher={IEEE}
}

@article{ye2012situation,
  title={Situation identification techniques in pervasive computing: A review},
  author={Ye, Juan and Dobson, Simon and McKeever, Susan},
  journal={Pervasive and mobile computing},
  volume={8},
  number={1},
  pages={36--66},
  year={2012},
  publisher={Elsevier}
}

@article{dubins-dyns,
 ISSN = {00029327, 10806377},
 URL = {http://www.jstor.org/stable/2372560},
 author = {L. E. Dubins},
 journal = {American Journal of Mathematics},
 number = {3},
 pages = {497--516},
 publisher = {Johns Hopkins University Press},
 title = {On Curves of Minimal Length with a Constraint on Average Curvature, and with Prescribed Initial and Terminal Positions and Tangents},
 urldate = {2024-06-05},
 volume = {79},
 year = {1957}
}

@article{kurniawati2012global,
  title={Global motion planning under uncertain motion, sensing, and environment map},
  author={Kurniawati, Hanna and Bandyopadhyay, Tirthankar and Patrikalakis, Nicholas M},
  journal={Autonomous Robots},
  volume={33},
  pages={255--272},
  year={2012},
  publisher={Springer}
}

@inproceedings{green2020planning,
  title={Planning for the unexpected: Explicitly optimizing motions for ground uncertainty in running},
  author={Green, Kevin and Hatton, Ross L and Hurst, Jonathan},
  booktitle={2020 IEEE International Conference on Robotics and Automation (ICRA)},
  pages={1445--1451},
  year={2020},
  organization={IEEE}
}

@article{yin2023reliable,
  title={Reliable global path planning of off-road autonomous ground vehicles under uncertain terrain conditions},
  author={Yin, Jianhua and Li, Lingxi and Mourelatos, Zissimos P and Liu, Yixuan and Gorsich, David and Singh, Amandeep and Tau, Seth and Hu, Zhen},
  journal={IEEE Transactions on Intelligent Vehicles},
  year={2023},
  publisher={IEEE}
}

@article{xu2024boundary,
  title={Boundary-aware value function generation for safe stochastic motion planning},
  author={Xu, Junhong and Yin, Kai and Gregory, Jason M and Hauser, Kris and Liu, Lantao},
  journal={The International Journal of Robotics Research},
  pages={02783649241238766},
  year={2024},
  publisher={SAGE Publications Sage UK: London, England}
}

@book{sutton2018reinforcement,
  title={Reinforcement learning: An introduction},
  author={Sutton, Richard S and Barto, Andrew G},
  year={2018},
  publisher={MIT press}
}

@book{forbes2011statistical,
  title={Statistical distributions},
  author={Forbes, Catherine and Evans, Merran and Hastings, Nicholas and Peacock, Brian},
  year={2011},
  publisher={John Wiley \& Sons}
}

@article{xu2024kernel,
  title={Kernel-based diffusion approximated Markov decision processes for autonomous navigation and control on unstructured terrains},
  author={Xu, Junhong and Yin, Kai and Chen, Zheng and Gregory, Jason M and Stump, Ethan A and Liu, Lantao},
  journal={The International Journal of Robotics Research},
  pages={02783649231225977},
  year={2024},
  publisher={SAGE Publications Sage UK: London, England}
}

@article{jardali2024autonomous,
  title={Autonomous Mapless Navigation on Uneven Terrains},
  author={Jardali, Hassan and Ali, Mahmoud and Liu, Lantao},
  journal={arXiv preprint arXiv:2402.13443},
  year={2024}
}

@article{gopalan2020simultaneously,
  title={Simultaneously learning transferable symbols and language groundings from perceptual data for instruction following},
  author={Gopalan, Nakul and Rosen, Eric and Konidaris, GD and Tellex, Stefanie},
  journal={Robotics: Science and Systems XVI},
  year={2020}
}

@article{belta2007symbolic,
  title={Symbolic planning and control of robot motion [grand challenges of robotics]},
  author={Belta, Calin and Bicchi, Antonio and Egerstedt, Magnus and Frazzoli, Emilio and Klavins, Eric and Pappas, George J},
  journal={IEEE Robotics \& Automation Magazine},
  volume={14},
  number={1},
  pages={61--70},
  year={2007},
  publisher={IEEE}
}

@inproceedings{achterhold2023learning,
  title={Learning temporally extended skills in continuous domains as symbolic actions for planning},
  author={Achterhold, Jan and Krimmel, Markus and Stueckler, Joerg},
  booktitle={Conference on Robot Learning},
  pages={225--236},
  year={2023},
  organization={PMLR}
}

@article{silver2022learning,
  title={Learning neuro-symbolic skills for bilevel planning},
  author={Silver, Tom and Athalye, Ashay and Tenenbaum, Joshua B and Lozano-P{\'e}rez, Tom{\'a}s and Kaelbling, Leslie Pack},
  journal={arXiv preprint arXiv:2206.10680},
  year={2022}
}

@inproceedings{kumar2023learning,
  title={Learning efficient abstract planning models that choose what to predict},
  author={Kumar, Nishanth and McClinton, Willie and Chitnis, Rohan and Silver, Tom and Lozano-P{\'e}rez, Tom{\'a}s and Kaelbling, Leslie Pack},
  booktitle={Conference on Robot Learning},
  pages={2070--2095},
  year={2023},
  organization={PMLR}
}

@article{kumar2021rma,
  title={Rma: Rapid motor adaptation for legged robots},
  author={Kumar, Ashish and Fu, Zipeng and Pathak, Deepak and Malik, Jitendra},
  journal={arXiv preprint arXiv:2107.04034},
  year={2021}
}

@inproceedings{qi2023hand,
  title={In-hand object rotation via rapid motor adaptation},
  author={Qi, Haozhi and Kumar, Ashish and Calandra, Roberto and Ma, Yi and Malik, Jitendra},
  booktitle={Conference on Robot Learning},
  pages={1722--1732},
  year={2023},
  organization={PMLR}
}

@inproceedings{kumar2022adapting,
  title={Adapting rapid motor adaptation for bipedal robots},
  author={Kumar, Ashish and Li, Zhongyu and Zeng, Jun and Pathak, Deepak and Sreenath, Koushil and Malik, Jitendra},
  booktitle={2022 IEEE/RSJ International Conference on Intelligent Robots and Systems (IROS)},
  pages={1161--1168},
  year={2022},
  organization={IEEE}
}

@inproceedings{liang2024rapid,
  title={Rapid Motor Adaptation for Robotic Manipulator Arms},
  author={Liang, Yichao and Ellis, Kevin and Henriques, Jo{\~a}o},
  booktitle={Proceedings of the IEEE/CVF Conference on Computer Vision and Pattern Recognition},
  pages={16404--16413},
  year={2024}
}

@article{lee2020learning,
  title={Learning quadrupedal locomotion over challenging terrain},
  author={Lee, Joonho and Hwangbo, Jemin and Wellhausen, Lorenz and Koltun, Vladlen and Hutter, Marco},
  journal={Science robotics},
  volume={5},
  number={47},
  pages={eabc5986},
  year={2020},
  publisher={American Association for the Advancement of Science}
}

@inproceedings{rosen2023synthesizing,
  title={Synthesizing Navigation Abstractions for Planning with Portable Manipulation Skills},
  author={Rosen, Eric and James, Steven and Orozco, Sergio and Gupta, Vedant and Merlin, Max and Tellex, Stefanie and Konidaris, George},
  booktitle={Conference on Robot Learning},
  pages={2278--2287},
  year={2023},
  organization={PMLR}
}

@inproceedings{talvitie2014model,
  title={Model Regularization for Stable Sample Rollouts.},
  author={Talvitie, Erik},
  booktitle={UAI},
  pages={780--789},
  year={2014}
}

@inproceedings{talvitie2017self,
  title={Self-correcting models for model-based reinforcement learning},
  author={Talvitie, Erik},
  booktitle={Proceedings of the AAAI conference on artificial intelligence},
  volume={31},
  number={1},
  year={2017}
}

@article{wang2022causal,
  title={Causal dynamics learning for task-independent state abstraction},
  author={Wang, Zizhao and Xiao, Xuesu and Xu, Zifan and Zhu, Yuke and Stone, Peter},
  journal={arXiv preprint arXiv:2206.13452},
  year={2022}
}

\appendix

\clearpage

\section{Latent Factor Effect Characterization}\label{sec:appendix-latent-effect-char}

Here we seek to present a real world example of the motivation for a latent factor representation learning method. Specifically, we replicated the motivating example from Figure \ref{fig:output-discrepancy} with a UGV traversing flat terrain. There we explained that executing the \emph{same action from similar states} can yield \emph{significantly different} outcomes due to the presence of latent, unmodeled factors (e.g., wind, terrain friction, actuator wear). Our method shows that by explicitly representing these latent factors and additionally conditioning the dynamics model on that representation helps reduce the prediction error and improve the control performance in unstructured environments.

We proceed to describe our focused real-world experiment that directly illustrates the phenomenon and motivates the conditional modeling approach: we collected \textbf{250} state transitions from comparable states after applying the same action ($v_x=0.5\ \mathrm{m/s}$, $\omega_z=0.0\ \mathrm{rad/s}$) across \textbf{14} different environments, for a total of $3500$ state transitions. These environments are presented in Figure~\ref{fig:latent_factor_effect_terrains}. The \textbf{Supplementary Video 4 (SV4)} showcases this data collection process: \url{https://youtu.be/xGSFT9QvtJ0}. This video illustrates our best effort at a controlled data collection process for a \textit{single} $(s, a, s')$ transition, where terrain type serves as the latent factor. The goal is to demonstrate that even for this simple case, aggregating all collected data without accounting for the latent factor can significantly affect predictions. Overall, this let us see that conditioning the dynamics model on the latent factor representation yields predictive improvements. While these gains are expected to be even more pronounced in challenging scenarios, this simple setup highlights that the issue already arises in the most basic case.

Figure~\ref{fig:terrain_latent_factor_effect_statistics} visualizes the resulting outcome distributions for this single case, to show where distributions align and where they diverge, seeking to explicitly demonstrate the effect of the latent factors after applying the same action from comparable states\footnote{We cannot say they are the same state as there are natural differences, but we tried to replicate them as closely as possible. Also, as discussed in Section \ref{sec:exp-setup-ugv}, the state is preprocessed to make it position invariant.}. The figures demonstrate that, while some environment pairs produce similar outcomes, others diverge substantially — exactly the behavior that motivates conditioning on an inferred situational representation. In particular, if we were to approximate the dynamics \textbf{for this particular case} using all the collected data (without conditioning on latent factors) as is traditionally done, it would fail to accurately model all dynamics. Even though this might not appear like a critical problem in this isolated motivating example, our overall results have shown that the problem is exacerbated in more complex domains and platforms, such as uneven terrain navigation or flying under evolving wind conditions. 

Figure~\ref{fig:1d_gaussians} also demonstrates this effect. Colored curves correspond to different terrain types, and the dashed black curve denotes the unconditioned (aggregated) estimate, demonstrating how aggregation obscures terrain-specific transition dynamics. Note that we do not claim that every terrain type requires a separate latent-factor representation. Instead, there is a trade-off between the number of modeled latent factors and their predictive benefit. For example, Figure~\ref{fig:terrain_latent_factor_effect_statistics} shows that the distributions for \textit{tiles} and \textit{concrete} are very similar, suggesting that a single representation may be sufficient in this case. Crucially, such discrepancies are often better attributed to an incomplete state representation—e.g., the robot lacks sensors, such as a camera, to detect terrain types—rather than inherent aleatoric errors. Additionally, note that our approach does account for aleatoric uncertainty in the transition model, as done in traditional dynamics learning methods; however, our primary focus is on reducing epistemic uncertainty by identifying and accounting for the latent factors influencing the dynamics.

\begin{figure}
    \centering
    \includegraphics[width=\linewidth]{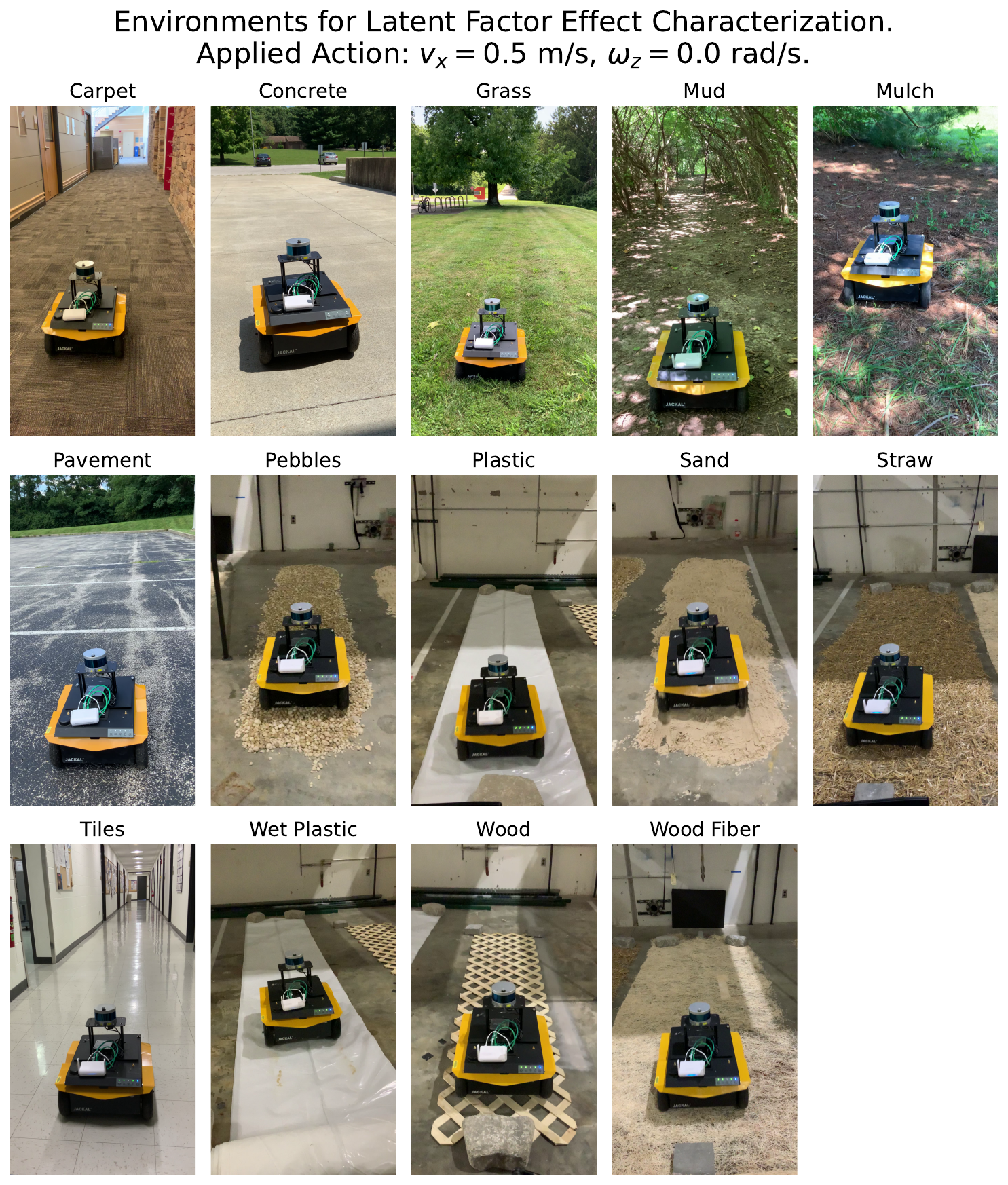}
    \caption{Environments used for data collection to study the influence of latent factors (in this case, terrain type) on UGV dynamics. In each environment, the same action was executed for 50 timesteps starting from a comparable initial state, and the procedure was repeated five times per environment.}
    \label{fig:latent_factor_effect_terrains}
\end{figure}

\begin{figure*}
    \centering
    \includegraphics[width=0.98\linewidth]{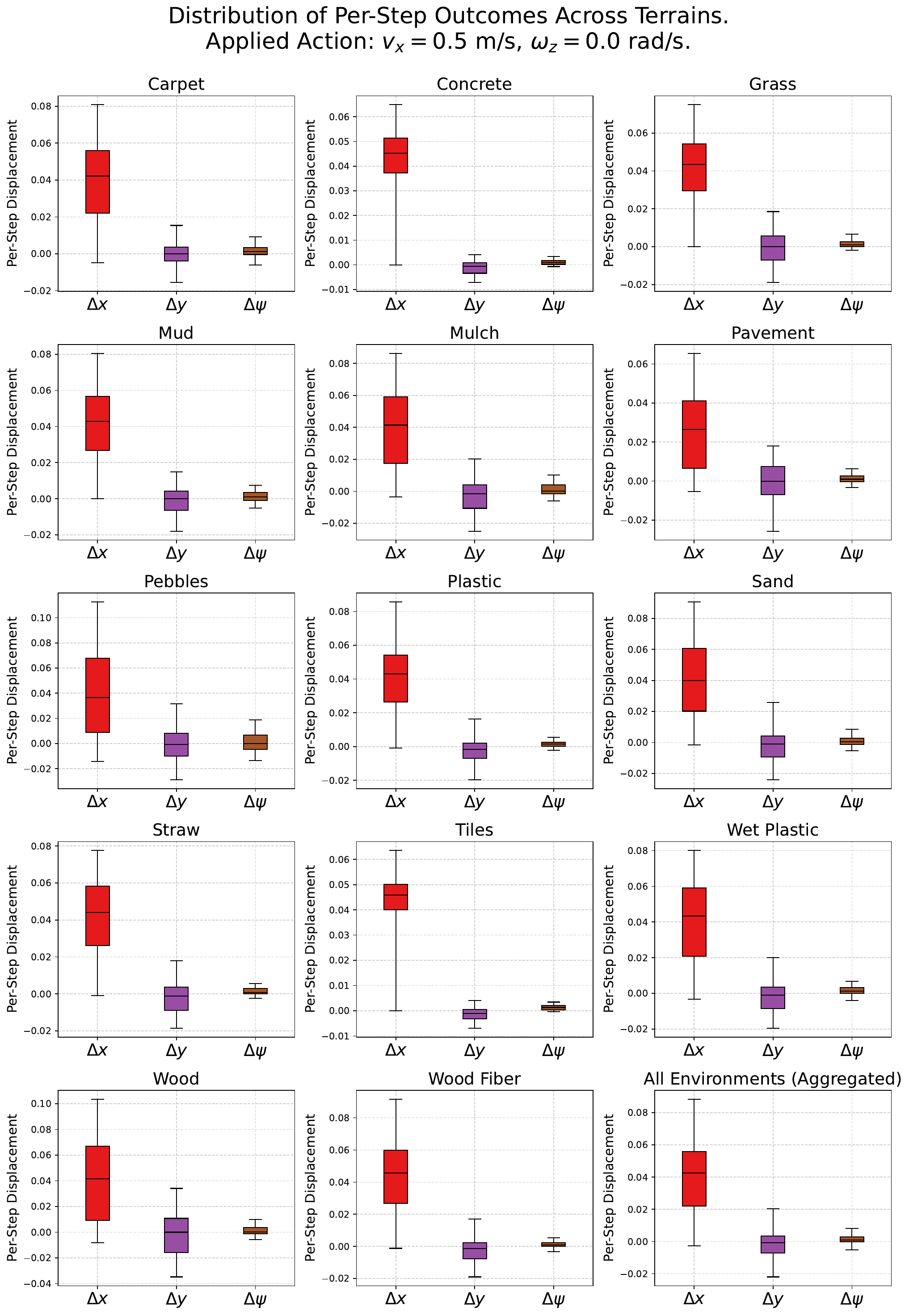}
    \caption{Descriptive statistics of the effect of the latent factor on the dynamics of the UGV. Data obtained from 250 state transitions after applying the same action $v_x=0.5~m/s$ and $\omega_z = 0.0~rad/s$.}
    \label{fig:terrain_latent_factor_effect_statistics}
\end{figure*}

\section{Additional Experimental Results} \label{sec:appendix-sa_results}

\subsection{Lake Simulator} \label{sec:appendix-lake_sim_sa_results}

\begin{figure} 
    \centering
    \includegraphics[width=\linewidth]{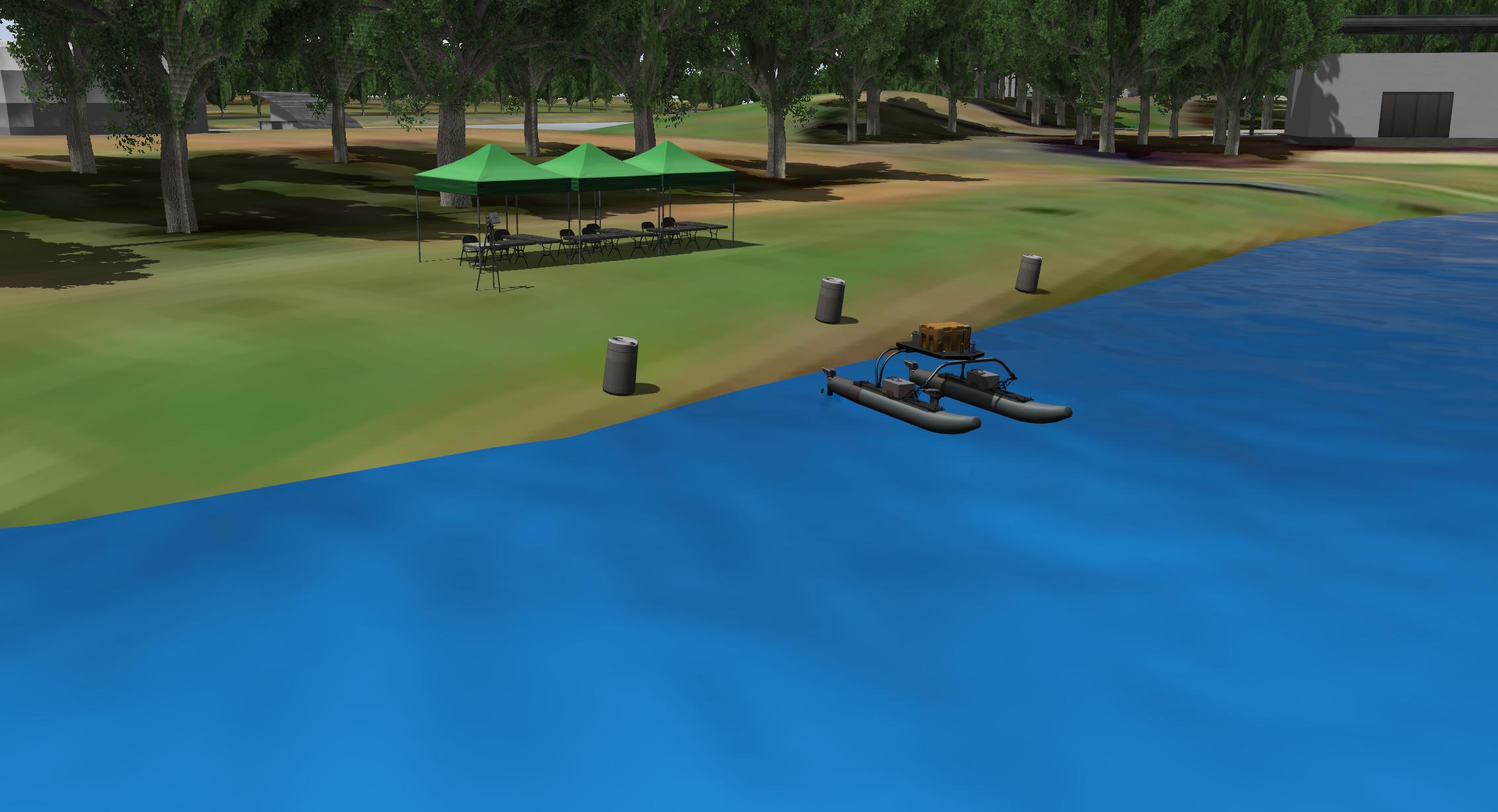}
    \caption{Lake World Simulator, where an ASV must navigate across the lake with intentionally changing wave patterns, relying solely on its proprioceptive odometry measurements for real-time information, without any prior knowledge of the conditions.}
    \label{fig:lake-world}
\end{figure}

Defining and identifying the hidden state in the unstructured terrain navigation task is inherently challenging due to the complexity of natural environments and their dynamic properties. Therefore, to further validate our online hidden state representation learning method, we conduct experiments in a controlled environment where we can directly affect the UDGP of the world. For this, we use the Lake World Simulator \cite[]{bingham19toward}, as shown in Figure \ref{fig:lake-world}. This environment provides a practical testbed for analyzing navigation in aquatic settings, where dynamic and unpredictable conditions are prevalent. Specifically, we simulate a scenario in which an Autonomous Surface Vehicle (ASV) must navigate across the lake while responding to shifting wave patterns about which it does not have any kind of information beyond what it can measure via its proprioceptive odometry measurements. The ASV’s objective is to navigate toward randomly generated targets while managing to do so with the changing wave patterns.

The wave generation patterns in this simulation are governed by parameters that are intentionally randomized during the experiment to induce shifts in the UDGP of the transition dynamics. These shifts represent transitions between different hidden states, such as calm waters, mild ripples, or high waves, which directly impact the ASV’s transition dynamics. Our method needs to obtain hidden state representations that encode this circumstances in real time.
This experiment is particularly meaningful because it reflects real-world challenges faced in aquatic robotics, such as navigating through varying sea states where external conditions like wind, currents, and waves cannot always be directly measured.

Figure \ref{fig:lake-world-changepoints} illustrates the sine and cosine transformations of the boat's roll and pitch angles during its navigation toward different randomly-generated goal locations within the lake. The results reveal that under conditions of significant wave pattern changes, frequent situation jumps are observed, whereas calmer wave conditions result in more stable situational states. Furthermore, the effect of the hyperparameter $\lambda$ during situation segmentation highlights the influence of the prior on changepoint probability. A lower $\lambda$, corresponding to a higher prior probability of changepoints, produces more granular segmentations of the observation space. In contrast, larger values of $\lambda$ demand more substantial changes in the UDGP to register a situation change.

\begin{figure*}[htbp]
    \centering
    \captionsetup{position=below, justification=centering}
    \subcaptionbox{$\mathbf{\lambda} = 20$.}{%
        \includegraphics[width=0.492\textwidth]{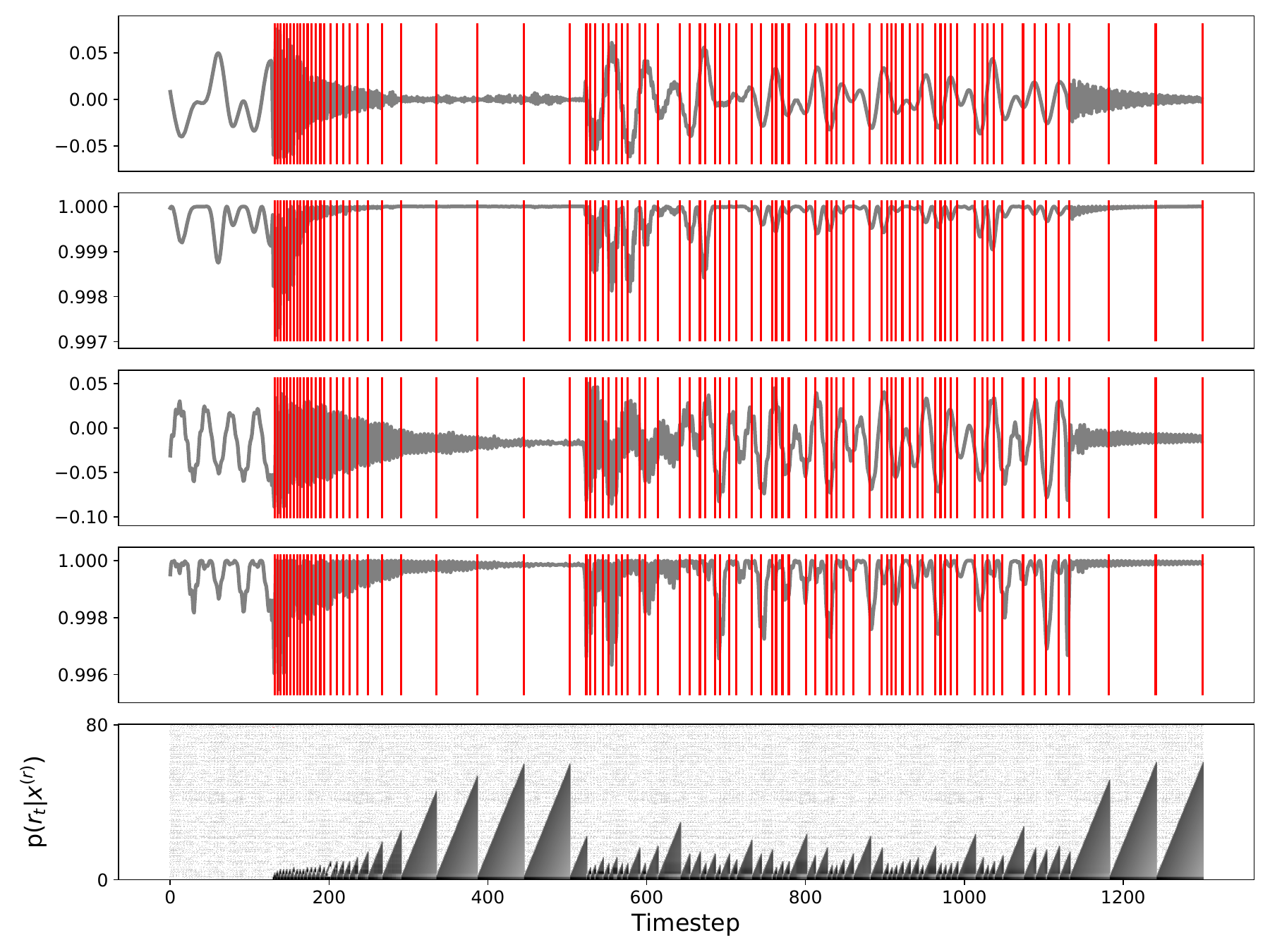}}
    \hfill
    \subcaptionbox{$\mathbf{\lambda} = 250$.}{%
        \includegraphics[width=0.492\textwidth]{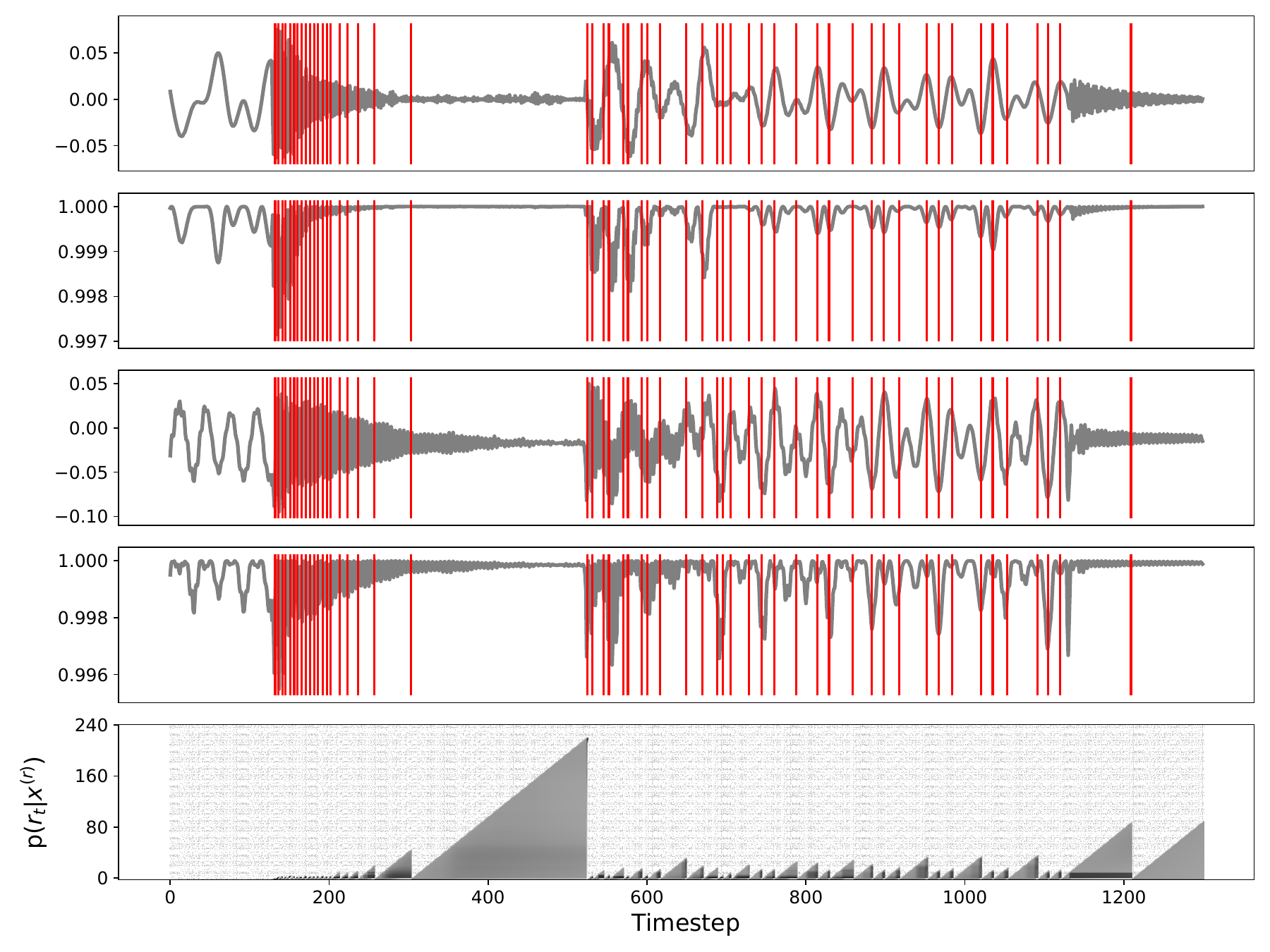}}
    \vfill
    \subcaptionbox{$\mathbf{\lambda} = 2~000$.}{%
        \includegraphics[width=0.492\textwidth]{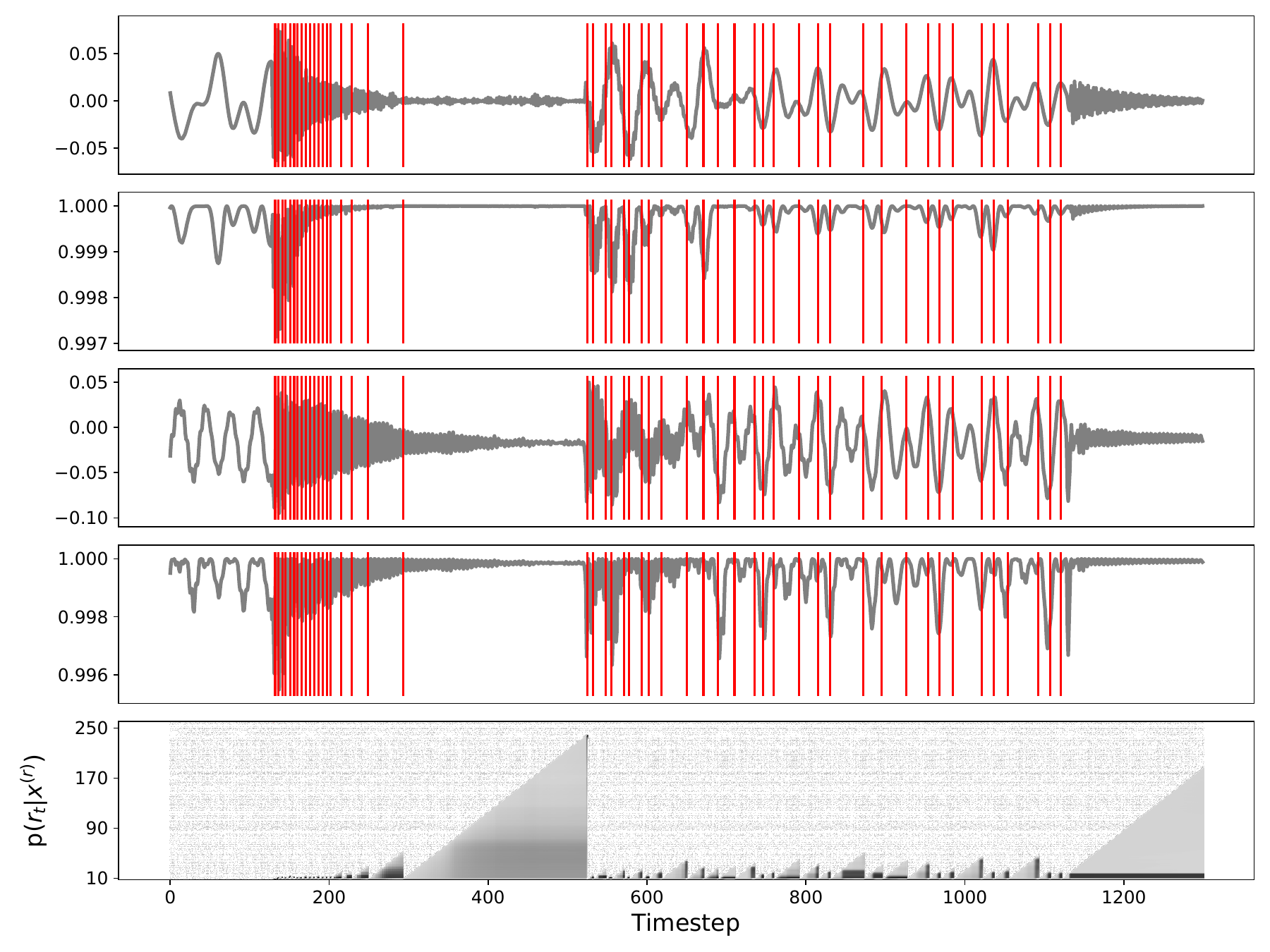}}
    \hfill
    \subcaptionbox{$\mathbf{\lambda} = 1~000~000$.}{%
        \includegraphics[width=0.492\textwidth]{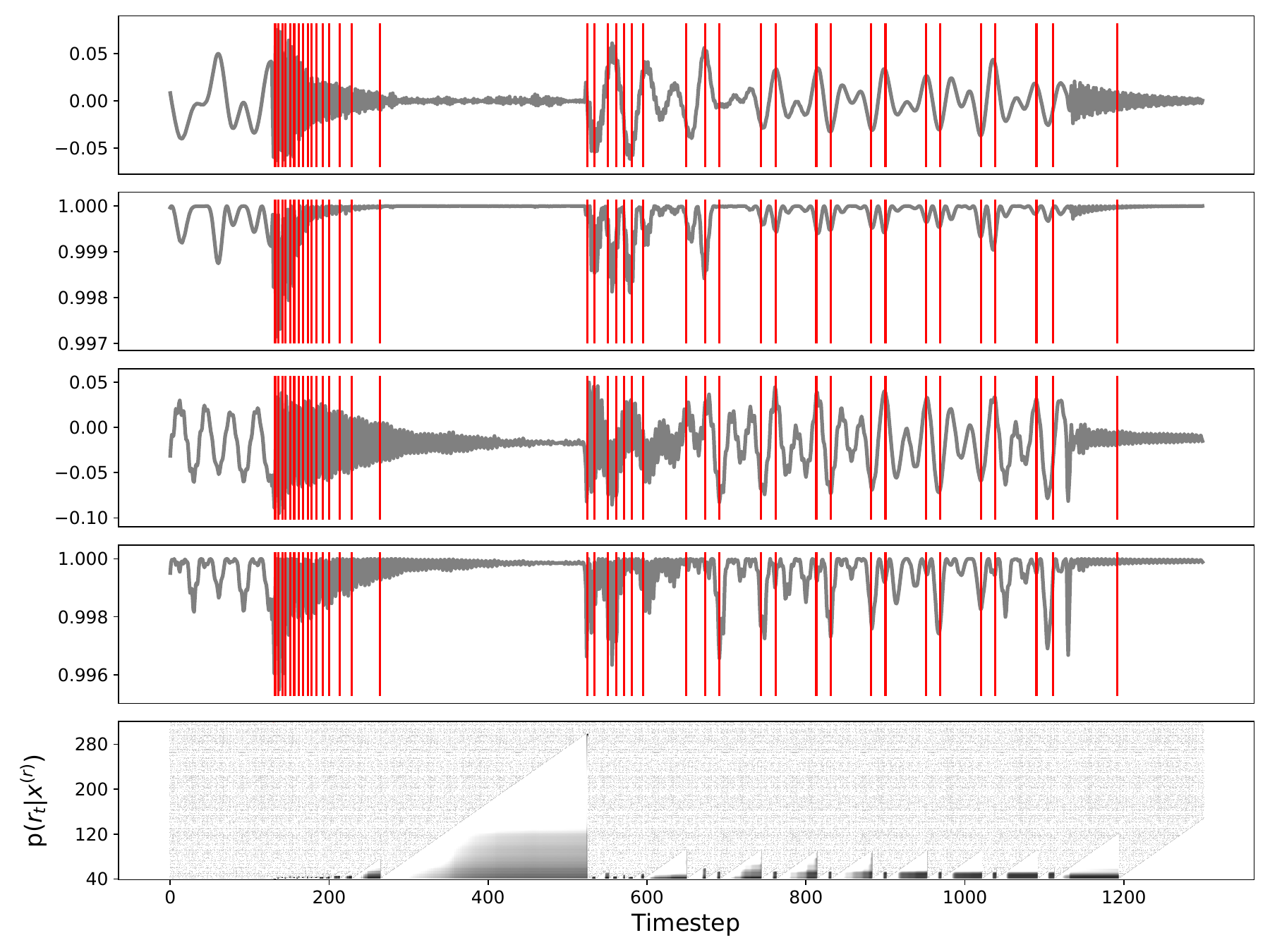}}
    \captionsetup{position=below, justification=justified}
    \caption{Online and unsupervised situation change detection results in the Lake World with randomly changing wave parameters. The red lines signal where our method detected a situation change. The last row of each subplot shows the probability that the situation stays the same. We present the results for different values of $\lambda$ to show our online hidden state representation method's sensitivity to the prior on the changepoint probability.}
    \label{fig:lake-world-changepoints}
\end{figure*}

\subsection{Difficult Terrain Navigation} \label{sec:appendix-inspection_world_sa_results}

Figure \ref{fig:reduced-observation-history} in the main manuscript illustrates changes in detection and modeling results for the UGV. To preserve the manuscript's flow and address space limitations, Figure \ref{fig:reduced-observation-history} presents only the first six dimensions of the 26-dimensional observations used for online situation modeling and identification. Figure \ref{fig:observation-history}, however, presents the full observation history, offering a more detailed view that highlights how effectively our multivariate extension of BOCD detects changes in the UDGP of the robot's dynamics during an unstructured terrain navigation task.

\begin{figure*}
 \centering
    \includegraphics[width=0.8\linewidth]{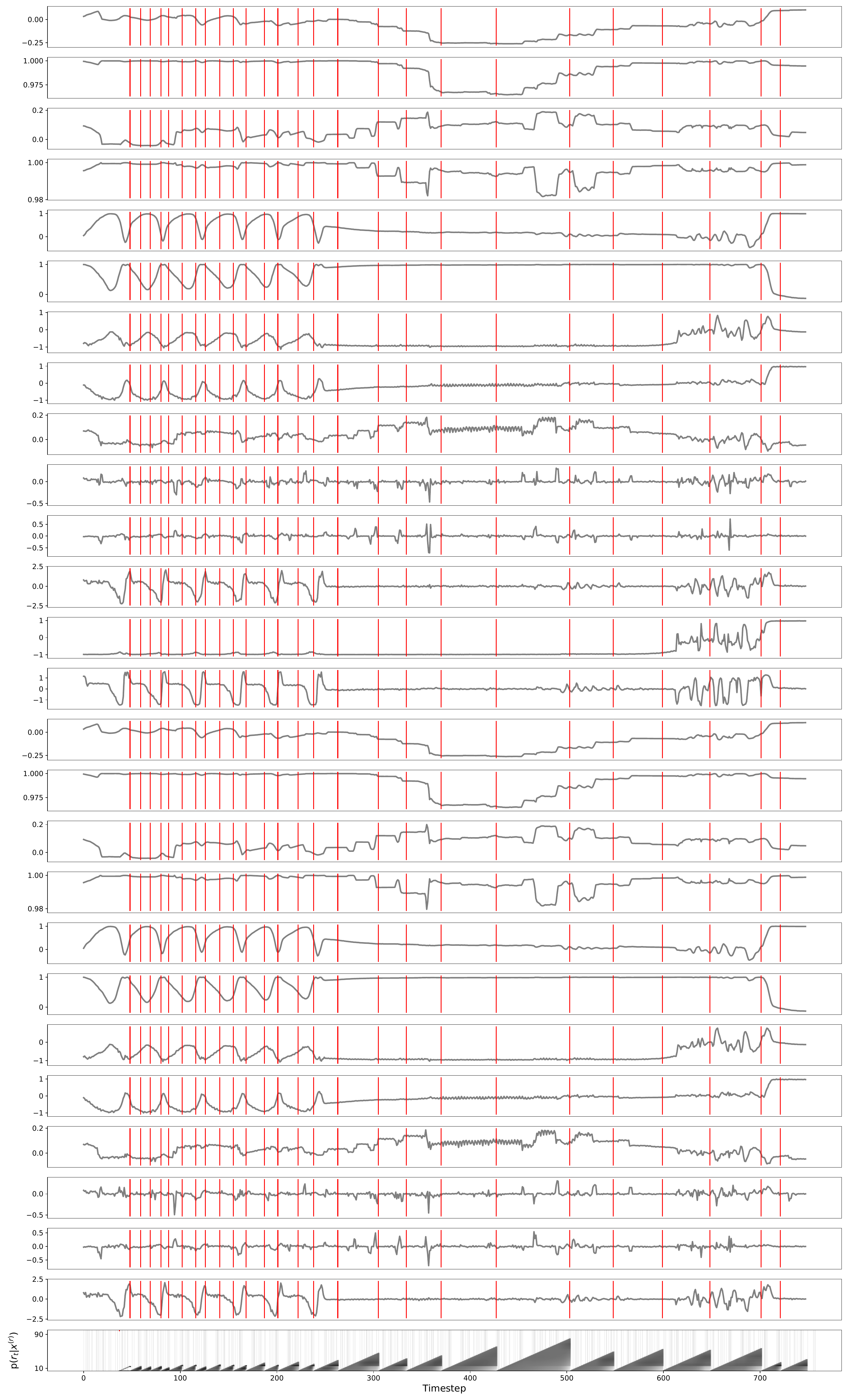}
    \captionof{figure}{Observations received by the proposed situation identification algorithm during a mission in the Inspection World. The red vertical lines mark the time when we predict a situation change happened. The last row shows the run length probabilities indicating how likely it is that the robot stayed in the same situation (run length grows) or a situation change happened (run length becomes zero).}
    \label{fig:observation-history}
\end{figure*}

\end{document}